\tikzstyle{state}=[draw, rounded corners, inner sep=4pt, line width=.25pt]
\title{An Exploration of Left-Corner Transformations}
\newcommand{\ethz}{1}
\newcommand{\cls}{2}
\newcommand{\ucambridge}{3}
\newcommand{\hrefEmail}[2]{\href{mailto:#1}{#2}}
\author{
Andreas Opedal$^{\ethz,\cls,}$\thanks{~~Equal contribution.} 
\quad Eleftheria Tsipidi$^{\ethz,*}$ 
\quad Tiago Pimentel$^{\ethz, \ucambridge}$ \\
\bf Ryan Cotterell$^{\ethz}$  \quad 
Tim Vieira$^{\ethz}$ \\
$^{\ethz}$ETH Z{\"u}rich
\quad $^{\cls}$Max Planck ETH Center for Learning Systems
\quad $^{\ucambridge}$University of Cambridge \\
\texttt{\{\hrefEmail{andreas.opedal@inf.ethz.ch}{andreas.opedal},\hrefEmail{eleftheria.tsipidi@inf.ethz.ch}{eleftheria.tsipidi},\hrefEmail{ryan.cotterell@inf.ethz.ch}{ryan.cotterell}\}@inf.ethz.ch} \\
\texttt{\hrefEmail{tp472@cam.ac.uk}{tp472@cam.ac.uk}} \quad
\texttt{\hrefEmail{tim.f.vieira@gmail.com}{tim.f.vieira@gmail.com}}
}
\begin{document}
\maketitle
\begin{abstract}
The left-corner transformation \citep{rosenkrantz1970deterministic} is used to remove left recursion from context-free grammars, which is an important step towards making the grammar parsable top-down with simple techniques.
This paper generalizes prior left-corner transformations to support semiring-weighted production rules and to provide finer-grained control over which left corners may be moved.
Our generalized left-corner transformation (GLCT) arose from unifying the left-corner transformation and speculation transformation \citep{eisner2007program}, originally for logic programming.  Our new transformation and speculation define equivalent weighted languages. Yet, their derivation trees are structurally different in an important way: GLCT replaces left recursion with right recursion, and speculation does not.
We also provide several technical results regarding the formal relationships between the outputs of GLCT, speculation, and the original grammar.
Lastly, we empirically investigate the efficiency of GLCT for left-recursion elimination from grammars of nine languages.\looseness=-1

\vspace{.7em}
\hspace{.5em}\includegraphics[width=1.25em,height=1.25em]{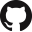}{\hspace{.75em}\parbox{\dimexpr\linewidth-2\fboxsep-2\fboxrule}{\url{https://github.com/rycolab/left-corner}}}

\end{abstract}

\section{Introduction}
\label{sec/intro}

Grammar transformations are functions that map one context-free grammar to another.
The formal language theory literature contains numerous examples of such transformations, including nullary rule removal, rule binarization, and conversion to normal forms, e.g., those of \citet{chomsky1959oncf} and \citet{greibach1965new}.  In this work, we study and generalize the left-corner transformation \citep{rosenkrantz1970deterministic}. Qualitatively, this transformation maps the derivation trees of an original grammar into isomorphic trees in the transformed grammar.  The trees of the transformed grammar will be such that the base subtree of a left-recursive chain (the left corner) is \emph{hoisted} up in a derivation tree while replacing the left-recursive path to the left corner with a right-recursive path to an empty constituent.  \Cref{fig:motivating_example} provides an example.

\begin{figure}[t]
\centering
\small
\resizebox{\columnwidth}{!}{%
\begin{forest}
[,phantom,s sep=.5cm, align=center
  [\start
    [\nt{NP}
        [\nt{PossP}
            [\nt{NP} [My sister,roof]]
            ['s]
        ]
        [ \nt{NN} [diploma,roof] ]
    ]
    [\nt{VP} [arrived,roof]]
]
[$\start$
  [\frozen{\nt{NP}} [My sister,roof]]
  [\lc{\start}{\nt{NP}}
    ['s]
    [\lc{\start}{\nt{PossP}} [\nt{NN} [diploma,roof] ]
      [\lc{\start}{\nt{NP}}
        [\nt{VP} [arrived,roof] ]
        [\lc{\start}{\start} [\emptystring]]
      ]
    ]
  ]
]
]
\end{forest}%
}
\vspace{-6pt}
\caption{\emph{Left:} An example derivation containing left-recursive rules.
\emph{Right:} The corresponding derivation after our left-corner transformation.  Observe that (i) the lower \nt{NP} subtree has been hoisted up the tree, (ii) the \emph{left} recursion from \nt{NP} to \nt{NP} has been replaced with \emph{right} recursion from \lc{\start}{\nt{NP}} to \lc{\start}{\nt{NP}}.
}
\label{fig:motivating_example}
\vspace{-10pt}
\end{figure}
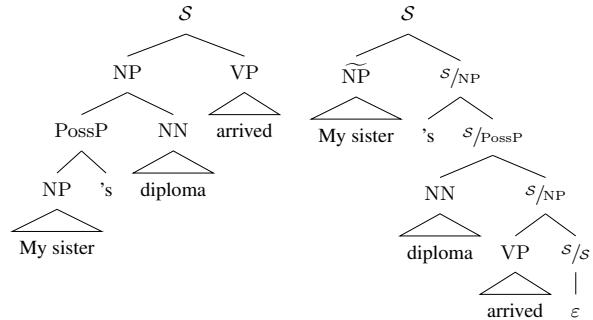

A common use case of the left-corner transformation is to remove left recursion from a grammar, which is necessary for converting the grammar to Greibach normal form and for several top-down parsing algorithms \citep{aho-ullman-72-book}.
As an additional effect, it reduces the stack depth of top-down parsing \citep{johnson-1998-finite-state}, which makes it an interesting method for psycholinguistic applications \citep{roark-etal-2009-deriving, charniak-2010-top}. The closely related left-corner \emph{parsing} strategy has been argued to be more cognitively plausible than alternatives due to its constant memory load across both left- and right-branching structures and low degree of local ambiguity for several languages \citep{johnson-laird-1983, abney1991memory, resnik1992left}; indeed, empirical evidence has shown that certain left-corner parsing steps correlate with brain activity \citep{brennan-pylkkanen-meg-2017, Nelson2017NeurophysiologicalDO} and reading times \citep{oh-et-el-22-comparison}.\footnote{Moreover, statistical left-corner parsers have proven themselves empirically effective for several grammar formalisms \citep{roark-johnson-1999-efficient, roark2001probabilistic, diaz-etal-2002-left, noji2014left, noji2016using, shain-etal-2016-memory, stanojevic-stabler-2018-sound, kitaev-klein-2020-tetra}.}\looseness=-1

This paper uncovers an interesting connection between the left-corner transformation and the speculation transformation \citep{eisner2007program}.\footnote{Speculation was originally a transformation for weighted logic programs, which we have adapted to CFGs.}
We show that speculation also hoists subtrees up the derivation tree, as in the left-corner transformation.  However, in contrast, it does \emph{not} remove left recursion.  In uncovering the similarity to speculation, we discover that speculation has been formulated with more specificity than prior left-corner transformations: it has parameters that allow it to control which types of subtrees are permitted to be hoisted and which paths they may be hoisted along.  We bring this flexibility to the left-corner transformation with a novel \emph{generalized} left-corner transformation (GLCT).\footnote{We note that generalized left-corner \emph{parsers} also exist \citep{demers1977, nederhof-1993-generalized}. However, they are generalized differently to our transformation.}  It turns out that the latter functionality is provided by the selective left-corner transformation \citep{johnson-roark-2000-compact}; however, the former is new.

We provide several new technical results:\footnote{These technical results fill some important gaps in the literature, as prior work \citep{rosenkrantz1970deterministic, johnson-1998-finite-state, johnson-roark-2000-compact, moore-2000-removing, eisner2007program} did not provide formal proofs.}
\begin{enumerate*}[label=(\roman*)]
\item We prove that GLCT preserves the weighted languages and that an isomorphism between derivation trees exists (\cref{thm/glct-equivalence}).
\item We provide explicit methods for mapping back and forth between the derivation trees (\cref{alg:derivation-mapping,alg:inverse-derivation-mapping}).
\item We prove that the set of derivation trees for speculation and GLCT are isomorphic (\cref{thm/spec-glct}).
\item We prove that our GLCT-based left-recursion elimination strategy removes left recursion (\cref{thm/left-recursion-elimination}).
\end{enumerate*}
Additionally, we empirically investigate the efficiency of GLCT for left-recursion elimination from grammars of nine different languages in \cref{sec/experiments}.

\section{Preliminaries}
\label{sec/preliminaries}

This section provides the necessary background on the concepts pertaining to semiring-weighted context-free grammars that this paper requires.

\begin{defin}
A \defn{semiring} is a tuple $\Tuple{\semiringset, \oplus, \otimes, \zero, \one }$ where $\semiringset$ is a set and the following hold:
\begin{itemize}[leftmargin=*,topsep=1pt,itemsep=1pt,parsep=1pt]
\item $\oplus$ is an associative and commutative binary operator with an identity element $\zero \in \semiringset$
\item $\otimes$ is an associative binary operator with an identity element $\one \in \semiringset$
\item Distributivity: $\forall a,b,c \in \semiringset$, $(a \oplus b) \otimes c \eq (a \otimes c) \oplus (b \otimes c)$ and $c \otimes (a \oplus b) \eq (c \otimes a) \oplus (c \otimes b)$
\item Annihilation: $\forall a \in \semiringset$, $a \otimes \zero \eq \zero \otimes a \eq \zero$
\end{itemize}
The semiring is commutative if $\otimes$ is commutative.
\end{defin}

\noindent We highlight a few commutative semirings and their use cases:
\begin{itemize*}
\item boolean $\Tuple{\{\bot, \top\}, \lor, \land, \bot, \top}$: string membership in a language,
\item  nonnegative real $\Tuple{\R_{\geq 0} \!\cup\! \{\infty\}, +, \cdot, 0, 1}$: the total probability of a string,
\item viterbi $\Tuple{[0,1], \max, \cdot, 1, 0}$: the weight of the most likely derivation of a string.
\end{itemize*}
For further reading on semirings in NLP, we recommend \citet{goodman-1999-semiring} and \citet{huang-2008-advanced}.

Our work studies weighted context-free grammars (WCFGs), which define a tractable family of weighted languages---called weighted context-free languages---that is frequently used in NLP applications (see, e.g., \citealp{jurafsky-martin-book}).

\begin{defin} \label{def:cfg}
A \defn{weighted context-free grammar} is a tuple $\grammar \eq \Tuple{\nts, \terminals, \start, \rules}$ where
\begin{itemize}[leftmargin=*,topsep=1pt,itemsep=1pt,parsep=1pt]
\item $\nts$ is a set of nonterminal symbols
\item $\terminals$ is a set of terminal symbols, $\terminals \cap \nts =\emptyset$
\item $\start \in \nts$ is the start symbol
\item $\rules$ is a bag (multiset) of weighted production rules.
Each rule $r \in \rules$ is of the form $\ntX \xrightarrow{w} \valpha$ where $\ntX\in\nts$, $\valpha \in (\terminals\cup\nts)^*$, and $w \in \semiringset$.
We assume that $\semiringset$ is commutative semiring.
\end{itemize}
\end{defin}

We will use the following notational conventions:
\begin{itemize*}
\item $\ntX,\ntY,\ntZ \in \nts$ for nonterminals
\item $\sym{a}, \sym{b}, \sym{c} \in \terminals$ for terminals
\item $\xx,\yy,\zz \in \terminals^*$ for a sequence of terminals
\item $\alpha, \beta, \gamma \in (\terminals \!\cup\! \nts)$ for a terminal or nonterminal symbol
\item $\valpha, \vbeta, \vgamma \in (\terminals \cup \nts)^*$ for a sequence of nonterminals or terminals
\item $\grammar \eq \Tuple{\nts, \terminals, \start, \rules}$ and $\grammarp \eq \Tuple{\ntsp, \terminalsp, \startp, \rulesp}$ for WCFGs.
\end{itemize*}
We write $\weightfunction{r}$ to access the weight of a rule $r$.

When describing a grammar and its rules, we may use the following terms: The \defn{size} of a grammar is $\sum_{(\wproduction{\ntX}{\valpha}{}) \in \rules} (1 \!+\! |\valpha|)$. The \defn{arity} of a rule $\wproduction{\ntX}{\valpha}{}$ is equal to $|\valpha|$; thus, a rule is \defn{nullary} if $|\valpha|=0$, \defn{unary} if $|\valpha|=1$, and so on.  For technical reasons, it is often convenient to eliminate nullary rules from the grammar.\footnote{\citet[\S F]{opedal-et-al-2023} provides an efficient method to remove nullary rules from semiring-weighted CFGs, which we make use of in our experiments (\cref{sec/experiments}).}

A \defn{derivation} is a rooted, ($\nts \cup \terminals$)-labeled, ordered tree where
each internal node must connect to its children by a production rule.\footnote{Note: a derivation may be built by a rule with an empty right-hand side; thus, the leaves may be elements of \nts.  When rendering such a derivation, childlessness is marked with \emptystring. 
}
To access the root \defn{label} of a derivation \tree, we write $\treelabel{\tree}$.  
The \defn{set of all derivations} of \grammar is the smallest set $\derivations$ satisfying
\begin{align}
&\derivations = \terminals \,\cup \label{def/derivations} \\
&\qquad\left\{\!\!\!
\tikz[CenterFix, scale=.7, sibling distance=1cm, level distance=1cm, every node/.style={font=\footnotesize}]{
\node (xK) {\ntX}
  child { node[yshift=-.4cm] { \MySubTree{\ensuremath{\alpha_1}} } }
  child { node { \Cdots } edge from parent[draw=none] }
  child { node[yshift=-.4cm] { \MySubTree{\ensuremath{\alpha_K}} } }
;
}
\!\middle| \;
\begin{varwidth}{\linewidth}
$(\ntX \to \alpha_1 \cdots \alpha_K) \in \rules,$ \\
$\MySubTree[scale=.7]{\alpha_1} \!\in\! \derivations, \ldots, \MySubTree[scale=.7]{\alpha_K} \!\in\! \derivations$
\end{varwidth}
\right\}\nonumber
\end{align}

\noindent
The \defn{yield} $\treeyield{\tree} \in \terminals^*$ of a derivation $\tree$ is
\begin{align}
&\bullet \textbf{if } \tree\in \terminals\colon \treeyield{\tree} \defeq \treelabel{\tree} = a \\
&\bullet \textbf{else: } \treeyield{\!\!\!\tikz[CenterFix, scale=.7, sibling distance=.75cm, level distance=1cm, every node/.style={font=\footnotesize}]{
\node (xK) {\ntX}
  child { node[yshift=-.4cm] { \MySubTree{\ensuremath{\alpha_1}} } }
  child { node { \Cdots } edge from parent[draw=none] }
  child { node[yshift=-.4cm] { \MySubTree{\ensuremath{\alpha_K}} } }
;
}\!\!\!}
\!\defeq\!
\treeyield{\MySubTree[scale=.7]{\ensuremath{\alpha_1}}}
\circ \cdots \circ
\treeyield{\MySubTree[scale=.7]{\ensuremath{\alpha_K}}}
\qquad\qquad\qquad\nonumber
\end{align}
\noindent where $\circ$ denotes concatenation.
The \defn{weight} $\treeweight{\tree} \in \semiringset$ of a derivation $\tree$ is
\begin{align}
&\bullet \textbf{ if }  \tree \in \terminals: \treeweight{\tree} \defeq \one\\
&\bullet \textbf{else: }  \treeweight{\!\!\!\tikz[CenterFix, scale=.7, sibling distance=.75cm, level distance=1cm, every node/.style={font=\footnotesize}]{
\node (xK) {\ntX}
  child { node[yshift=-.4cm] { \MySubTree{\ensuremath{\alpha_1}} } }
  child { node { \Cdots } edge from parent[draw=none] }
  child { node[yshift=-.4cm] { \MySubTree{\ensuremath{\alpha_K}} } }
;
}\!\!\!}
\!\defeq\!\!
\begin{array}{l}
    \weightfunction{\ntX \to \alpha_1 \cdots \alpha_K} \otimes \\
    \treeweight{\MySubTree[scale=.7]{\ensuremath{\alpha_1}}}\!\otimes \!\cdots\! \otimes\! \treeweight{\MySubTree[scale=.7]{\ensuremath{\alpha_K}}}
\end{array}
\nonumber
\end{align}
\noindent The \defn{weighted language} of $\alpha \in (\terminals \cup \nts)$ is a function
$\grammar_{\alpha}: \terminals^* \to \semiringset$ defined as follows: 
\begin{equation}
\grammar_{\alpha}(\xx)
\defeq
\bigoplus_{\tree \in \derivations[\alpha](\xx)} \treeweight{\tree}
\label{def/language}
\end{equation}
\noindent
where $\derivations[\alpha] \defeq \Set{ \tree \in \derivations\colon \treelabel{\tree} = \alpha }$ denotes the subset of \derivations containing trees labeled $\alpha$, and  $\derivations[\alpha](\xx) \defeq \Set{ \tree \in \derivations[\alpha]\colon \treeyield{\tree} = \xx }$ denotes those with yield \xx.  In words, the value of a string $\xx \in \terminals^*$ in the weighted language $\grammar_{\alpha}$ is the $\oplus$-sum of the weights of all trees in $\derivations[\alpha]$ with $\xx$ as its yield.   The weighted language of the \emph{grammar} \grammar is $\grammar(\xx) \defeq \grammar_{\start}(\xx)$. Given a set of symbols $\mathcal{X}$, we write $\derivations[\mathcal{X}]$ as shorthand for $\bigcup_{\alpha \in \mathcal{X}} \derivations[\alpha]$.  Lastly, let $\sem{\tree}$ denote the weighted language generated by the tree \tree.\footnote{Formally, $\sem{\tree}(\xx) \defeq \treeweight{\tree} \text{ if } \xx \eq \treeyield{\tree} \text{ else } \zero$.}

\newcommand{\lang}[0]{\grammar}
\newcommand{\langp}[0]{\grammarp}
We define the following operations on weighted languages \lang and \langp, for $\xx \in \terminals \cup \terminalsp$:
\begin{itemize}[leftmargin=*,topsep=3pt,itemsep=1pt,parsep=1pt]
\item Union: $\left[ \lang \oplus \langp \right](\xx) \!\defeq\! \lang(\xx) \oplus \langp(\xx)$
\item Concatenation:
{$\left[ \lang \circ \langp \right](\xx) \!\defeq\! \monstersum{\yy \circ \zz = \xx} \lang(\yy) \circ \langp(\zz)$}
\end{itemize}

\noindent Note that these operations form a (noncommutative) semiring over weighted languages where \zero is the language that assigns weight zero to all strings and 
$\one$ is the language that assigns one to the empty string and zero to other strings.

The weighted language of \grammar may also be expressed as a certain solution\footnote{Note that the system of equations does not necessarily have a unique solution.  In the case of an $\omega$-continuous semiring, \cref{def/language} coincides with the \emph{smallest} solution to \cref{eq:weighted-language-equations} under the natural ordering \citep{droste-2009-semirings}.}
to the following system of equations:
\begin{subequations}\label{eq:weighted-language-equations}
\begin{align}
&\grammar_{\sym{a}} = \one_{\sym{a}} &\forall \sym{a} \in \terminals \label{eq:weighted-language-base-case}\\
&\grammar_{\ntX}
= \monstersum{(\wproduction{\ntX}{\beta_1 \cdots \beta_K}{w}) \in \rules}
w \circ \grammar_{\beta_1} \circ \cdots \circ \grammar_{\beta_K} & \forall \ntX \in \nts \label{eq:weighted-language-recursive-case}
\end{align}
\end{subequations}
\noindent where $\one_{\sym{a}}$ is the weighted language that assigns \one to the string $\sym{a}$ and \zero to other strings.

We say that $\alpha$ is \defn{useless} if there does not exist a derivation $\tree \in \derivations[\start]$ that has a subderivation $\tree'$ with $\treelabel{\tree'} \eq \alpha$. We define \defn{trimming} $\textsc{trim}(\grammar)$ as removing each useless nonterminal and any rule in which they participate.  It is easy to see that trimming does not change the weighted language of the grammar because no useless nonterminals participate in a derivation rooted at $\start$.  We can trim useless rules in linear time using well-known algorithms \citep{hopcroft+ullman/79/introduction}.

We say that grammars \grammar and \grammarp are \defn{equal} ($\grammar \eq \grammarp$) if they have the same tuple representation after trimming. We say they are \defn{equivalent} ($\grammar \!\equiv\! \grammarp$) if they define the same weighted language.\footnote{I.e., $(\grammar \!\equiv\! \grammarp) \Longleftrightarrow \forall \xx \in (\terminals \cup \terminalsp)^*\colon \grammar_{\start}(\xx) = \grammarp_{\startp}(\xx)$.}
We say that they are \defn{$\mathcal{X}$-bijectively equivalent} ($\grammar \!\equiv_{\mathcal{X}}\! \grammarp$) if a structure-preserving bijection of type $\Dmap\colon \derivations[\mathcal{X}] \rightarrow \derivationsp[\mathcal{X}]$ exists. The mapping \Dmap is \defn{structure-preserving} if ($\forall \tree \in \derivations[\mathcal{X}]$) it is
(i) label-preserving ($\treelabel{\tree} \eq \treelabel{\dmap{\tree}}$),
(ii) yield-preserving ($\treeyield{\tree} \eq \treeyield{\dmap{\tree}}$), and
(iii) weight-preserving ($\treeweight{\tree} \eq \treeweight{\dmap{\tree}}$).
Suppose $\grammar \equiv_{\mathcal{X}} \grammarp$, $\start \in \mathcal X$ and $\start \eq \startp$, then $\grammar \!\equiv\! \grammarp$, but not conversely.
A benefit of this stronger notion of equivalence is that derivations in \grammar and \grammarp are interconvertible: we can parse in \grammar and convert to a parse in \grammarp and vice versa, assuming $\start \eq \startp$ and $\start \in \mathcal X$.\footnote{Under these conditions, our $\mathcal X$-bijective equivalence notion becomes a weighted extension of what \citet{gray-harrison-1969} call \emph{complete covers} and \citet{nijholt1980} calls \emph{proper covers}. However, their definitions assume traditional string-rewriting derivations instead of tree-structured derivations.\looseness=-1
}

\section{Transformations}
\label{sec/transformations}
This section specifies our novel generalized left-corner transformation, its correctness guarantees, and its connections to prior left-corner transformations.  We also describe the speculation transformation \citep{eisner2007program} and discuss the connections between the speculation transformation and our generalized left-corner transformation.

\subsection{Generalized Left-Corner Transformation}
\label{sec/glct}

This section introduces the generalized left-corner transformation (GLCT).\footnote{A generalized \emph{right}-corner transformation can be defined analogously---applications of such a transformation are given in \citet{schuler-etal-2010-broad} and \citet{amini-cotterell-2022-parsing}.}  This transformation extends prior left-corner transformations by providing additional parameters that control which subtrees can be hoisted.

\begin{defin}
\label{def/glct}
The \defn{generalized left-corner transformation} $\glct(\grammar, \Ps, \Xs)$ takes as input
\begin{itemize}[leftmargin=*,topsep=1pt,itemsep=1pt,parsep=1pt]
\item a grammar $\grammar\eq\Tuple{\nts, \terminals, \start, \rules}$
\item a subset of (non-nullary) rules $\Ps \subseteq \rules$ (called left-corner recognition rules)
\item a subset of symbols $\Xs \subseteq (\terminals \cup \nts)$ (called left-corner recognition symbols)
\end{itemize}
and outputs a grammar $\grammarp \eq \Tuple{\ntsp, \terminalsp, \startp, \rulesp}$ with
\begin{itemize}[leftmargin=*,topsep=1pt,itemsep=1pt,parsep=1pt]
\item a superset of nonterminals ($\ntsp \supseteq \nts$)\footnote{$\ntsp \eq \nts \cup \{\frozen{X} \mid \ntX \in \nts\} \cup \{\lc{\alpha}{\beta} \mid \alpha, \beta \in (\nts \cup \terminals)\}$\label{fn:ntsp}}
\item the same set of terminals ($\terminalsp \eq \terminals$)
\item the same start symbol ($\startp \eq \start$)
\item the weighted production rules (\rulesp):
\begin{subequations}
\begin{flalign}
&\wproduction{\ntX}{\frozen{\ntX}}{\one}\colon
  &\ntX \in \nts\!\setminus\!\Xs
  \label{eq:lc-recovery-frozen}
  \\
&\wproduction{\ntX}{\frozen{\alpha} \; \lc{\ntX}{\alpha}}{\one}\colon
   & \ntX \in \nts, {\alpha\in\Xs}
  \label{eq:lc-recovery-slash}
  \\[.25cm]
&\wproduction{\lc{\ntX}{\ntX}}{\emptystring}{\one}\colon
  &\ntX \in \terminals \cup \nts
  \label{eq:lc-slash-base}
  \\
&\wproduction{ \lc{\ntY}{\alpha} }{\vbeta \; \lc{\ntY}{\ntX}}{w}\colon
  &\wproduction{\ntX}{\alpha \, \vbeta}{w} \in \Ps, \ntY \in \nts
  \label{eq:lc-slash-recursive}
  \\[.25cm]
&\wproduction{\frozen{\ntX}}{\valpha}{w}\colon
  & \wproduction{\ntX}{\valpha}{w} \in \rules\!\setminus\!\Ps
  \label{eq:lc-frozen-base}
  \\
&\wproduction{\frozen{\ntX}}{\frozen{\alpha}\, \vbeta}{w}\colon
  &\wproduction{\ntX}{\alpha \, \vbeta}{w} \in \Ps, \alpha \notin \Xs
  \label{eq:lc-frozen-recursive}
\end{flalign}
\end{subequations}
\end{itemize}
\noindent The transformation creates two kinds of new nonterminals using $\frozen{\alpha}$ and $\lc{\ntX}{\alpha}$.\footnote{%
This notation works as follows: we associate a unique identifier $\mathit{id}$ with the transformation instance.  Then, $\frozen{\alpha} \!\defeq\! \Tuple{\mathit{id}, \alpha} \text{ if } \alpha \in \nts \text{ else } \alpha$ and $\lc{\ntX}{\alpha} \!\defeq\! \Tuple{\mathit{id}, \ntX, \alpha}$. This ensures that the symbols produced cannot conflict with those on \nts (i.e., $\forall \alpha \in \nts \cup \terminals$, $\ntX \in \nts\colon \frozen{\alpha}, \lc{\ntX}{\alpha} \notin \nts$).
}
\end{defin}

In \cref{def/glct}, we see that GLCT introduces new nonterminals of two varieties: \defn{slashed} nonterminals (denoted $\lc{\ntY}{\alpha}$) and \defn{frozen}\footnote{Our notion of frozen nonterminals is borrowed directly from the speculation transformation \citep{eisner2007program}, where they are called \texttt{other}.
} nonterminals (denoted $\frozen{\ntX}$).  The frozen and slashed nonterminals are each defined recursively.
\begin{itemize}[leftmargin=*,topsep=1pt,itemsep=1pt,parsep=1pt]
\item Slashed nonterminals are built by a base case \cref{eq:lc-slash-base} and a recursive case \cref{eq:lc-slash-recursive}.
\item Frozen nonterminals are built by a base case \cref{eq:lc-frozen-base} and a recursive case \cref{eq:lc-frozen-recursive}.
\end{itemize}

\noindent We see in \cref{eq:lc-recovery-frozen} and \cref{eq:lc-recovery-slash} that GLCT replaces the rules defining the \defn{original} nonterminals (\nts) with rules that use GLCT's new nonterminals; the only way to build a nonterminal from \nts is using one of these two rules.  We refer to these as \defn{recovery rules} because they recover the original symbols from the new frozen and slashed symbols.  We also see that \cref{eq:lc-slash-recursive} is responsible for converting left recursion into right recursion because the slashed nonterminal on its right-hand side is moved to the right of $\vbeta$.  We will return to this when we discuss speculation in \cref{sec/speculation}.  \cref{fig:schematic-transforms} illustrates how GLCT transforms trees.

\paragraph{Left corner and spine.}
To better understand the parameters \Xs and \Ps, we define the spine and left corner of a derivation $\tree \in \derivations$ of the original grammar.
Suppose \tree has the following general form:
\begin{center}
\begin{tikzpicture}[CenterFix, scale=.6, 
sibling distance=1.2cm, 
level distance=1cm, 
every node/.style={font=\scriptsize}, inner xsep=0, inner ysep=1pt]
\node (xK) {\spineNT{K}}
  child { node[](foo) { \spineNT{K-1} }
     child { node {} edge from parent[draw=none]
       child { node(bar) {\spineNT{2}} edge from parent[draw=none]
         child { node[yshift=-.4cm,xshift=0cm] { \MySubTree{\spineNT{1}} }
         }
         child { node[yshift=-.4cm,xshift=0cm] { \MySubTree{\spineRest{1}} } }
       }
       child { node {} edge from parent[draw=none] }
    }
    child { node[yshift=-.5cm] {\MySubTree{\spineRest{K-2}}} }
  }
  child { node[yshift=-.4cm,xshift=.25cm] { \MySubTree{\spineRest{K-1}} } }
;
\draw[sloped] (foo) -- node[fill=white,inner sep=1pt] {\Cdots} (bar);
\end{tikzpicture}
\end{center}
\noindent Then, we define the \defn{spine} \spine{\tree} as the maximum-length sequence of rules $(\spineNT{K} \to \spineNT{K-1} \, \spineRest{K-1}) \cdots (\spineNT{2} \to \spineNT{1} \, \spineRest{1})$ along the left edges of \tree where each rule is in \Ps.  The \defn{left corner} \leftcorner{\tree} of a tree $\tree \in \derivations$ is the bottommost subtree \tree[1] of \tree with $\treelabel{\tree[1]} \in \Xs$ that is reachable starting at the root $\treelabel{\tree}$ along the left edges of \tree where each edge comes from a rule in \Ps.  If no such subtree exists, we say that \tree has no left corner and write $\leftcorner{\tree} \eq \bot$. We write $\tree/\leftcorner{\tree}$ to denote the \tree with \leftcorner{\tree} replaced by the empty subtree; we define $\tree/\bot \defeq \tree$.  Lastly, we define $D/\alpha \defeq \Set{\tree/\leftcorner{\tree} \ \middle|\ \tree \in D, \treelabel{\leftcorner{\tree}} = \alpha}$ where $D$ is a set of derivations and $\alpha \in \nts \cup \terminals \cup \{ \bot \}$.

To illustrate, let \tree be the derivation on the left in \cref{fig:motivating_example}. The right-hand side derivation results from applying GLCT with $\Xs\eq\{\nt{NP}\}$ and $\Ps\eq\allowbreak \{\wproduction{\start}{\nt{NP} \, \nt{VP}}{},\allowbreak \production{\nt{NP}}{\nt{PossP} \, \nt{NN}},\allowbreak \production{\nt{PossP}}{\nt{NP} \, \sym{\text{'}s}}\}$. The spine of \tree, then, is the sequence of rules in \Ps (in the same order), and its left corner is the lower $\nt{NP}$-subtree. Note how the left corner is the subtree hoisted by the transformation.

\paragraph{Interpretation.}
The symbol $\lc{\ntX}{\alpha}$ represents the weighted language of \ntX where we have replaced its left corner subtrees labeled $\alpha$ (if one exists) with \emptystring.
We can see that the recovery rule \cref{eq:lc-recovery-slash} uses these slashed nonterminals to reconstruct \ntX by each of its left-corner types (found in \Xs). We also have a recovery rule \cref{eq:lc-recovery-frozen} that uses a frozen nonterminal \frozen{\ntX}, which represents the other ways to build \ntX (i.e., those that do not have a left corner in \Xs).  Thus, the weighted language of \ntX decomposes as a certain sum of slashed nonterminals and its frozen version (formalized below).

\begin{restatable}[Decomposition]{proposition}{Decomposition}
\label{prop/decomp}
Suppose $\grammarp \eq\glct(\grammar, \Ps, \Xs)$.  Then, for any $\ntX \in \nts$:
\begin{align}
\grammar_{\ntX}
&= \grammarp_{\frozen{\ntX}} \oplus
\monstersum{\alpha \in \Xs} \grammarp_{\frozen{\alpha}} \circ \grammarp_{\lc{\ntX}{\alpha}}
\end{align}

\end{restatable}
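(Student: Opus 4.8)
The plan is to give each of the newly introduced nonterminals a derivation-tree semantics in terms of the \emph{original} grammar, and then read the identity off a partition of $\derivations[\ntX]$ by left corner. Since $\grammar_{\ntX} \eq \bigoplus_{\tree \in \derivations[\ntX]} \sem{\tree}$ by \cref{def/language}, and since every $\tree \in \derivations[\ntX]$ has a unique left corner $\leftcorner{\tree}$ that is either $\bot$ or a subtree whose label lies in $\Xs$, the set $\derivations[\ntX]$ decomposes as the disjoint union of $\Set{\tree : \leftcorner{\tree} \eq \bot}$ and the blocks $\Set{\tree : \treelabel{\leftcorner{\tree}} \eq \alpha}$ over $\alpha \in \Xs$. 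It then suffices to match the frozen summand $\grammarp_{\frozen{\ntX}}$ with the first block and each product $\grammarp_{\frozen{\alpha}} \circ \grammarp_{\lc{\ntX}{\alpha}}$ with the block for $\alpha$.

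First I would establish the slashed interpretation: for every $\ntX \in \nts$ and symbol $\alpha$, $\grammarp_{\lc{\ntX}{\alpha}} \eq \bigoplus_{\tree \in \derivations[\ntX]/\alpha} \sem{\tree}$, i.e.\ $\lc{\ntX}{\alpha}$ generates precisely the yields and weights of the spine contexts obtained by deleting an $\alpha$-labelled left corner from an $\ntX$-rooted derivation. The argument is an induction on the length of the $\Ps$-spine, reading \cref{eq:lc-slash-base,eq:lc-slash-recursive} bottom-up: the base case \cref{eq:lc-slash-base} supplies the empty spine (the cut node is the root, residual yield $\emptystring$), while each application of \cref{eq:lc-slash-recursive} prepends one spine rule $\wproduction{\ntX'}{\alpha\,\vbeta}{w} \in \Ps$, advancing the slash index one level up the spine and emitting the off-spine siblings $\vbeta$. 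Because those siblings carry only original symbols, their contribution is exactly their weighted language, which I fold in using the defining equations \cref{eq:weighted-language-equations}.

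Next I would establish the frozen interpretation: $\grammarp_{\frozen{\ntX}}$ collects exactly the $\sem{\tree}$ of those $\tree \in \derivations[\ntX]$ whose left-$\Ps$ descent from the root meets no symbol of $\Xs$, i.e.\ the block $\derivations[\ntX]/\bot$. Here the base case \cref{eq:lc-frozen-base} freezes a rule outside $\Ps$ (halting the spine), and the recursive case \cref{eq:lc-frozen-recursive} continues the descent only when the left child lies outside $\Xs$, which is precisely the condition defining $\leftcorner{\tree} \eq \bot$. Composing the two interpretations, the product $\grammarp_{\frozen{\alpha}} \circ \grammarp_{\lc{\ntX}{\alpha}}$ glues a frozen $\alpha$-rooted left corner (no symbol of $\Xs$ below it) onto a slashed spine context (an arbitrary $\Ps$-spine above it); the yields concatenate in the correct order because the left corner occupies the leftmost block of leaves, and the glued tree has its bottommost $\Xs$-node exactly at the cut, so it realizes the block $\Set{\tree : \treelabel{\leftcorner{\tree}} \eq \alpha}$. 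Summing over the partition recovers $\bigoplus_{\tree \in \derivations[\ntX]} \sem{\tree} \eq \grammar_{\ntX}$.

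The hard part will be upgrading both interpretations from mere language identities to genuine weight- and yield-preserving bijections on derivations, and in particular verifying that the ``bottommost symbol of $\Xs$'' requirement in the definition of $\leftcorner{\cdot}$ is enforced \emph{exactly} by the division of labour between the slash factor (arbitrary $\Ps$-spine above the cut) and the frozen factor (no $\Xs$-symbol below the cut). The delicate point is the interaction at the empty-spine base rule $\wproduction{\lc{\ntX}{\ntX}}{\emptystring}{\one}$: it is what decides whether a derivation is recovered through the slashed route (via \cref{eq:lc-recovery-slash}) or the frozen route (via \cref{eq:lc-recovery-frozen}), and it must be reconciled carefully against the recovery rules so that each original derivation of $\ntX$ is charged to exactly one side of the partition.
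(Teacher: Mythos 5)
Your proposal is correct in spirit but follows a genuinely different, and much heavier, route than the paper's. The paper proves \cref{prop/decomp} in four lines: it invokes the already-established $\nts$-bijective equivalence (\cref{thm/glct-equivalence}) to write $\grammar_{\ntX} = \grammarp_{\ntX}$, unfolds $\grammarp_{\ntX}$ exactly one step using the fixed-point equations \cref{eq:weighted-language-equations}, and observes that the only rules of $\grammarp$ that rewrite an original nonterminal $\ntX$ are the two recovery rules \cref{eq:lc-recovery-frozen} and \cref{eq:lc-recovery-slash}; the identity is then read off from which recovery rules exist for $\ntX$. You instead stay entirely on the side of the original grammar, partitioning $\derivations[\ntX]$ by left corner and matching the blocks against semantic characterizations of the frozen and slashed nonterminals. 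Those characterizations are exactly \cref{prop/lang-relationship}, which in the paper is a \emph{separate, subsequent} result whose proof (\cref{lemma:frozen-bijection} and \cref{lemma:slash-bijection}) itself leans on \cref{thm/glct-equivalence}; your plan therefore inverts the paper's dependency order and re-derives a substantial part of the bijection machinery as lemmas, whereas the paper gets the proposition essentially for free from machinery it has already built. Both routes are sound (your gluing step needs commutativity of $\otimes$ to reorder the left-corner and spine weights, which the paper assumes throughout), and yours is the more explanatory one. On the ``delicate point'' you flag: you are right that this is where the real work sits. In the paper's proof it resolves trivially, because the standalone summand $\grammarp_{\frozen{\ntX}}$ arises only from rule \cref{eq:lc-recovery-frozen}, which exists only when $\ntX \in \nts \setminus \Xs$ --- precisely the case in which your $\leftcorner{\tree} = \bot$ block can be nonempty. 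When $\ntX \in \Xs$ the root itself is a candidate left corner, the $\bot$ block is empty, and the mass of $\grammarp_{\frozen{\ntX}}$ is instead delivered through the $\alpha = \ntX$ summand via the empty derivation of $\lc{\ntX}{\ntX}$; if you carry out your partition argument literally, you must make the standalone frozen term conditional on $\ntX \notin \Xs$ (or argue that the corresponding block vanishes), otherwise that contribution is counted twice in a non-idempotent semiring.
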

\noindent See \cref{prop/decomposition/proof} for proof. Next, we describe the weighted languages of the slashed and frozen nonterminals in relation to the derivations in the original grammar.  \cref{prop/lang-relationship} establishes that the weighted language of $\frozen{\alpha}$ is the total weight of $\alpha$ derivations without a left corner, and $\lc{\ntX}{\alpha}$ is the total weight of all \ntX derivations with an $\alpha$ left corner that has been replaced by \emptystring.

\begin{restatable}[Weighted language relationship]{proposition}{LanguageRelationship}
\label{prop/lang-relationship}
Suppose $\grammarp \eq\glct(\grammar, \Xs, \Ps)$.
Then, for any $\ntX \in \nts$ and $\alpha \in \nts \cup \terminals$:
\begin{subequations}
\begin{align*}
\grammarp_{\frozen{\alpha}}
= \monstersum{
\tree \in \derivations[\alpha]/\bot
} 
\sem{\tree}  
\quad\text{and}\quad
\grammarp_{\lc{\ntX}{\alpha}} 
= \monstersum{
\tree \in \derivations[\ntX]/\alpha
} \sem{\tree}
\end{align*}
\end{subequations}
\end{restatable}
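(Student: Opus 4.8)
The plan is to prove both identities by exhibiting, for each symbol, a bijection between derivations of \grammarp and (partial) derivations of \grammar that is simultaneously label-, yield-, and weight-preserving. Once such a bijection is in hand the two weighted languages agree: for every string it restricts to a yield-preserving bijection between the derivations of that yield, so the $\oplus$-sums in \cref{def/language} and in $\bigoplus_{\tree}\sem{\tree}$ coincide term by term (this also sidesteps any convergence issue from infinitely many derivations). Concretely, I would build mutually inverse maps $\derivationsp[\frozen{\alpha}]\leftrightarrow\derivations[\alpha]/\bot$ and $\derivationsp[\lc{\ntX}{\alpha}]\leftrightarrow\derivations[\ntX]/\alpha$, together with an accompanying correspondence $\derivationsp[\ntX]\leftrightarrow\derivations[\ntX]$ for $\ntX\in\nts$ coming from the recovery rules \cref{eq:lc-recovery-frozen,eq:lc-recovery-slash}, which is needed because the right-hand sides of the frozen and slashed rules contain original symbols. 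All three are established together by a single induction on derivation size; this is essentially the content of \cref{alg:derivation-mapping,alg:inverse-derivation-mapping}.

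For the frozen case, a $\frozen{\alpha}$-derivation is read off the left edge: rule \cref{eq:lc-frozen-recursive} walks down the spine exactly while the left child stays outside \Xs, and rule \cref{eq:lc-frozen-base} terminates it at a non-\Ps\ rule. Undoing the freezing returns an $\alpha$-tree whose spine meets no \Xs-labeled left descendant, i.e.\ one with $\leftcorner{\tree}=\bot$, and conversely; the off-spine and below-spine subtrees are original and are mapped by the inductive hypothesis. No reordering occurs here, so weight-preservation needs only associativity. For the slashed case, rule \cref{eq:lc-slash-recursive} reconstructs the spine running from the left corner $\alpha$ up to the target \ntX, but right-recursively, re-attaching each off-spine block $\vbeta$ in its original left-to-right order and finishing with the base rule \cref{eq:lc-slash-base}, which supplies the empty subtree replacing the hoisted left corner. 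The resulting object is exactly a slice $\tree/\leftcorner{\tree}$ with $\treelabel{\leftcorner{\tree}}=\alpha$. The delicate point is weight-preservation: the slashed derivation multiplies the spine-rule weights and the off-spine subtree weights in an interleaved bottom-up order, whereas the slice multiplies them in the original top-down tree order, so the two products agree only because \otimes\ is commutative — this is precisely where the commutativity assumption of \cref{def:cfg} is used.

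I expect the main obstacle to be the bookkeeping around the base cases rather than the recursive steps. Because slicing is not injective on $\derivations[\ntX]$ (trees differing only in their left-corner subtree share a slice), the target of the slashed bijection must be taken to be the set $\derivations[\ntX]/\alpha$ of slices, and one must verify that distinct $\lc{\ntX}{\alpha}$-derivations yield distinct slices. The subtlest case is the diagonal $\lc{\ntX}{\ntX}$ with its base rule and the interaction with whether the root lies in \Xs; this is the case that couples \cref{prop/lang-relationship} to the decomposition \cref{prop/decomp}, and it is where the frozen symbols and the original-symbol map must be lined up so that each original tree is accounted for exactly once. Finally, checking that the simultaneous induction closes — that the original-symbol correspondence invoked for the off-spine subtrees is itself justified by the same induction through the recovery rules — is the part I would write out in full.
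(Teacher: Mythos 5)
Your proposal is correct and follows essentially the same route as the paper: the paper proves the two identities via exactly the two bijections you describe (its \cref{lemma:frozen-bijection} and \cref{lemma:slash-bijection}), each obtained by freezing, respectively transposing and relabeling, the spine and delegating the off-spine subtrees to the derivation mapping \Dmap. The only organizational difference is that the paper invokes the already-established \cref{thm/glct-equivalence} as a black box for the original-symbol correspondence on those subtrees, rather than folding all three correspondences into one simultaneous induction as you propose.
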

\noindent See \cref{prop/lang-relationship/proof} for proof.

\paragraph{Special cases.}
We now discuss how our transformation relates to prior left-corner transformations. The \defn{basic left-corner transformation} \citep{rosenkrantz1970deterministic, johnson-1998-finite-state} is $\textsc{lct}(\grammar) \eq \glct(\grammar, \rules, \nts \cup \terminals)$,  i.e., we set $\Ps \eq \rules$ and $\Xs \eq \nts \cup \terminals$.\footnote{That is, the output grammars are equal post trimming. The only useful rules are instances of \cref{eq:lc-recovery-slash}, \cref{eq:lc-slash-base}, and \cref{eq:lc-slash-recursive}. Furthermore, the \cref{eq:lc-recovery-slash} rules will be useless unless $\frozen{\alpha}$ is a terminal.  With these observations, verifying that GLCT matches \citeposs{johnson-1998-finite-state} presentation of LCT is straightforward.} This forces the leftmost leaf symbol of the tree to be the left corner, which is either a terminal or the left-hand side of a nullary rule.  The \defn{selective left-corner transformation} (SLCT; \citealp{johnson-roark-2000-compact}) $\slct(\grammar, \Ps) \eq \glct(\grammar, \Ps, \nts \cup \terminals)$ supports left-corner recognition rules \Ps, but it does not allow control over the left-corner recognition symbols, as \Xs is required to be $\nts \cup \terminals$.\footnote{More precisely, we are using the SLCT with top-down factoring \citep[\S2.5]{johnson-roark-2000-compact}.\label{fn:topdown-factoring}} Thus, SLCT takes \emph{any} subtree at the bottom of the spine to be its left corner.  Frozen nonterminals enable us to restrict the left corners to those labeled \Xs.

\paragraph{Formal guarantees.}
We now discuss the formal guarantees related to our transformation in the form of an equivalence theorem (\cref{thm/glct-equivalence}) and an asymptotic bound on the number of rules in the output grammar (\cref{prop/num-rules-bound}). \cref{thm/glct-equivalence} establishes that the GLCT's output grammar is \nts-bijectively equivalent to its input grammar.

\begin{restatable}[$\nts$-bijective equivalence]{theorem}{EquivalenceTheorem}
\label{thm/glct-equivalence}
Suppose ${\grammar =} \Tuple{\nts, \terminals, \start, \rules}$ is a WCFG and ${\grammarp =}\, \glct(\grammar, \Ps, \Xs)$ where $\Ps\subseteq \rules$ and $\Xs \subseteq \terminals \cup \nts$.
Then, $\grammar \equiv_{\nts} \grammarp$.
\end{restatable}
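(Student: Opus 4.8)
The plan is to prove the claimed $\nts$-bijective equivalence by exhibiting an explicit structure-preserving bijection $\Dmap\colon \derivations[\nts] \to \derivationsp[\nts]$ together with its inverse (these are \cref{alg:derivation-mapping} and \cref{alg:inverse-derivation-mapping}) and verifying, by induction on derivation structure, that (a) both maps are well defined, (b) $\Dmap$ is label-, yield-, and weight-preserving, and (c) the two maps are mutually inverse, so that $\Dmap$ is a bijection. It is cleanest to define $\Dmap$ on $\derivations[\alpha]$ for every $\alpha \in \nts \cup \terminals$ (acting as the identity on terminal leaves) and then specialize to $\alpha \in \nts$, since the recursion on proper subtrees forces this generality anyway.

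The forward map is driven by the spine/left-corner decomposition. Given $\tree \in \derivations[\ntX]$, I first compute $\spine{\tree}$ and $\leftcorner{\tree}$, both uniquely determined by $\tree$ because the spine is the maximal run of $\Ps$-rules down the left edge and $\leftcorner{\tree}$ is pinned down as the bottommost $\Xs$-labeled node reached along it. If $\tree$ has no left corner, I freeze it into a $\frozen{\ntX}$-derivation (built by \cref{eq:lc-frozen-base,eq:lc-frozen-recursive}) and close with \cref{eq:lc-recovery-frozen}. If $\leftcorner{\tree}$ has label $\alpha \in \Xs$, I instead (i) freeze the left-corner subtree into a $\frozen{\alpha}$-derivation, (ii) convert the spine above it into a right-recursive $\lc{\ntX}{\alpha}$-chain assembled by \cref{eq:lc-slash-recursive} and capped by \cref{eq:lc-slash-base}, recursively transforming the off-spine subtrees (those in each $\spineRest{i}$) with $\Dmap$, and (iii) glue the two pieces together with \cref{eq:lc-recovery-slash}. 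Freezing is itself defined recursively by \cref{eq:lc-frozen-base,eq:lc-frozen-recursive}, applying $\Dmap$ to the subtrees that hang off the frozen spine; since every recursive call is on a strictly smaller subtree, the recursion is well founded.

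Structure preservation is then checked rule by rule. Label-preservation holds because the recovery rules \cref{eq:lc-recovery-frozen,eq:lc-recovery-slash} reintroduce $\ntX$ at the root, and yield-preservation holds because the new rules only rearrange the original children while \cref{eq:lc-slash-base} contributes $\emptystring$ to the yield. Weight-preservation is where commutativity of the semiring is essential: \cref{eq:lc-slash-recursive}, \cref{eq:lc-frozen-base}, and \cref{eq:lc-frozen-recursive} each carry the weight of the original rule they come from, whereas the recovery rules and \cref{eq:lc-slash-base} carry weight $\one$, so $\Dmap$ preserves the multiset of rule weights; but hoisting the left corner and replacing the left-recursive spine with a right-recursive chain reorders the factors of the product $\treeweight{\tree}$, so the identity $\treeweight{\tree} = \treeweight{\dmap{\tree}}$ relies on $\otimes$ being commutative.

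The main work is bijectivity, which I would obtain by showing that the inverse map $\Dmap^{-1}$ of \cref{alg:inverse-derivation-mapping} satisfies $\Dmap^{-1} \circ \Dmap = \mathrm{id}$ and $\Dmap \circ \Dmap^{-1} = \mathrm{id}$, again by induction. The inverse dispatches on the root rule of a transformed derivation: an \cref{eq:lc-recovery-frozen} root is unfrozen, whereas an \cref{eq:lc-recovery-slash} root is split into its $\frozen{\alpha}$ left corner and its $\lc{\ntX}{\alpha}$-chain; the chain is walked to rebuild the spine with the unfrozen left corner reattached at its foot, and $\Dmap^{-1}$ recurses on the remaining subtrees. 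I expect the delicate step to be verifying that this dispatch is exhaustive and unambiguous --- that every derivation in $\derivationsp[\nts]$ is headed by exactly one recovery rule, that each $\frozen{\alpha}$-derivation corresponds to exactly one frozen subtree, and that each $\lc{\ntX}{\alpha}$-derivation encodes exactly one $\Ps$-spine from $\ntX$ down to an $\alpha$ left corner. The subtlety concentrates in the interaction between membership in $\Xs$ and the two recovery rules (\cref{eq:lc-recovery-frozen} exists only for $\ntX \notin \Xs$), which governs whether a tree is frozen at its root or reconstructed through a slashed chain. The auxiliary freezing and slashing bijections required here are precisely the tree-level content behind \cref{prop/decomp,prop/lang-relationship}, so once they are in place the theorem follows.
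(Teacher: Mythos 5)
Your proposal is correct and follows essentially the same route as the paper's proof: it constructs the same explicit mapping (\cref{alg:derivation-mapping}) and its inverse (\cref{alg:inverse-derivation-mapping}), verifies label-, yield-, and weight-preservation by structural induction with the same two-case split on whether a left corner exists, and establishes bijectivity by checking the two compositions are identities. Your explicit remark that weight-preservation leans on commutativity of $\otimes$ (since the restructured tree reorders the factors) is a correct and slightly more candid statement of a point the paper handles implicitly via its standing assumption of a commutative semiring.
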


\noindent We prove this theorem in \cref{thm/glct/proof}.  To our knowledge, this is the only formal correctness proof for any left-corner transformation.\footnote{We note that \citet{aho-ullman-72-book} prove correctness for an alternative method to remove left recursion, which is used as a first step when converting a grammar to Greibach normal form. This method, however, might lead to an exponential increase in grammar size \citep{moore-2000-removing}.}
In addition, \cref{thm/glct/proof} provides pseudocode for the derivation mapping $\Dmap\colon\derivations[\nts] \rightarrow \derivationsp[\nts]$ and its inverse \Dmapinv, in \cref{alg:derivation-mapping,alg:inverse-derivation-mapping}, respectively.

We can bound the number of rules in the output grammar as a function of the input grammar and the transformation's parameters \Xs and \Ps.

\begin{proposition}
\label{prop/num-rules-bound}
The number of rules in $\grammarp \eq \glct(\grammar, \Ps, \Xs)$ is no more than
$$|\rules| + |\nts|\,(1 + |\Xs| + |\Ps|) + |\nts \setminus \Xs| + |\terminals|$$
\end{proposition}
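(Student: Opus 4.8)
The plan is to bound $|\rulesp|$ by summing the number of rules contributed by each of the six schemas in \cref{def/glct} and then regrouping the result algebraically. The key structural observation I would establish first is that the left-hand sides of the six schemas fall into three disjoint families: original nonterminals from $\nts$ (produced by \cref{eq:lc-recovery-frozen,eq:lc-recovery-slash}), slashed nonterminals $\lc{\ntY}{\alpha}$ (produced by \cref{eq:lc-slash-base,eq:lc-slash-recursive}), and frozen nonterminals $\frozen{\ntX}$ (produced by \cref{eq:lc-frozen-base,eq:lc-frozen-recursive}), where the disjointness of the frozen and slashed symbols from $\nts$ is guaranteed by the footnote construction in \cref{def/glct}. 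Consequently no rule is generated by more than one schema, so $|\rulesp|$ equals the sum of the six per-schema instance counts, and it suffices to tally each index set.

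First I would tally the schemas that range over full index sets. \cref{eq:lc-recovery-frozen} ranges over $\ntX \in \nts\setminus\Xs$, contributing $|\nts\setminus\Xs|$ rules; \cref{eq:lc-recovery-slash} ranges over pairs $(\ntX,\alpha)\in\nts\times\Xs$, contributing $|\nts|\,|\Xs|$; \cref{eq:lc-slash-base} ranges over $\ntX\in\terminals\cup\nts$, contributing $|\terminals|+|\nts|$ since that union is disjoint by \cref{def:cfg}; \cref{eq:lc-slash-recursive} ranges over pairs of a rule in $\Ps$ with a nonterminal $\ntY\in\nts$, contributing $|\Ps|\,|\nts|$; and \cref{eq:lc-frozen-base} ranges over $\rules\setminus\Ps$, contributing $|\rules\setminus\Ps|$.

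The only schema whose index set is not a full product is \cref{eq:lc-frozen-recursive}, which is restricted to rules $(\wproduction{\ntX}{\alpha\,\vbeta}{w})\in\Ps$ whose first right-hand-side symbol satisfies $\alpha\notin\Xs$; this contributes at most $|\Ps|$ rules, and this restriction is precisely the source of the ``no more than'' in the statement, with equality holding exactly when no rule of $\Ps$ begins with a symbol in $\Xs$. Combining the frozen-family counts and using $\Ps\subseteq\rules$ gives $|\rules\setminus\Ps|+|\Ps|=|\rules|$.

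Finally I would regroup the total: the three terms carrying a factor of $|\nts|$---namely $|\nts|$ from \cref{eq:lc-slash-base}, $|\nts|\,|\Xs|$ from \cref{eq:lc-recovery-slash}, and $|\nts|\,|\Ps|$ from \cref{eq:lc-slash-recursive}---collect into $|\nts|(1+|\Xs|+|\Ps|)$, while the remaining terms $|\rules|+|\nts\setminus\Xs|+|\terminals|$ reproduce the claimed bound exactly. This is a direct counting argument with no genuine obstacle; the only two points requiring care are the disjointness of the three left-hand-side families (which licenses adding the per-schema counts without correcting for overlap) and the side condition $\alpha\notin\Xs$ on \cref{eq:lc-frozen-recursive} (which is what makes the statement an inequality rather than an equality).
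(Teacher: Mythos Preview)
Your proposal is correct and takes essentially the same approach as the paper: bound each of the six rule schemas in \cref{def/glct} separately, then sum and regroup. Your added remarks about disjointness of the left-hand-side families and the equality condition on \cref{eq:lc-frozen-recursive} are valid refinements but not required for the stated inequality.
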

\begin{proof}
We bound the maximum number of rules in each rule category of \cref{def/glct}:
\begin{center}
\begin{tabular}{l l}
$|\cref{eq:lc-recovery-frozen}| \le |\nts \setminus \Xs|\qquad\quad$
  & $|\cref{eq:lc-recovery-slash}| \le |\nts|\,|\Xs|$ \\
$|\cref{eq:lc-slash-base}| \le |\nts|+|\terminals|$
  & $|\cref{eq:lc-slash-recursive}| \le |\Ps|\,|\nts|$ \\
$|\cref{eq:lc-frozen-base}| \le |\rules \!\setminus\! \Ps|$
  & $|\cref{eq:lc-frozen-recursive}| \le |\Ps|$
\end{tabular}
\end{center}
Each of these bounds can be derived straightforwardly from \cref{def/glct}.  Summing them, followed by algebraic simplification, proves \cref{prop/num-rules-bound}.
\end{proof}

\noindent The bound in \cref{prop/num-rules-bound} is often loose in practice, as many of the rules created by the transformation are useless. In \cref{sec/left-recursion-elimination}, we describe how to use GLCT to eliminate left recursion, and we investigate the growth of the transformed grammar for nine natural language grammars in \cref{sec/experiments}.  

\begin{figure*}[ht!]
\centering
\begin{subfigure}[t]{0.3\textwidth}
\caption{Original derivation\label{diagram/original}}
\centering\scriptsize
\begin{tikzpicture}[CenterFix, scale=.5, 
sibling distance=2.5cm, 
level distance=1.75cm, 
every node/.style={font=\scriptsize}]
\node (xK) {\spineNT{K}}
  child { node(foo) { \spineNT{K-1} }
     child { node {} edge from parent[draw=none]
       child { node(bar) {\spineNT{2}} edge from parent[draw=none]
         child { node[yshift=-.4cm,xshift=.25cm] { \MySubTree{\spineNT{1}} }
         }
         child { node[yshift=-.4cm,xshift=-.25cm] { \MySubTree{\spineRest{1}} } }
       }
       child { node {} edge from parent[draw=none] }
    }
    child { node[yshift=-.4cm] {\MySubTree{\spineRest{K-2}}} }
  }
  child { node[yshift=-.4cm] { \MySubTree{\spineRest{K-1}} } }
;
\draw[sloped] (foo) -- node[fill=white,inner sep=1pt] {\Cdots} (bar);
\end{tikzpicture}
\end{subfigure}
\begin{subfigure}[t]{0.6\textwidth}
\caption{GLCT\label{diagram/glct}}
\begin{tikzpicture}[
  TopFix,
  scale=.5, sibling distance=3cm,
  level 1/.style={sibling distance=1.1cm},
  level 2/.style={sibling distance=2.75cm},
  every node/.style={font=\scriptsize},
]
\node {\spineNT{K}}
child { node[xshift=-1cm] { \frozen{\spineNT{\kk}} }
    child { node(boo) { \frozen{\spineNT{\kk-1}} }  
      child { node {}      edge from parent[draw=none]
        child { node(bun) { \frozen{\spineNT{2}} }    edge from parent[draw=none]
          child { node[xshift=.25cm] { \frozen{\spineNT{1}} }
            child { node[yshift=-.4cm] { \ddmap{\MySubTree{\spineRest{0}} } }  }
          }
          child { node[yshift=-.4cm,xshift=-.25cm] { \ddmap{\MySubTree{\spineRest{1}}} } }
        }
        child { node {}  edge from parent[draw=none] }   
      }
      child { node[yshift=-.4cm] { \ddmap{\MySubTree{\spineRest{\kk-2}}} } }
    }
    child { node[yshift=-.4cm] { \ddmap{\MySubTree{\spineRest{\kk-1}}} } }
}
child { node[xshift=1cm] { \lc{\spineNT{K}}{\spineNT{\kk}} }
    child { node[yshift=-.4cm] { \ddmap{\MySubTree{\spineRest{\kk}}} }  }
    child { node[yshift=-0cm](foo) { \lc{\spineNT{K}}{\spineNT{\kk+1}} } 
      child { node[yshift=-.4cm] { \ddmap{\MySubTree{ \spineRest{\kk+1} }} } }
      child { node {} edge from parent[draw=none]
        child { node {} edge from parent[draw=none] }   
        child { node(bar) {\lc{\spineNT{K}}{\spineNT{K-2}}} edge from parent[draw=none]
          child { node[yshift=-.4cm] { \ddmap{\MySubTree{ \spineRest{K-2} }} } }
          child { node {\lc{\spineNT{K}}{\spineNT{K-1}}}
            child { node[yshift=-.4cm] { \ddmap{\MySubTree{ \spineRest{K-1} }} } }
            child { node {\lc{\spineNT{K}}{\spineNT{K}} }
              child { node {\emptystring} }
            }
          }
        }
      }
    }
}
;
\draw[sloped] (foo) -- node[fill=white,inner sep=1pt] {\Cdots} (bar);
\draw[sloped] (boo) -- node[fill=white,inner sep=1pt] {\Cdots} (bun);
\end{tikzpicture}
\end{subfigure}
%
\begin{subfigure}[t]{0.3\textwidth}
\caption{Frozen recovery\label{diagram/frozen}}
\begin{tikzpicture}[
  scale=.5, sibling distance=2.5cm, TopFix,
  every node/.style={font=\scriptsize},
]
\node {\spineNT{K}}
child { node { \frozen{\spineNT{K}} }
    child { node(boo) { \frozen{\spineNT{K-1}} } 
      child { node {}      edge from parent[draw=none]
        child { node(bun) { \frozen{\spineNT{2}} }    edge from parent[draw=none]
          child { node[xshift=.25cm] { \frozen{\spineNT{1}} }
            child { node[yshift=-.4cm] { \ddmap{\MySubTree{\spineRest{0}} } }  }
          }
          child { node[yshift=-.4cm, , xshift=-.25cm] { \ddmap{\MySubTree{\spineRest{1}}} } }
        }
        child { node {}  edge from parent[draw=none] }   
      }
      child { node[yshift=-.4cm] { \ddmap{\MySubTree{\spineRest{K-2}}} } }
    }
    child { node[yshift=-.4cm] { \ddmap{\MySubTree{\spineRest{K-1}}} } }
}
;
\draw[sloped] (boo) -- node[fill=white,inner sep=1pt] {\Cdots} (bun);
\end{tikzpicture}
\end{subfigure}
\begin{subfigure}[t]{0.6\textwidth}
\caption{Speculation\label{diagram/speculation}}
\begin{tikzpicture}[
  TopFix,
  scale=.5, sibling distance=3cm,
  every node/.style={font=\scriptsize},
]
\node {\spineNT{K}}
child { node[xshift=-1.5cm] { \frozen{\spineNT{\kk}} }
    child { node(boo) { \frozen{\spineNT{\kk-1}} }
      child { node {}      edge from parent[draw=none]
        child { node(bun) { \frozen{\spineNT{2}} } edge from parent[draw=none]
          child { node[xshift=.25cm] { \frozen{\spineNT{1}} }
            child { node[yshift=-.4cm] { \ddmap{\MySubTree{\spineRest{0}} } }  }
          }
          child { node[yshift=-.4cm,xshift=-.25cm] { \ddmap{\MySubTree{\spineRest{1}}} } }
        }
        child { node {}  edge from parent[draw=none] }   
      }
      child { node[yshift=-.4cm,,xshift=-.5cm] { \ddmap{\MySubTree{\spineRest{\kk-2}}} } }
    }
    child { node[yshift=-.4cm] { \ddmap{\MySubTree{\spineRest{\kk-1}}} } }
}
child { node[xshift=1.5cm] { \lc{\spineNT{K}}{\spineNT{\kk}} }
    child { node(boo2) { \lc{\spineNT{K-1}}{\spineNT{\kk}} }
        child { node {} edge from parent[draw=none]
            child { node[yshift=.5cm](bun2) {\lc{\spineNT{\kk+2}}{\spineNT{\kk}}}   edge from parent[draw=none]
                child { node { \lc{\spineNT{\kk+1}}{\spineNT{\kk}} }
                    child { node[xshift=.25cm] { \lc{\spineNT{\kk}}{\spineNT{\kk}} }
                    child { node[yshift=-.4cm] { \emptystring } }
                }
                child { node[yshift=-.4cm,xshift=-.25cm] { \ddmap{\MySubTree{\spineRest{\kk}}} } }
                }
                child { node[yshift=-.5cm,xshift=.25cm]{\ddmap{\MySubTree{\spineRest{\kk+1}}}}
                }
            }
            child { node {} edge from parent[draw=none] }
        }
        child { node[yshift=-.4cm,,xshift=-.5cm] { \ddmap{\MySubTree{\spineRest{K-2}}} } }
    }
    child { node[yshift=-.4cm] { \ddmap{\MySubTree{\spineRest{K-1}}} } }
}
;
\draw[sloped] (boo) -- node[fill=white,inner sep=1pt] {\Cdots} (bun);
\draw[sloped] (boo2) -- node[fill=white,inner sep=1pt] {\Cdots} (bun2);
\end{tikzpicture}
\end{subfigure}
\vspace{-7pt}
\caption{This figure is a schematic characterization of the one-to-one correspondence of derivations of the original grammar, speculation, and GLCT transformations.  The diagram assumes that the rules exposed in the derivation \tree form its spine $\spine{\tree}$. However, (b) and (d) assume that $\spineNT{\kk} \in \Xs$ is the left corner \leftcorner{\tree}, and (c) assumes that $\leftcorner{\tree} \eq \bot$.
Note that the rules in the spine that are below the left corner are frozen.  When the left corner is $\bot$ (i.e., case (c)), the spines of the GLCT and speculation trees are transformed in the same manner. We note that in each transformation case ((b), (c), or (d)), each $\beta$-labeled subtree is recursively transformed, and its root label is preserved. We see that both speculation (d) and GLCT (b) hoist the same left-corner subtree (i.e., the $\spineNT{\kk}$ subtree in the diagram) to attach at the top of the new derivation.  However, the left recursion is transformed into a right-recursive tree in GLCT.  Lastly, we observe that the slashed nonterminals in GLCT have a common numerator ($\spineNT{K}$), whereas, in speculation, they have a common denominator ($\spineNT{\kk}$).
}
\label{fig:schematic-transforms}
\vspace{-12pt}
\end{figure*}

\paragraph{Optimizations.}
\label{sec/optimizations}
We briefly describe two improvements to our method's practical efficiency.

\paragraph{\reflectbox{\return}\ Reducing the number of useless rules.}
\label{sec/useless_rules}
For efficiency, we may adapt two filtering strategies from prior work that aim to reduce the number of useless rules created by the transformation.\footnote{These strategies are provided in our implementation.}
We provide equations for how to modify \rulesp in GLCT to account for these filters in \cref{sec/filtered-glct-equations}.

\paragraph{\reflectbox{\return}\ Fast nullary rule elimination.}
\label{sec/nullary_removal}
Nullary rule elimination is often required as a preprocessing step in parsing applications \citep{aho-ullman-72-book}.  When eliminating the nullary rules introduced by our transformations (i.e., the base case for slashed rules), there turns out to be a special linear structure that can be exploited for efficiency.  We describe the details of this speedup in \cref{sec/fast-nullary-elimination-appendix}.

\subsection{Speculation Transformation}
\label{sec/speculation}

In this section, we adapt \citeauthor{eisner2007program}'s (\citeyear{eisner2007program}; \S 6.5) speculation transformation from weighted logic programming to WCFGs.\footnote{The translation was direct and required essentially no invention on our behalf.  However, we have made one aesthetic change to their transformations that we wish to highlight: the closest WCFG interpretation of \citeposs{eisner2007program} speculation transformation restricts the slashed nonterminals beyond \cref{eq:spec-slash-base} and \cref{eq:spec-slash-recursive}; their version constrains the denominator to be $\in \Xs$.  This difference disappears after trimming because \emph{useful} slashed nonterminals must be consumed by the recovery rule \cref{eq:spec-recovery-slash}, which imposes the \Xs constraint one level higher.  We prefer our version as it enables \cref{thm/spec-glct}, which shows that GLCT and speculation produce $\mathcal{X}$-bijective equivalent grammars for all nonterminals.  The pruned version would result in a weaker theorem with $\mathcal{X}$ being a \emph{subset} of the nonterminals with a nuanced specification.
}  We will provide a new interpretation for speculation that does not appear in the literature.\footnote{\citeposs{vieira2023automating} dissertation, which appeared contemporaneously with this paper, adopts our same interpretation.}  In particular, we observe that speculation, like the left-corner transformation, is a subtree hoisting transformation.

\begin{defin}
\label{def/speculation}
The \defn{speculation transformation} $\speculation(\grammar, \Ps, \Xs)$ takes as input
\begin{itemize}[leftmargin=*,topsep=1pt,itemsep=1pt,parsep=1pt]
\item a grammar $\grammar\eq\Tuple{\nts, \terminals, \start, \rules}$
\item a subset of (non-nullary) rules $\Ps \subseteq \rules$ (called the left-corner recognition rules)
\item a subset of symbols $\Xs \subseteq (\terminals \cup \nts)$ (called the left-corner recognition symbols)
\end{itemize}
and outputs a grammar $\grammarp \eq \Tuple{\ntsp, \terminalsp, \startp, \rulesp}$ with
\begin{itemize}[leftmargin=*,topsep=1pt,itemsep=1pt,parsep=1pt]
\item a superset of nonterminals ($\ntsp \supseteq \nts$)\footnote{See \cref{fn:ntsp}.}
\item the same set of terminals ($\terminalsp \eq \terminals$)
\item the same start symbol ($\startp \eq \start$)
\item the weighted production rules (\rulesp):
\allowdisplaybreaks
\begin{subequations}
\begin{flalign}
&\wproduction{\ntX}{\frozen{\ntX}}{\one}\colon
  & \ntX\in\nts\setminus\Xs
  \\
&\wproduction{\ntX}{\frozen{\alpha}\, \lc{\ntX}{\alpha}}{\one}\colon
  & \ntX\in\nts, \alpha\in\Xs
  \label{eq:spec-recovery-slash}
  \\[.2cm]
&\wproduction{\lc{\ntX}{\ntX}}{\emptystring}{\one}\colon
  & \ntX \in \terminals \cup \nts
  \label{eq:spec-slash-base}
  \\
&\wproduction{\lc{\ntX}{\ntY} }{\lc{\alpha}{\ntY} \, \vbeta}{w}\colon
  & \!\!\!\!\!\!\!\!\wproduction{\ntX}{\alpha \, \vbeta}{w} \in \Ps, \ntY \!\in\! \terminals \cup \nts
  \label{eq:spec-slash-recursive}
  \\[.25cm]
&\wproduction{\frozen{\ntX}}{\valpha}{w}\colon
  & \wproduction{\ntX}{\valpha}{w} \in \rules\!\setminus\!\Ps
  \label{eq:spec-frozen-base}
  \\
&\wproduction{\frozen{\ntX}}{\frozen{\alpha} \, \vbeta}{w}\colon
  & \wproduction{\ntX}{\alpha \, \vbeta}{w} \in \Ps, \alpha \notin \Xs
  \label{eq:spec-frozen-recursive}
\end{flalign}
\end{subequations}
\end{itemize}
\end{defin}

Upon inspection, we see that the only difference between speculation and GLCT is how they define their slashed nonterminals, as the other rules are identical.
The slashed nonterminals have the same base case \cref{eq:lc-slash-base} and \cref{eq:spec-slash-base}.  However, their recursive cases \cref{eq:lc-slash-recursive} and \cref{eq:spec-slash-recursive} differ in an intriguing way:
\begin{flalign*}
\wproduction{ \lc{\ntY}{\alpha} }{\vbeta \; \lc{\ntY}{\ntX}}{w}\colon
  &&\wproduction{\ntX}{\alpha \, \vbeta}{w} \in \Ps, \ntY \in \nts
  \ (\ref{eq:lc-slash-recursive})
  \\
\wproduction{\lc{\ntX}{\ntY} }{\lc{\alpha}{\ntY} \, \vbeta}{w}\colon
  &&\!\!\!\!\wproduction{\ntX}{\alpha \, \vbeta}{w} \in \Ps, \ntY \in \terminals \cup \nts
  \ (\ref{eq:spec-slash-recursive})
\end{flalign*}

\noindent This difference is why GLCT can eliminate left recursion and speculation cannot: GLCT's slashed nonterminal appears to the right of \vbeta, and speculation's appears on the left.  For GLCT, \ntY is passed along the numerator of the slashed nonterminal, whereas, for speculation, \ntY is passed along the denominator.  \cref{thm/spec-glct} (below) establishes that speculation and GLCT are bijectively equivalent for their complete set of nonterminals.\footnote{We note that the set of \emph{useful} slashed and frozen nonterminals typically differs between GLCT and speculation.}\looseness=-1
\begin{restatable}[Speculation--GLCT bijective equivalence]{theorem}{ThmSpeculationGLCT}
\label{thm/spec-glct}
For any grammar \grammar, and choice of \Xs and \Ps, $\speculation(\grammar,\Ps,\Xs)$ and $\glct(\grammar,\Ps,\Xs)$ are $\mathcal{X}$-bijectively equivalent where $\mathcal{X}$ is the complete set of symbols (i.e., original, frozen, and slashed).\looseness=-1
\end{restatable}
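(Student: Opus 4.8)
The plan is to build an explicit structure-preserving bijection $\Dmap\colon \derivations[\mathcal{X}] \to \derivationsp[\mathcal{X}]$ between the derivations of $\glct(\grammar,\Ps,\Xs)$ and $\speculation(\grammar,\Ps,\Xs)$ by structural induction, exploiting the fact that the two output grammars are \emph{rule-identical} except for the single recursive slashed rule (\cref{eq:lc-slash-recursive} versus \cref{eq:spec-slash-recursive}). Concretely, I would first note that the recovery rules, the frozen rules, and the slashed base case coincide verbatim in both definitions; hence on a derivation rooted at a terminal, an original nonterminal, or a frozen nonterminal, $\Dmap$ keeps the top rule fixed and simply recurses into the children (which may themselves contain slashed subtrees). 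All the real content therefore lives in the case of a derivation rooted at a slashed nonterminal $\lc{\ntY}{\alpha}$, and the theorem reduces to exhibiting, for every such symbol, a label-, yield-, and weight-preserving bijection between the GLCT-derivations and the speculation-derivations rooted there (identifying the correspondingly named nonterminals of the two output grammars).

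For the slashed case I would use the observation made precise in \cref{fig:schematic-transforms}, namely that GLCT keeps the \emph{numerator} fixed while speculation keeps the \emph{denominator} fixed. A GLCT-derivation of $\lc{\ntY}{\alpha}$ decomposes uniquely into a right-branching spine of recursive slashed rules following a chain $r_1,\dots,r_n \in \Ps$, say $r_i = (\ntX_i \to \ntX_{i-1}\,\vbeta_i)$ with $\ntX_0 = \alpha$ and $\ntX_n = \ntY$, capped by the base case $\lc{\ntY}{\ntY}\to\emptystring$, together with the off-spine subtrees expanding each $\vbeta_i$ (these are rooted at original symbols, so they are covered by the inductive hypothesis). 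The decomposition is unique because in GLCT the slashed child is always the unique last symbol of the right-hand side, so the spine is read off by repeatedly following it. A speculation-derivation of the same $\lc{\ntY}{\alpha}$ decomposes identically, except that its spine is left-branching and its slashed child is the unique first symbol of each right-hand side; it follows the same chain $r_1,\dots,r_n$ but terminates in $\lc{\alpha}{\alpha}\to\emptystring$. I would define $\Dmap$ on the slashed case to send the GLCT reassembly of a chain and its (recursively mapped) subtrees to the speculation reassembly of the same chain and the same subtrees, i.e., to reverse the branching direction of the spine while leaving the chain and the off-spine material untouched.

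It then remains to verify the three structure-preservation properties and bijectivity. Label preservation is immediate: both reassemblies have root $\lc{\ntY}{\alpha}$ (determined by the top of the chain $\ntY$ and its bottom $\alpha$), even though the \emph{internal} slashed labels differ ($\lc{\ntY}{\ntX_i}$ in GLCT versus $\lc{\ntX_i}{\alpha}$ in speculation). Yield preservation holds because, read left to right, both trees yield $\vbeta_1 \cdots \vbeta_n$ in the same order. For weight preservation I would expand both recursions and observe that each produces the same multiset of factors---the chain weights $\weightfunction{r_1},\dots,\weightfunction{r_n}$ together with the subtree weights---but accumulated in opposite orders along the spine; equality then follows from commutativity of $\otimes$, which is exactly where the standing assumption that the semiring is commutative is used. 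Bijectivity follows because the decomposition into (chain, off-spine subtrees) is unique in \emph{both} grammars and the two reassembly operations are mutually inverse, so $\Dmapinv$ is given by the symmetric construction and is structure-preserving by the same argument. Lifting from the slashed case to the complete symbol set $\mathcal{X}$ is then routine, since $\Dmap$ acts as the identity on all shared rules.

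I expect the main obstacle to be the careful bookkeeping in the slashed case: pinning down the unique spine decomposition in each grammar, checking that the off-spine subtrees attach in corresponding positions so that the yields line up exactly, and confirming that the two reassembly maps are genuine mutual inverses rather than merely weight-equal. The weight step is the one place where the argument is not purely combinatorial---because the spine multiplies its rule weights in reversed order, commutativity of $\otimes$ is essential and should be flagged explicitly. As a sanity check, one could instead route through \cref{prop/lang-relationship}: proving its speculation analogue (both $\lc{\ntY}{\alpha}$ nonterminals biject with $\derivations[\ntY]/\alpha$ and both $\frozen{\alpha}$ with $\derivations[\alpha]/\bot$) and composing the two bijections would yield the same conclusion, but the direct spine-reversal map is more explicit and makes the numerator/denominator duality transparent.
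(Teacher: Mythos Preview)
Your proposal is correct and takes essentially the same approach as the paper's own proof sketch: a structural induction that is trivial on terminals, frozen nonterminals, and original nonterminals (where the rules coincide verbatim), and whose only contentful case is the slashed one, handled by reversing the branching direction of the spine while keeping the underlying $\Ps$-chain and off-spine subtrees fixed. The paper likewise flags commutativity of $\otimes$ as the reason the reordered spine weights agree; your write-up is, if anything, more explicit about the unique spine decomposition and the inverse construction than the paper's sketch.
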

\noindent See \cref{thm/spec-glct/proof} for the proof sketch.
We also provide the first proof of equivalence for speculation.
\begin{theorem}
\label{thm/spec-equivalent}
For any grammar $\grammar \eq \Tuple{\nts, \terminals, \start, \rules}$, $\Ps \subseteq \rules$, and $\Xs \subseteq \terminals \cup \nts$: $\speculation(\grammar, \Ps, \Xs) \equiv_{\nts} \grammar$.
\end{theorem}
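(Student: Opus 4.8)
The plan is to obtain this statement as a corollary of the two equivalences already established—\cref{thm/glct-equivalence} and \cref{thm/spec-glct}—by composing the derivation bijections they supply. First, I would record two elementary closure properties of the relation $\equiv_{\mathcal{X}}$. Symmetry and transitivity: for each fixed symbol set $\mathcal{X}$, the inverse of a structure-preserving bijection is again label-, yield-, and weight-preserving, and the composite of two structure-preserving bijections is structure-preserving (each of the three preservation properties composes pointwise); hence $\equiv_{\mathcal{X}}$ is symmetric and transitive. Monotonicity under restriction: if $\mathcal{X}' \subseteq \mathcal{X}$ and $\grammar \equiv_{\mathcal{X}} \grammarp$ via some $\Dmap$, then, because $\Dmap$ is label-preserving, it carries $\derivations[\alpha]$ bijectively onto $\derivationsp[\alpha]$ for every $\alpha \in \mathcal{X}$, so the restriction of $\Dmap$ to $\derivations[\mathcal{X}']$ witnesses $\grammar \equiv_{\mathcal{X}'} \grammarp$.

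With these facts in hand, the argument is a two-step chain through the intermediate grammar $\grammarp \defeq \glct(\grammar, \Ps, \Xs)$. By \cref{thm/glct-equivalence}, $\grammar \equiv_{\nts} \grammarp$. By \cref{thm/spec-glct}, $\grammarp \equiv_{\mathcal{S}} \speculation(\grammar, \Ps, \Xs)$, where $\mathcal{S}$ is the complete set of symbols shared by the two output grammars (original, frozen, and slashed). Since the original nonterminals satisfy $\nts \subseteq \mathcal{S}$, the restriction property gives $\grammarp \equiv_{\nts} \speculation(\grammar, \Ps, \Xs)$. Composing the two $\nts$-equivalences by transitivity yields $\grammar \equiv_{\nts} \speculation(\grammar, \Ps, \Xs)$, and symmetry then delivers the claimed $\speculation(\grammar, \Ps, \Xs) \equiv_{\nts} \grammar$.

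I expect no serious obstacle, since all of the genuine mathematical content is already carried by \cref{thm/glct-equivalence,thm/spec-glct}; what remains is the bookkeeping that structure-preserving bijections compose and restrict. The one point that warrants a sentence of care is confirming that the intermediate grammar is literally the same object in both cited theorems—that the bijection of \cref{thm/glct-equivalence} lands in the $\nts$-derivations of $\glct(\grammar, \Ps, \Xs)$, which is exactly the domain from which the bijection of \cref{thm/spec-glct} departs once it is restricted to $\nts$—so that the two maps are genuinely composable. A fully self-contained alternative would bypass GLCT and instead verify the defining fixed-point equations \cref{eq:weighted-language-equations} directly for $\speculation(\grammar, \Ps, \Xs)$ via a decomposition analogous to \cref{prop/decomp}; however, this merely re-derives content we already possess, so I would favor the composition route as both shorter and more transparent.
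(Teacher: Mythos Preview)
Your proposal is correct and follows essentially the same approach as the paper, which derives the result directly from \cref{thm/glct-equivalence}, \cref{thm/spec-glct}, and the compositionality of bijective functions. Your extra care about restriction from the full symbol set to $\nts$ and about symmetry/transitivity of $\equiv_{\mathcal{X}}$ just makes explicit the bookkeeping that the paper leaves implicit.
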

\begin{proof}
The theorem follows directly from \cref{thm/glct-equivalence}, \cref{thm/spec-glct}, and the compositionality of bijective functions.
\end{proof}

\section{Left-Recursion Elimination}
\label{sec/left-recursion-elimination}

Motivated by the desire for efficient top-down parsing for which left-recursion poses challenges (\cref{sec/intro}), we describe how GLCT may be used to transform a possibly left-recursive grammar \grammar into a bijectively equivalent grammar \grammarp without left-recursion.\footnote{Note: when we say left recursion, we often mean \emph{unbounded} left recursion.
}  The bijective equivalence (\cref{thm/glct-equivalence}) ensures that we can apply an inverse transformation to the derivation tree of the transformed grammar into its corresponding derivation tree in the original grammar.  This section provides an efficient and (provably correct) recipe for left-recursion elimination using GLCT. We experiment with this recipe on natural language grammars in \cref{sec/experiments}.

Our left-recursion elimination recipe is based on a single application of GLCT, which appropriately chooses parameters \Xs and \Ps.  We describe how to determine these parameters by analyzing the structure of the rules in \grammar.  

We define the \defn{left-recursion depth} of a derivation tree $\lrdepth{\tree}$ as the length of the path from the root to the leftmost leaf node.  The left-recursion depth of a grammar is $\lrdepth{\grammar} \defeq \max_{\tree \in \derivations} \lrdepth{\tree}$.  We say that \grammar is \defn{left-recursive} iff $\lrdepth{\grammar}$ is unbounded. To analyze whether \grammar is left-recursive, we can analyze its left-recursion graph, which accurately characterizes the left-recursive paths from the root of a derivation to its leftmost leaf in the set of derivations $\derivations$.  The \defn{left-recursion graph} of the grammar \grammar is a labeled directed graph $G \eq \Tuple{N, E}$ with nodes $N \eq \terminals \cup \nts$ and edges $E \eq \{ (\ntX \xrightarrow{r} \alpha) \mid r \in \rules, r \eq (\wproduction{\ntX}{\alpha\,{\cdots}}{}) \}$. It should be clear that the \grammar is left-recursive iff $G$ has a cyclic subgraph.  We classify a rule $r$ as left-recursive if the edge labeled $r$ is an edge in any cyclic subgraph of $G$. To determine the set of left-recursive rules, we identify the strongly connected components (SCCs) of $G$ (e.g., using \citeposs{tarjan-1972-depth} algorithm).
The SCC analysis returns a function $\pi$ that maps each of $G$'s nodes to the identity of its SCC.  Then, a rule $r$ is left-recursive iff its corresponding edge $\alpha \xrightarrow{r} \beta$ satisfies $\pi(\alpha) \eq \pi(\beta)$.\footnote{Note that nullary rules cannot be left-recursive.}  To ensure that left recursion is eliminated, \Ps must include all left-recursive rules.\looseness=-1

We use the following set to provide a sufficient condition on \Xs to eliminate left recursion:
\begin{flalign}
&\bottoms{\Ps} \defeq \label{eq:bottoms}\\
&\qquad (\terminals \cup \{ \ntX  
\mid (\ntX \xrightarrow{r} \alpha) \in E, 
r \in \rules \setminus \Ps \}) \nonumber \\ 
&\qquad \cap \{ \alpha \mid (\wproduction{\ntX}{\alpha \cdots}{}) \in \Ps \} \nonumber
\label{eq:bottoms}
)    
\end{flalign}
This set captures the set of nodes that may appear at the bottom of a spine (for the given \Ps).  This is because the spine is defined as the longest sequence of rules in \Ps along the left of a derivation; thus, a spine can end in one of two ways (1) it reaches a terminal, or (2) it encounters a rule outside of \Ps.  Thus, the bottom elements of the spine are the set of terminals, and the set of nodes with at least one $(\rules \setminus \Ps)$-labeled outgoing edge---which we refine to nodes that might appear in the spine (i.e., those in the leftmost position of the rules in \Ps).

With these definitions in place, we can provide sufficient conditions on the GLCT parameter sets that will remove left recursion:

\begin{restatable}[Left-recursion elimination]{theorem}{lrElim}
\label{thm/left-recursion-elimination}
Suppose that $\grammarp \eq \textsc{trim}(\glct(\grammar,\Ps,\Xs))$ where
\begin{itemize}[leftmargin=*,topsep=1pt,itemsep=1pt,parsep=1pt]
\item \grammar has no unary rules
\item $\Ps \supseteq$ the left-recursive rules in \grammar
\item $\Xs \supseteq \bottoms{\Ps}$
\end{itemize}
Then, \grammarp is not left-recursive. Moreover, $\lrdepth{\grammarp} \le 2 \!\cdot\! C$ where $C$ is the number of SCCs in the left-recursion graph for \grammar.
\end{restatable}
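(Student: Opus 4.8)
The plan is to bound, uniformly by $2C$, the leftmost path (from the root to the leftmost leaf) of every derivation \tree of \grammarp; since $\lrdepth{\tree}$ is exactly the length of this path and $\lrdepth{\grammarp} = \max_\tree \lrdepth{\tree}$, a uniform bound simultaneously yields both conclusions (finiteness of $\lrdepth{\grammarp}$ gives non-left-recursiveness). First I would record the shape of leftmost children: an original nonterminal is rewritten only by the recovery rules \cref{eq:lc-recovery-frozen,eq:lc-recovery-slash}, so its leftmost child is always frozen; a frozen nonterminal is rewritten by \cref{eq:lc-frozen-base} (leftmost child original, or a nullary leaf) or by \cref{eq:lc-frozen-recursive} (leftmost child frozen); and, because \grammar has no unary rules, every rule in \Ps has arity $\ge 2$, so the $\vbeta$ in \cref{eq:lc-slash-recursive} is nonempty and a slashed nonterminal is never the leftmost symbol of any rule. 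Hence slashed nonterminals never occur on the leftmost path, which therefore consists solely of original and frozen symbols.

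The key structural step is to show that, after trimming, frozen runs collapse so that the leftmost path strictly alternates original, frozen, original, frozen, $\dots$. I would prove the lemma: if $\ntY \notin \Xs$ and $\ntY$ is the leftmost symbol of some rule in \Ps, then $\frozen{\ntY}$ is useless. Indeed, $\ntY \notin \Xs \supseteq \bottoms{\Ps}$ together with the fact that $\ntY$ lies in the second set of the intersection \cref{eq:bottoms} forces the first set to fail, i.e.\ $\ntY$ is a nonterminal all of whose rules lie in \Ps. Then $\frozen{\ntY}$ has no \cref{eq:lc-frozen-base} expansion (its only base case), and every \cref{eq:lc-frozen-recursive} expansion rewrites it to $\frozen{\alpha}$ with $\alpha \notin \Xs$ again a leftmost symbol of a \Ps-rule; re-applying the argument, $\alpha$ is a nonterminal with all rules in \Ps, so this frozen descent never reaches a base case and no finite derivation of $\frozen{\ntY}$ exists. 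Every frozen-to-frozen edge arising from \cref{eq:lc-frozen-recursive} rewrites $\frozen{\ntX_1}$ to such a useless $\frozen{\ntX_2}$ and is thus removed by trimming; consequently, in \grammarp a frozen node's leftmost child is always original (or a leaf), establishing the alternation.

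Next I would cap the number of original symbols on the path at $C$ by a rank argument. Take a reverse-topological numbering $\rho\colon \terminals \cup \nts \to \{1,\dots,C\}$ of the SCC condensation of the left-recursion graph $G$, so that every $G$-edge is non-increasing in $\rho$ and strictly decreasing exactly when it crosses SCCs, equivalently exactly when its rule is not left-recursive, hence (as \Ps contains all left-recursive rules) whenever its rule lies in $\rules \setminus \Ps$. Writing the originals of the alternating path as $\ntX^{(0)}, \ntX^{(1)}, \dots$ with $\ntX^{(i)} \to \frozen{\gamma^{(i)}} \to \ntX^{(i+1)}$, I claim $\rho(\ntX^{(i)}) > \rho(\ntX^{(i+1)})$. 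The second edge uses \cref{eq:lc-frozen-base} with $\gamma^{(i)} \to \ntX^{(i+1)} \cdots \in \rules \setminus \Ps$, a cross-SCC $G$-edge, so $\rho(\gamma^{(i)}) > \rho(\ntX^{(i+1)})$. For the first edge, if it is \cref{eq:lc-recovery-frozen} then $\gamma^{(i)} = \ntX^{(i)}$ and the ranks coincide; if it is \cref{eq:lc-recovery-slash}, its slashed sibling $\lc{\ntX^{(i)}}{\gamma^{(i)}}$ survives trimming, so it has a derivation, and unwinding \cref{eq:lc-slash-recursive,eq:lc-slash-base} shows that any such derivation exhibits a directed path from $\ntX^{(i)}$ to $\gamma^{(i)}$ in $G$ along \Ps-edges, whence $\rho(\ntX^{(i)}) \ge \rho(\gamma^{(i)})$. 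Either way $\rho(\ntX^{(i)}) \ge \rho(\gamma^{(i)}) > \rho(\ntX^{(i+1)})$.

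Finally, strict monotonicity of $\rho$ over the originals (valued in $\{1,\dots,C\}$) caps them at $C$; since each is followed by at most one frozen symbol, the leftmost path has at most $2C$ nodes, giving $\lrdepth{\tree} \le 2C$ for every \tree and hence $\lrdepth{\grammarp} \le 2C$. I expect the main obstacle to be the first edge of the rank argument: a recovery-slash step \cref{eq:lc-recovery-slash} may select an arbitrary $\alpha = \gamma^{(i)} \in \Xs$ whose rank could a priori exceed $\rho(\ntX^{(i)})$, and resolving this requires the observation that usefulness of the slashed sibling forces $\gamma^{(i)}$ to be reachable from $\ntX^{(i)}$ in $G$ along \Ps-edges, which restores $\rho(\ntX^{(i)}) \ge \rho(\gamma^{(i)})$; the trimming/uselessness lemma that collapses frozen runs is the other delicate ingredient.
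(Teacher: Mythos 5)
Your proof is correct, but it takes a genuinely different route from the paper's. The paper argues \emph{via the derivation bijection} \Dmap of \cref{thm/glct-equivalence}: it segments the left edge of an arbitrary original derivation into maximal spines, shows that under the hypotheses each spine's left corner sits at the very bottom of that spine so the spine contracts to exactly two nodes in the image (the ``accordion effect''), and then bounds the number of spines by $C$ because each spine-to-spine transition uses a rule outside \Ps and hence completes an SCC. You instead work \emph{intrinsically in the output grammar}: your uselessness lemma (a non-\Xs{} nonterminal that heads the left edge of a \Ps-rule has a frozen version with no finite derivation) shows that every instance of \cref{eq:lc-frozen-recursive} is discarded by trimming, which forces strict original/frozen alternation on the leftmost path; a reverse-topological rank on the SCC condensation then strictly decreases across each original--frozen--original step, with the \cref{eq:lc-recovery-slash} case handled by observing that usefulness of the slashed sibling forces a \Ps-path from numerator to denominator in $G$. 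The two arguments are dual: the paper shows the \emph{useful} derivations all have the collapsed form, while you show the \emph{uncollapsed} rules are all useless. Your version buys self-containment (it does not invoke \cref{thm/glct-equivalence} or \Dmap{} at all), makes the role of $\textsc{trim}$ in the theorem statement explicit rather than implicit, and turns the $2C$ bound into an immediate consequence of a strictly decreasing integer rank; the paper's version buys a concrete picture of how each input derivation is reshaped, which it reuses elsewhere.

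One small caveat, shared with the paper's own proof: your inference that $\ntY \notin \bottoms{\Ps}$ forces \emph{all} of $\ntY$'s rules to lie in \Ps{} silently assumes \grammar{} is nullary-free. A nullary rule of $\ntY$ lies outside \Ps{} yet contributes no edge to $E$, so it does not place $\ntY$ in $\bottoms{\Ps}$, while it \emph{does} give $\frozen{\ntY}$ a base case via \cref{eq:lc-frozen-base}, breaking the uselessness lemma. The theorem as stated only excludes unary rules, and the paper's accordion argument has exactly the same hidden dependence (its claim that every spine-ending node lies in \Xs{} fails for a node whose only non-\Ps{} rule is nullary); since the paper elsewhere prescribes nullary elimination as preprocessing, this is a gap in the statement rather than in your argument, but it is worth flagging the assumption explicitly where you use it.
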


\noindent See \cref{sec/appendix_left_recursion} for the proof. 

\paragraph{Example.}
In \cref{fig:motivating_example}, we made use of \cref{thm/left-recursion-elimination} to remove the left-recursion from \nt{NP} to \nt{NP}, applying GLCT with $\Xs=\{\nt{NP}\}$ and $\Ps=\allowbreak \{\wproduction{\start}{\nt{NP} \, \nt{VP}}{},\allowbreak \production{\nt{NP}}{\nt{PossP} \, \nt{NN}},\allowbreak \production{\nt{PossP}}{\nt{NP} \; \sym{\text{'}s}}\}$.\footnote{Note that omitting $\production{\start}{\nt{NP} \, \nt{VP}}$ from \Ps would have also eliminated left recursion in this example, but we would have obtained a different output tree in the figure.}

\paragraph{Our recipe.}
We take \Ps as the set of left-recursive rules in \grammar and \Xs as \bottoms{\Ps}.\footnote{Our choice for \Ps is consistent with the recommendation for SLCT in \citet{johnson-roark-2000-compact}.} This minimizes the upper bound given by \cref{prop/num-rules-bound} subject to the constraints given in \cref{thm/left-recursion-elimination}.

\paragraph{Special cases.}
\cref{thm/left-recursion-elimination} implies that the basic left-corner transformation and the selective left-corner transformations (with \Ps $\supseteq$ the left-recursive rules) will eliminate left recursion.  Experimentally, we found that our recipe produces a slightly smaller grammar than the selective option (see \cref{sec/experiments}).

\paragraph{Unary rules.}
The reason \cref{thm/left-recursion-elimination} requires that \grammar is unary-free
is that the left-corner transformation cannot remove unary cycles of this type.\footnote{Prior left-corner transformations \citep{johnson-roark-2000-compact, moore-2000-removing} are limited in the same manner.} To see why, note that $\vbeta \eq \emptystring$ for a unary rule \cref{eq:lc-slash-recursive}; thus, the transformed rule will have a slashed nonterminal in its leftmost position, so it may be left-recursive.  Fortunately, unary rule cycles can be eliminated from WCFGs by standard preprocessing methods (e.g., \citet[\S 4.5]{stolcke-1995-efficient} and \citet[\S E]{opedal-et-al-2023}). 
However, we note that eliminating such unary chain cycles does not produce an \nts-bijectively equivalent grammar as infinitely many derivations are mapped to a single one that accounts for the total weight of all of them.

\paragraph{Nullary rules.} 
We also note that the case where the \grammar may derive \emptystring as its leftmost constituent also poses a challenge for top-down parsers.  For example, that would be the case in \cref{fig:motivating_example} if $\production{\nt{PossP}}{\nt{NP}\ \nt{POS}}$ was replaced by $\production{\nt{PossP}}{\ntX\ \nt{NP}\ \nt{POS}}$ and $\production{\ntX}{\emptystring}$; this grammar is not left-recursive, but the subgoal of recognizing an $\nt{NP}$ in top-down parser will still, unfortunately, lead to infinite recursion.  Thus, a complete solution to transforming a grammar into a top-down-parser-friendly grammar should also treat these cases. To that end, we can transform the original grammar into an equivalent nullary-free version with standard methods \citep[\S F]{opedal-et-al-2023} before applying our GLCT-based left-recursion elimination recipe. As with unary rule elimination, nullary rule elimination does not produce an \nts-bijectively equivalent grammar.

\subsection{Experiments}
\label{sec/experiments}

In this section, we investigate how much the grammar size grows in practice when our GLCT recipe is used to eliminate left recursion.  We compare our results to SLCT with top-down factoring (\cref{sec/glct}) to see whether the additional degree of freedom given by \Xs leads to any reduction in size.
We apply both transformations to nine grammars of different languages: Basque, English, French, German, Hebrew, Hungarian, Korean, Polish, and Swedish.
We use the ATIS grammar \citep{dahl-etal-1994-expanding} as our English grammar.\footnote{We selected the (boolean-weighted) ATIS grammar because it was used in prior work \citep{moore-2000-removing}.  We note, however, that---despite our best efforts---we were unable to replicate \citeposs{moore-2000-removing} \emph{exact} grammar size on it.}
We derived the other grammars from the SPMRL 2013/2014 shared tasks treebanks \citep{seddah-etal-2013-overview, seddah-etal-2014-introducing}.\footnote{Specifically, we load all trees from the SPMRL 5k training dataset, delete the morphological annotations, collapse unary chains like $\ntX \to \ntY \to \ntZ \to \valpha$ into $\ntX \to \valpha$, and create a grammar from the remaining rules. The weights of the SPMRL grammars are set using maximum-likelihood estimation. None of the treebanks contained nullary rules.}

\begin{table}[t]
\centering
\footnotesize
\resizebox{\columnwidth}{!}{%
\setlength{\tabcolsep}{6pt}
\begin{tabular}{@{}llrrr@{}}    \toprule
Language  & \multirow{2}{*}{Method} & \multirow{2}{*}{Raw} & \multirow{2}{*}{+Trim} & \multirow{2}{*}{$-\emptystring$'s} \\ 
    (size)  &  &  &  &  \\ \midrule
   Basque &   SLCT &  3,354,445 &   245,989 &   411,023 \\
 (73,173) &   GLCT &    644,125 &   245,923 &   411,023 \\ \midrule
  English &   SLCT &    514,338 &    26,655 &    46,088 \\
 (21,272) &   GLCT &    203,664 &    26,289 &    46,088 \\ \midrule
   French &   SLCT &  5,902,552 &   106,860 &   173,375 \\
(105,896) &   GLCT &    147,860 &   106,628 &   173,375 \\ \midrule
   German &   SLCT & 21,204,060 &   272,406 &   434,787 \\
(100,346) &   GLCT &  1,930,386 &   271,752 &   434,787 \\ \midrule
   Hebrew &   SLCT & 14,304,366 &   564,456 &   979,040 \\
 (84,648) &   GLCT &  2,910,538 &   564,074 &   979,040 \\ \midrule
Hungarian &   SLCT & 17,823,603 &   151,373 &   242,360 \\
(134,461) &   GLCT &    748,398 &   151,013 &   242,360 \\ \midrule
   Korean &   SLCT & 54,575,826 &    87,937 &    96,023 \\
 (59,557) &   GLCT &  3,706,444 &    86,529 &    96,023 \\ \midrule
   Polish &   SLCT &  2,845,430 &    61,333 &    79,341 \\
 (41,957) &   GLCT &    177,610 &    61,253 &    79,341 \\ \midrule
  Swedish &   SLCT & 20,483,899 & 1,894,346 & 3,551,917 \\
 (79,137) &   GLCT &  6,896,791 & 1,871,789 & 3,551,917 \\ \bottomrule
\end{tabular}%
}
\caption{Results of applying GLCT and SLCT on the ATIS and SPMRL grammars broken down by each of the nine languages.  We present the resulting size in the raw output grammar (Raw), trimming (+Trim), and binarization plus nullary removal ($-\emptystring$'s).} \label{table:experiments}
\end{table}

\paragraph{Experimental setup.} 
For GLCT, we set \Ps and \Xs according to our recipe.  For SLCT, we set \Ps according to \cref{thm/left-recursion-elimination}'s conditions for removing left-recursion.
We compare the grammar size and the number of rules of the raw output grammar to those of the input grammar. However, the raw output sizes can be reduced using useless rule filters (discussed in \cref{sec/optimizations} and \cref{sec/filtered-glct-equations}), so we additionally apply trimming to the output grammars. When parsing, it is often practical to first binarize the grammar and remove nullary rules, so we perform those postprocessing steps as well.\looseness=-1

As a sanity check, we verify that left recursion is removed in all settings by checking that the left-recursion graph of the output grammar is acyclic.
We present the results as evaluated on grammar size in \cref{table:experiments}.  
\cref{sec/additional-experiments} provides further results in terms of the number of rules.

\paragraph{Discussion.} Interestingly, the increase in size compared to the input grammar varies a lot between languages.
Previous work \citep{johnson-roark-2000-compact, moore-2000-removing} only evaluated on English and thus appear to have underestimated the blow-up caused by the left-corner transformation when applied to natural language grammars. Compare, for instance, the ratio between the trimmed size and the original size in \cref{table:experiments} of English ($1.2$) to Basque ($3.4$), Hebrew ($6.7$), and Swedish ($23.7$).
By \cref{prop/num-rules-bound}, the number of rules in the output grammar scales with $|\Ps|$, which by \cref{thm/left-recursion-elimination} is set as the left-recursive rules. 

The GLCT produces smaller grammars than the SLCT for all languages before either of the postprocessing steps. This difference is (almost) eliminated post-trimming, however, which is unsurprising given that SLCT is a special case of GLCT (\cref{sec/glct}). The small difference in size after trimming happens since two rules of the form $\production{\nt{X}}{\frozen{\nt{X}}\ \lc{\nt{X}}{\nt{X}}}$ \cref{eq:lc-recovery-slash} and $\production{\lc{\nt{X}}{\nt{X}}}{\varepsilon}$ \cref{eq:lc-slash-base} in SLCT are replaced by one rule $\production{\nt{X}}{\frozen{\nt{X}}}$ \cref{eq:lc-recovery-frozen} in GLCT. However, this difference disappears after nullary removal.

\section{Conclusion}
This work generalized the left-corner transformation to operate not only on a subset of rules but also on a subset of nonterminals. We achieve this by adapting frozen nonterminals from the speculation transformation. We exposed a tight connection between generalized left-corner transformation and speculation (\cref{thm/spec-glct}). Finally, and importantly, we proved the transformation's correctness (\cref{thm/glct-equivalence}) and provided precise sufficient conditions for when it eliminates left recursion (\cref{thm/left-recursion-elimination}).\looseness=-1

\section*{Limitations}
Parsing runs in time proportional to the grammar size. \cref{sec/experiments} shows that we obtain the same grammar size after postprocessing from our method and the selective left-corner transformation, which gave us no reason to provide an empirical comparison on parsing runtime.

Moreover, it is thus of practical importance to restrict the growth of the grammar constant. 
We have discussed a theoretical bound for grammar growth in \cref{prop/num-rules-bound}, investigated it empirically in \cref{sec/experiments}, and provided further tricks to reduce it in \cref{sec/filtered-glct-equations}.
Orthogonally to the left-corner transformation itself, it is possible to factor the grammar so that the grammar size is minimally affected by the transformation. Intuitively, reducing the number of left-recursive rules in \rules will also reduce the number of rules that are required in \Ps, which, in turn, leads to fewer rules in the output grammar. We did not present any such preprocessing techniques here, but \citet{moore-2000-removing} provides a reference for two methods: left-factoring and non-left-recursive grouping. 
\citet{johnson-roark-2000-compact} give a second factoring trick in addition to top-down factoring (see \cref{fn:topdown-factoring}), which is similar to \citeposs{moore-2000-removing} left-factoring.
We also mention that \citeposs{vieira2023automating} search-based technique for optimizing weighted logic programs could be directly applied to grammars. In particular, the search over sequences of define--unfold--fold transformations can be used to find a smaller grammar that encodes the same weighted language.

There appear to be connections between our notion of slashed nonterminals and the left quotient of formal languages that we did not explore in this paper.\footnote{The left quotient of a WCFG by a weighted regular language can be represented as another WCFG using a modified intersection construction \citep{BarHillel61,pasti-etal-2023-intersection}---see \citet{Li_quotient} for details.} For example, the simplest case of the left quotient is the \citet{brzozowski64} derivative. The Brzozowski derivative of \grammar with respect to $\sym{a} \in \terminals$ is equal to the weighted language of $\lc{\start}{\sym{a}}$ in the output grammar \grammarp produced by speculation or GLCT, provided that the \grammar is nullary-free, $\Ps \eq \rules$, and $\sym{a} \in \Xs$.  We suspect that other interesting connections are worth formalizing and exploring further.

Finally, we note that we could extend our transformation to deal with nullary rules directly rather than eliminating them by preprocessing (as discussed in \cref{sec/left-recursion-elimination}).  The idea is to modify \cref{eq:lc-slash-recursive} in GLCT so that the slashed nonterminal on its right-hand side is formed from the leftmost nonterminal that derives something \emph{other than} \emptystring, rather than the leftmost symbol. For this extension to work out, we require that the grammar is preprocessed such that each nonterminal is replaced by two versions: one that generates only \emptystring, and one that generates anything else.
Preprocessing the grammar in this way is also done in nullary rule elimination (see \citet[\S F]{opedal-et-al-2023} for details).

\section*{Ethical Statement}
We do not foresee any ethical issues with our work.

\section*{Acknowledgments}
We thank Alex Warstadt, Clemente Pasti, Ethan Wilcox, Jason Eisner, and Josef Valvoda for valuable feedback and discussions. We especially thank Clemente for pointing out the nullary removal optimization in \cref{prop/fast-null-weight}. We also thank the reviewers for their useful comments, suggestions, and references to related work.  Andreas Opedal acknowledges funding from the Max Planck ETH Center for Learning Systems.

\bibliography{main.bib}
\bibliographystyle{acl_natbib}

\appendix
\newpage
\onecolumn

\section{Proof of \cref{prop/decomp} (Decomposition)}
\label{prop/decomposition/proof}

\Decomposition*
\begin{proof}
\begin{align}
\grammar_{\ntX}
&= \grammarp_{\ntX} & \proofexplain{by \cref{thm/glct-equivalence}} \label{eq:decomp1}\\
&= \monstersum{(\wproduction{\ntX}{\beta_1 \cdots \beta_K}{w}) \in \rulesp}
w \circ \grammarp_{\beta_1} \circ \cdots \circ \grammarp_{\beta_K} & \proofexplain{by Eq.~\ref{eq:weighted-language-equations}} \label{eq:decomp2} \\
&= \monstersum{(\wproduction{\ntX}{\frozen{\ntX}}{\one}) \in \rulesp}
\one \circ \grammarp_{\frozen{\ntX}}
\oplus \monstersum{(\wproduction{\ntX}{\frozen{\alpha}\; \lc{\ntX}{\alpha}}{\one}) \in \rulesp}
\one \circ \grammarp_{\frozen{\alpha}} \circ \grammarp_{\lc{\ntX}{\alpha}} & \proofexplain{by \cref{def/glct}} \label{eq:decomp3} \\
&= \grammarp_{\frozen{\ntX}} \oplus
\monstersum{\alpha \in \Xs} \grammarp_{\frozen{\alpha}} \circ \grammarp_{\lc{\ntX}{\alpha}} & \proofexplain{by \cref{def/glct} and algebra} \label{eq:decomp4}
\end{align}

\noindent Note that \cref{eq:decomp3} specializes the sum to the only kinds of rules that can build $\ntX \in \nts$: rules \cref{eq:lc-recovery-frozen} and \cref{eq:lc-recovery-slash}.
\end{proof}

\section{Proof of \cref{prop/lang-relationship} (Weighted Language Relationship)}
\label{prop/lang-relationship/proof}

\begin{lemma}\label{lemma:frozen-bijection}
$\forall \alpha \in \nts \cup \terminals$, there exists a weight- and yield- preserving bijection between $\derivations[\alpha]/\bot$ and $\derivationsp[\frozen{\alpha}]$.
\end{lemma}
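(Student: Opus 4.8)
The plan is to construct the bijection explicitly by recursion on the structure of the input derivation, following the correspondence depicted in \cref{diagram/frozen}, and to lean on \cref{thm/glct-equivalence} for the subtrees that hang off the spine. Throughout, write $\phi_\alpha\colon \derivations[\alpha]/\bot \to \derivationsp[\frozen{\alpha}]$ for the maps being built.

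The first step is a recursive characterization of membership in $\derivations[\alpha]/\bot$, read off directly from the definitions of \spine{\tree} and \leftcorner{\tree}. A tree \tree rooted at $\alpha$ satisfies $\leftcorner{\tree} \eq \bot$ exactly when $\alpha \notin \Xs$ and, moreover, either (i) the root rule lies outside \Ps (so the spine has length zero), or (ii) the root rule $\alpha \to \gamma_1 \cdots \gamma_K$ lies in \Ps and its leftmost subtree \tree[1] (rooted at $\gamma_1$) itself satisfies $\leftcorner{\tree[1]} \eq \bot$, i.e. $\tree[1] \in \derivations[\gamma_1]/\bot$. The key observation is that, when the root rule is in \Ps, the spine of \tree is that rule followed by the spine of \tree[1], so the $\Ps$-reachable nodes of \tree are the root together with those of \tree[1]; hence none carries a label in \Xs iff $\alpha \notin \Xs$ and none of \tree[1]'s does. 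In case (ii) this forces $\gamma_1 \notin \Xs$, which is precisely the side condition on the frozen recursive rule \cref{eq:lc-frozen-recursive}.

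I would then define $\phi$ so that it mirrors this recursion against the two frozen rule schemas. In the base case (i) I apply the frozen base rule \cref{eq:lc-frozen-base} $\frozen{\alpha} \to \gamma_1 \cdots \gamma_K$ (which carries the weight of its source rule) and attach $\dmap{\tree[1]}, \ldots, \dmap{\tree[K]}$, where \Dmap is the structure-preserving bijection supplied by \cref{thm/glct-equivalence}, taken to be the identity on terminal leaves; when $\alpha \in \terminals$ this degenerates to the identity on the leaf $\frozen{\alpha} \eq \alpha$. In the recursive case (ii) I apply \cref{eq:lc-frozen-recursive} $\frozen{\alpha} \to \frozen{\gamma_1}\, \gamma_2 \cdots \gamma_K$, place $\phi_{\gamma_1}(\tree[1])$ under $\frozen{\gamma_1}$ by the inductive hypothesis, and attach $\dmap{\tree[2]}, \ldots, \dmap{\tree[K]}$ to the remaining children. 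The recursion terminates because \tree[1] is strictly smaller than \tree, and the resulting tree is a legal $\frozen{\alpha}$-derivation whose left spine is exactly the frozen copy of \spine{\tree} with all ribs routed through \Dmap, matching \cref{diagram/frozen}.

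To finish, I would verify the three required properties. Weight- and yield-preservation follow by induction: each frozen rule carries the weight of its originating rule, \Dmap preserves weight and yield, and the children keep their left-to-right order so their yields concatenate identically. Bijectivity I would establish by exhibiting the inverse $\psi$, which inspects the root rule of a $\frozen{\alpha}$-derivation (necessarily an instance of \cref{eq:lc-frozen-base} or of \cref{eq:lc-frozen-recursive}, and the two are distinguishable by whether the leftmost child is frozen), inverts the off-spine children with \Dmapinv, and recurses on the single frozen child; a routine induction on height shows $\psi \circ \phi$ and $\phi \circ \psi$ are identities. The main obstacle is the first step: pinning down the recursive characterization of $\leftcorner{\tree} \eq \bot$ and confirming that its case split lines up exactly with the two frozen schemas---in particular that case (ii) forces $\gamma_1 \notin \Xs$, matching the side condition of \cref{eq:lc-frozen-recursive}. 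Once that correspondence is secured, the construction and the inductive checks are mechanical.
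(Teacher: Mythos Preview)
Your proposal is correct and takes essentially the same approach as the paper: freeze the spine node by node, route the off-spine subtrees through the structure-preserving bijection \Dmap supplied by \cref{thm/glct-equivalence}, and invert by undoing these steps with \Dmapinv. The only difference is presentational---you peel off one spine rule at a time via structural recursion, whereas the paper's sketch unrolls the entire spine at once and appeals to the diagram; one small caveat is that your distinguishability criterion for the inverse (``whether the leftmost child is frozen'') fails when the leftmost symbol is a terminal, but this is harmless since the schema is determined by whether the source rule lies in \Ps, and in that boundary case both branches of your inverse coincide anyway.
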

\begin{proof}[Proof (Sketch).]
On the left, we have a prototypical derivation from $\derivations[\alpha]/\bot$ with its spine exposed. Here $\alpha = \spineNT{K}$.  By definition, the spine elements $\spineNT{1}, \ldots \spineNT{K} \notin \Xs$. On the right, we have its corresponding derivation in $\derivationsp[\frozen{\alpha}]$:
\vspace{-\baselineskip}
\begin{align*}
\begin{tikzpicture}[
  scale=.5, sibling distance=2cm, CenterFix,
  every node/.style={font=\scriptsize},
]
\node {{\spineNT{K}}}
    child { node(foo) { {\spineNT{K-1}} }  
      child { node {}      edge from parent[draw=none]
        child { node(fun) { {\spineNT{2}} }    edge from parent[draw=none] 
          child { node { {\spineNT{1}} }
            child { node[yshift=-.4cm] { {\!\!\MySubTree{\spineRest{0}\!\!\!} } }  }
          }
          child { node[yshift=-.4cm] { {\!\!\!\MySubTree{\spineRest{1}}\!\!\!} } }
        }
        child { node {}  edge from parent[draw=none] }   
      }
      child { node[yshift=-.4cm,xshift=-.5cm] { {\!\!\!\MySubTree{\spineRest{K-2}}\!\!\!} } }
    }
    child { node[yshift=-.4cm] { {\!\!\!\MySubTree{\spineRest{K-1}}\!\!\!} } }
;
\draw[sloped] (foo) -- node[fill=white,inner sep=1pt] {\Cdots} (fun);
\end{tikzpicture}
\Longleftrightarrow
\begin{tikzpicture}[
  scale=.5, sibling distance=3cm, CenterFix,
  every node/.style={font=\scriptsize},
]
\node {\frozen{\spineNT{K}}}
    child { node(boo) { \frozen{\spineNT{K-1}} }   
      child { node {}      edge from parent[draw=none]
        child { node(bun) { \frozen{\spineNT{2}} }    edge from parent[draw=none] 
          child { node { \frozen{\spineNT{1}} }
            child { node[yshift=-.4cm] { \dmap{\!\!\MySubTree{\spineRest{0}\!\!\!} } }  }
          }
          child { node[yshift=-.4cm] { \dmap{\!\!\!\MySubTree{\spineRest{1}}\!\!\!} } }
        }
        child { node {}  edge from parent[draw=none] }   
      }
      child { node[yshift=-.4cm,xshift=-.5cm] { \dmap{\!\!\!\MySubTree{\spineRest{K-2}}\!\!\!} } }
    }
    child { node[yshift=-.4cm] { \dmap{\!\!\!\MySubTree{\spineRest{K-1}}\!\!\!} } }
;
\draw[sloped] (boo) -- node[fill=white,inner sep=1pt] {\Cdots} (bun);
\end{tikzpicture}
\end{align*}
where the function \Dmap is \cref{alg:derivation-mapping}.  Call the function that maps the left tree to the right one $\psi$; all it does is freeze the spine and then call \Dmap on the $\vbeta$ subtrees.  Recall that \cref{thm/glct-equivalence} established that \Dmap is a \nts-structure-preserving bijection.  Thus, it is straightforward to see that $\psi$ is also weight- and yield-preserving bijection, as its inverse $\psi^{-1}$ undoes these steps and calls $\Dmapinv$ on the subtrees. This proves \cref{lemma:frozen-bijection}.
\end{proof}

\begin{lemma}\label{lemma:slash-bijection}
$\forall \ntX \in \nts, \alpha \in \nts \cup \terminals$, there exists a weight- and yield-preserving bijection between $\derivations[\ntX]/\alpha$ and $\derivationsp[\lc{\ntX}{\alpha}]$.
\end{lemma}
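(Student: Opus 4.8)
The plan is to follow the template of \cref{lemma:frozen-bijection}, replacing its ``freeze the spine'' step with a ``rotate the spine'' step. \textbf{Left normal form.} First I would expose the general shape of a tree in $\derivations[\ntX]/\alpha$. By definition every such tree equals $\tree/\leftcorner{\tree}$ for some $\tree \in \derivations[\ntX]$ whose left corner is labeled $\alpha$, and by the definitions of spine and left corner the path from the root down to the left corner uses only rules in \Ps. Writing those rules as $(\spineNT{m} \to \spineNT{m-1}\,\spineRest{m-1}), \ldots, (\spineNT{2} \to \spineNT{1}\,\spineRest{1})$ with $\spineNT{m} = \ntX$ and $\spineNT{1} = \alpha$, the tree $\tree/\leftcorner{\tree}$ is exactly the left-branching comb along this spine, with the entire $\alpha$-subtree deleted (its slot holding the empty subtree) and the forests $\spineRest{1}, \ldots, \spineRest{m-1}$ hanging to the right.

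\textbf{Right normal form.} Next I would pin down $\derivationsp[\lc{\ntX}{\alpha}]$. The only rules of \grammarp that can build a slashed symbol are the base case \cref{eq:lc-slash-base} and the recursive case \cref{eq:lc-slash-recursive}, and both keep the numerator fixed while moving the denominator up the spine. A short induction on derivation height then shows that every derivation of $\lc{\ntX}{\alpha}$ is the right-branching comb that, for each spine rule $\spineNT{i+1} \to \spineNT{i}\,\spineRest{i} \in \Ps$, applies the instance $\lc{\ntX}{\spineNT{i}} \to \spineRest{i}\;\lc{\ntX}{\spineNT{i+1}}$ of \cref{eq:lc-slash-recursive}, terminating with $\lc{\ntX}{\ntX} \to \emptystring$ from \cref{eq:lc-slash-base}; the subtrees under each $\spineRest{i}$ are arbitrary \grammarp-derivations of the original symbols therein. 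This is precisely the slashed (right) portion of \cref{diagram/glct}.

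I would then define $\phi\colon \derivations[\ntX]/\alpha \to \derivationsp[\lc{\ntX}{\alpha}]$ by reading the spine off the left normal form, re-assembling it into the right-branching skeleton above, applying \Dmap (\cref{alg:derivation-mapping}) to each subtree under $\spineRest{i}$, and closing the chain with the empty base-case subtree. Its inverse reads the skeleton of a slashed derivation top-down to recover the same sequence of \Ps-rules, rebuilds the left comb with an empty left corner at $\alpha$, and applies \Dmapinv (\cref{alg:inverse-derivation-mapping}) to the subtrees. Since \Dmap and \Dmapinv are mutually inverse by \cref{thm/glct-equivalence} and the re-bracketing is a syntactic bijection, $\phi$ is a bijection (the degenerate case $m = 1$, i.e.\ $\ntX = \alpha$, is the base case on both sides). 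Yield preservation holds because both sides concatenate exactly $\treeyield{\spineRest{1}} \circ \cdots \circ \treeyield{\spineRest{m-1}}$---the deleted corner and the base case each yield \emptystring, and \Dmap is yield-preserving---and weight preservation holds because \cref{eq:lc-slash-recursive} copies the weight of its generating \Ps-rule while \cref{eq:lc-slash-base} contributes \one, so both sides multiply the same spine weights and the same subtree weights (which \Dmap preserves), using commutativity of the semiring to reconcile the reordering.

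I expect the real work to be the two structural characterizations: arguing that the spine/left-corner decomposition of a tree in $\derivations[\ntX]/\alpha$ is well defined and exhaustive, and that \cref{eq:lc-slash-recursive} forces every slashed derivation into the claimed right-branching normal form. Once both normal forms are established, $\phi$ is just a re-bracketing composed with the already-proved bijection \Dmap, and the preservation properties follow immediately.
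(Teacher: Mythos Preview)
Your proposal is correct and follows essentially the same approach as the paper's proof sketch: expose the left-branching spine of a tree in $\derivations[\ntX]/\alpha$, transpose it into the right-branching slashed chain in $\derivationsp[\lc{\ntX}{\alpha}]$, and apply the already-established bijection $\Dmap$ to the $\vbeta$-subtrees, with the inverse doing the obvious reversal. The paper's map $\psi$ is exactly your $\phi$; your explicit induction for the right normal form, the handling of the degenerate case $m=1$, and the appeal to commutativity for weight preservation are welcome elaborations of details the paper's sketch leaves implicit.
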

\begin{proof}[Proof (Sketch).]
On the left, we have a prototypical derivation from $\derivations[\ntX]/\alpha$ with its spine exposed.  Recall that these trees result from replacing the left corner, which is why there is an \emptystring.  Here $\ntX = \spineNT{K}$ and $\alpha=\spineNT{\kk}$.  On the right, we have its corresponding derivation in $\derivationsp[\lc{\ntX}{\alpha}]$.
\vspace{-.5\baselineskip}
\begin{align*}
\begin{tikzpicture}[
  CenterFix, 
  scale=.7, 
  sibling distance=1.4cm,
  every node/.style={font=\footnotesize}
]
\node (xK) {\spineNT{K}}  
child { node(foo) { \spineNT{K-1} }     
    child { node {}  edge from parent[draw=none]
      child { node(bar) { \spineNT{\kk+1} } edge from parent[draw=none]
          child { node (xk) { \spineNT{\kk} }
            child { node {\emptystring} }
          }
          child { node[yshift=-.4cm] { \MySubTree{\spineRest{\kk}} }  }
      }
      child { node {}  edge from parent[draw=none] }   
  }
  child { node[yshift=-.4cm] { \MySubTree{ \spineRest{K-2} } }
  }
}
child { node[yshift=-.4cm] {\MySubTree{ \spineRest{K-1}}} }
;
\draw[sloped] (foo) -- node[fill=white,inner sep=1pt] {\Cdots} (bar);
\end{tikzpicture}
\Longleftrightarrow
\begin{tikzpicture}[
  CenterFix,
  scale=.5, sibling distance=3.25cm,
  every node/.style={font=\scriptsize},
]
\node {\lc{\spineNT{K}}{\spineNT{\kk}}}
    child { node[yshift=-.4cm] { \dmap{\!\!\!\MySubTree{\spineRest{\kk}}\!\!\!} } }
    child { node[yshift=-0cm](foo) { \lc{\spineNT{K}}{\spineNT{\kk+1}} } 
      child { node[yshift=-.4cm,,xshift=.25cm] { \dmap{\!\!\!\MySubTree{ \spineRest{\kk+1} }\!\!\!} } }
      child { node {} edge from parent[draw=none] 
        child { node {} edge from parent[draw=none] }   
        child { node(bar) {\lc{\spineNT{K}}{\spineNT{K-2}}} edge from parent[draw=none]
          child { node[yshift=-.4cm,xshift=-.25cm] { \dmap{\!\!\!\MySubTree{ \spineRest{K-2} }\!\!\!} } }
          child { node {\lc{\spineNT{K}}{\spineNT{K-1}}}
            child { node[yshift=-.4cm,xshift=.25cm] { \dmap{\!\!\!\MySubTree{ \spineRest{K-1} }\!\!\!} } }
            child { node {\lc{\spineNT{K}}{\spineNT{K}} }  
              child { node {\emptystring} }
            }
          }    
        }
      }
    }
;
\draw[sloped] (foo) -- node[fill=white,inner sep=1pt] {\Cdots} (bar);
\end{tikzpicture}
\end{align*}

\noindent where the function \Dmap is \cref{alg:derivation-mapping}.  Call the mapping from left to right $\psi$; it is very straightforward. All it does is (1) transpose, (2) relabel the spine, and (3) call \Dmap on the $\vbeta$ subtrees. Note that $\Dmap$ does \emph{not} recurse to $\psi$ because it can already convert the $\beta$ subtrees by \cref{thm/glct-equivalence}.  Thus, it is straightforward to see that $\psi$ is also weight- and yield-preserving bijection.  This proves \cref{lemma:slash-bijection}.
\end{proof}

\LanguageRelationship*
\begin{proof}
The equations follow directly from \cref{lemma:frozen-bijection} and \cref{lemma:slash-bijection}.  In each case, the left- and the right-hand sides are sums of the derivations that are part of a weight- and yield-preserving bijection.
\end{proof}

\clearpage
\section{Proof of \cref{thm/glct-equivalence} (Bijective Equivalence)}
\label{thm/glct/proof}

\paragraph{Roadmap.}
We will show that $\grammar \equiv_{\nts} \glct(\grammar, \Ps, \Xs)$ for any choice of \Xs and \Ps.  Our proof makes use of two lemmas, \cref{lemma:lc-forward} and \cref{lemma:lc-inverse}, to establish that the derivation mapping $\Dmap$ (\cref{alg:derivation-mapping})
and its inverse \Dmapinv (\cref{alg:inverse-derivation-mapping})
define a bijection of the necessary type.
\cref{lemma:lc-forward} shows that $\Dmap$ preserves label, weight, and yield.
\cref{lemma:lc-inverse} shows that $\Dmap$ is invertible and, thus, that a bijection exists.

\begin{algorithm}
\caption{Derivation mapping for the generalized left-corner transformation $\glct(\grammar, \Ps, \Xs)$}
\label{alg:derivation-mapping}
\begin{algorithmic}[1]
\Func{$\dmap{\tree}$}
\If{$\tree \in \terminals$}  \Comment{Base case}
\State \Return \tree
\ElsIf{$\tree$ is a sequence $\tree[1] \cdots \tree[K]$}  \Comment{Sequence case}
  \State \Return $\dmap{\tree[1]} \cdots \dmap{\tree[K]}$
\Else 
\State $\left[\!\!\!\begin{tikzpicture}[CenterFix, scale=.5, sibling distance=2.5cm, every node/.style={font=\scriptsize}]
\node (xK) {\spineNT{K}}  
  child { node(foo) { \spineNT{K-1} }
     child { node {} edge from parent[draw=none]
       child { node(bar) {\spineNT{2}} edge from parent[draw=none]
         child { node[yshift=-.4cm,xshift=.25cm] { \MySubTree{\spineNT{1}} }
         }
         child { node[yshift=-.4cm,xshift=-.25cm] { \MySubTree{\spineRest{1}} } }
       }
       child { node {} edge from parent[draw=none] } 
    }
    child { node[yshift=-.75cm] {\MySubTree{\spineRest{K-2}}} }
  }
  child { node[yshift=-.4cm] { \MySubTree{\spineRest{K-1}} } }
;
\draw[sloped] (foo) -- node[fill=white,inner sep=1pt] {\Cdots} (bar);
\end{tikzpicture} \!\!\!\right] \gets \tree$  \quad\parbox{0.54\textwidth}{%
\begin{flushleft}
\textbf{where} $(\spineNT{K} \to \spineNT{K-1} \, \spineRest{K-1}) \cdots (\spineNT{2} \to \spineNT{1} \, \spineRest{1})$ is \spine{\tree} the spine of the \tree
\end{flushleft}
}

\State $\kk \gets \min\Parens{\Set{ i \mid \spineNT{i} \in \Xs, i \in \Set{1, \ldots K} }}$.  \Comment{determine the left corner; take $\min(\emptyset) = \infty$}

\If{$\kk = \infty$}
\State \Return \hspace{-.3in}\begin{tikzpicture}[
  scale=.5, sibling distance=3cm, TopFix,
  every node/.style={font=\scriptsize},
]
\node {\spineNT{K}}
child { node { \frozen{\spineNT{K}} }
    child { node(boo) { \frozen{\spineNT{K-1}} }  
      child { node {}      edge from parent[draw=none]
        child { node(bun) { \frozen{\spineNT{2}} }    edge from parent[draw=none] 
          child { node { \frozen{\spineNT{1}} }
            child { node[yshift=-.4cm] { \dmap{\!\!\MySubTree{\spineRest{0}\!\!\!} } }  }
          }
          child { node[yshift=-.4cm] { \dmap{\!\!\!\MySubTree{\spineRest{1}}\!\!\!} } }
        }
        child { node {}  edge from parent[draw=none] }   
      }
      child { node[yshift=-.4cm,xshift=-.5cm] { \dmap{\!\!\!\MySubTree{\spineRest{K-2}}\!\!\!} } }
    }
    child { node[yshift=-.4cm] { \dmap{\!\!\!\MySubTree{\spineRest{K-1}}\!\!\!} } }
}
;

\draw[sloped] (boo) -- node[fill=white,inner sep=1pt] {\Cdots} (bun);
\end{tikzpicture}

\label{alg:derivation-mapping:frozen-case}

\Else

\State \Return \hspace{-.4in} \begin{tikzpicture}[
  TopFix,
  scale=.5, sibling distance=3cm,
  every node/.style={font=\scriptsize},
]
\node {\spineNT{K}}
child { node[xshift=-.75cm] { \frozen{\spineNT{\kk}} }  
    child { node(boo) { \frozen{\spineNT{\kk-1}} }  
      child { node {}      edge from parent[draw=none]
        child { node(bun) { \frozen{\spineNT{2}} }    edge from parent[draw=none] 
          child { node[xshift=.25cm] { \frozen{\spineNT{1}} }
            child { node[yshift=-.4cm] { \dmap{\!\!\MySubTree{\spineRest{0}\!\!\!} } }  }
          }
          child { node[yshift=-.4cm,xshift=-.25cm] { \dmap{\!\!\!\MySubTree{\spineRest{1}}\!\!\!} } }
        }
        child { node {}  edge from parent[draw=none] }   
      }
      child { node[yshift=-.4cm,,xshift=-.5cm] { \dmap{\!\!\!\MySubTree{\spineRest{\kk-2}}\!\!\!} } }
    }
    child { node[yshift=-.4cm] { \dmap{\!\!\!\MySubTree{\spineRest{\kk-1}}\!\!\!} } }
}
child { node[xshift=.75cm] { \lc{\spineNT{K}}{\spineNT{\kk}} }  
    child { node[yshift=-.4cm] { \dmap{\!\!\!\MySubTree{\spineRest{\kk}}\!\!\!} } }
    child { node[yshift=-0cm](foo) { \lc{\spineNT{K}}{\spineNT{\kk+1}} } 
      child { node[yshift=-.4cm,,xshift=.25cm] { \dmap{\!\!\!\MySubTree{ \spineRest{\kk+1} }\!\!\!} } }
      child { node {} edge from parent[draw=none] 
        child { node {} edge from parent[draw=none] }   
        child { node(bar) {\lc{\spineNT{K}}{\spineNT{K-2}}} edge from parent[draw=none]
          child { node[yshift=-.4cm,xshift=-.25cm] { \dmap{\!\!\!\MySubTree{ \spineRest{K-2} }\!\!\!} } }
          child { node {\lc{\spineNT{K}}{\spineNT{K-1}}}
            child { node[yshift=-.4cm,xshift=.25cm] { \dmap{\!\!\!\MySubTree{ \spineRest{K-1} }\!\!\!} } }
            child { node {\lc{\spineNT{K}}{\spineNT{K}} }  
              child { node {\emptystring} }
            }
          }    
        }
      }
    }
}
;

\draw[sloped] (foo) -- node[fill=white,inner sep=1pt] {\Cdots} (bar);
\draw[sloped] (boo) -- node[fill=white,inner sep=1pt] {\Cdots} (bun);

\end{tikzpicture}

\label{alg:derivation-mapping:slash-case}

\EndIf
\EndIf
\EndFunc
\end{algorithmic}
\end{algorithm}

\begin{algorithm}
\caption{Inverse derivation mapping for the generalized left-corner transformation $\glct(\grammar, \Ps, \Xs)$}
\label{alg:inverse-derivation-mapping}
\begin{algorithmic}[1]
\Func{$\dmapinv{\treep}$}

\If{$\treep \in \terminals$}  \Comment{Base case}
  \State \Return \treep

  \label{alg:inverse-derivation-mapping:base-case}
  
\ElsIf{$\treep$ is a sequence $\treep[1] \cdots \treep[K]$}  \Comment{Sequence case}
  \State \Return $\dmapinv{\treep[1]} \cdots \dmapinv{\treep[K]}$

  \label{alg:inverse-derivation-mapping:seq-case}
  
\ElsIf{top rule of \treep is an instance of \cref{eq:lc-recovery-slash}} \Comment{Slashed recovery rule; \treep must have the following form:}
\State $
\left[\!\!\!
\begin{tikzpicture}[
  CenterFix,
  scale=.5, 
  level distance=2cm, 
  sibling distance=2cm, 
  every node/.style={font=\scriptsize},
]
\node {\spineNT{K}}
child { node[xshift=-.5cm] { \frozen{\spineNT{\kk}} }
    child { node(boo) { \frozen{\spineNT{\kk-1}} }   
      child { node {}      edge from parent[draw=none]
        child { node[xshift=.5cm](bun) { \frozen{\spineNT{2}} }    edge from parent[draw=none] 
          child { node[xshift=.25cm] { \frozen{\spineNT{1}} }
            child { node[yshift=-.4cm] { \MySubTree{\spineRest{0} } }  }
          }
          child { node[yshift=-.4cm,xshift=-.25cm] { \MySubTree{\spineRest{1}} } }
        }
        child { node {}  edge from parent[draw=none] }   
      }
      child { node[yshift=-.4cm, xshift=-.2cm] { \MySubTree{\spineRest{\kk-2}} } }
    }
    child { node[yshift=-.4cm] { \MySubTree{\spineRest{\kk-1}} } }
}
child { node[xshift=.5cm] { \lc{\spineNT{K}}{\spineNT{\kk}} }  
    child { node[yshift=-.4cm] { \MySubTree{\spineRest{\kk}} }  }
    child { node[yshift=-0cm](foo) { \lc{\spineNT{K}}{\spineNT{\kk+1}} } 
      child { node[yshift=-.4cm,xshift=.2cm] { \MySubTree{ \spineRest{\kk+1} } } }
      child { node {} edge from parent[draw=none] 
        child { node {} edge from parent[draw=none] }   
        child { node[xshift=-.5cm](bar) {\lc{\spineNT{K}}{\spineNT{K-2}}} edge from parent[draw=none]
          child { node[yshift=-.3cm, xshift=-.25cm] { \MySubTree{ \spineRest{K-2} } } }
          child { node {\lc{\spineNT{K}}{\spineNT{K-1}}}
            child { node[yshift=-.4cm] { \MySubTree{ \spineRest{K-1} } } }
            child { node[] {\lc{\spineNT{K}}{\spineNT{K}} }
              child { node {\emptystring} }
            }
          }    
        }
      }
    }
}
;
\draw[sloped] (foo) -- node[fill=white,inner sep=1pt] {\Cdots} (bar);
\draw[sloped] (boo) -- node[fill=white,inner sep=1pt] {\Cdots} (bun);
\end{tikzpicture}
\!\!\!\right] 
\!\gets\! \treep$;\ \ \ \Return\hspace{-4em} $
\begin{tikzpicture}[
  CenterFix, 
  scale=.5, sibling distance=3cm,
  level distance=2cm,
  every node/.style={font=\scriptsize},
]
\node (xK) {\spineNT{K}}  
child { node(foo) { \spineNT{K-1} }
    child { node {}   
      edge from parent[draw=none]
      child { node[xshift=.5cm](bar) { \spineNT{\kk+1} }
          edge from parent[draw=none,parent anchor=center,child anchor=center]
          child { node (xk) { \spineNT{\kk} }
            child { node(boo) {\spineNT{\kk-1}}   
              child { node {}      edge from parent[draw=none]
                child { node[xshift=.5cm](bun) {\spineNT{2}}    edge from parent[draw=none] 
                  child { node[xshift=.5cm] (x1) { \spineNT{1} }
                    child { node[yshift=-.4cm] { \dmapinv{\!\!\!\MySubTree{\spineRest{0}}\!\!\!} }  }
                  }
                  child { node[yshift=-.4cm] { \dmapinv{\!\!\!\MySubTree{\spineRest{1}}\!\!\!} } }
                }    
                child { node {}  edge from parent[draw=none] }   
              }
              child { node[yshift=-.4cm,xshift=-.4cm] { \dmapinv{\!\!\!\MySubTree{\spineRest{\kk-2}}\!\!\!} } }
            }
            child { node[yshift=-.4cm] { \dmapinv{\!\!\!\MySubTree{\spineRest{\kk-1}}\!\!\!} }  }
          }
          child { node[yshift=-.4cm,xshift=.25cm] { \dmapinv{\!\!\!\MySubTree{\spineRest{\kk}}\!\!\!} }  }
      }
      child { node {}  edge from parent[draw=none] }   
  }
  child { node[yshift=-.4cm,xshift=-.4cm] { \dmapinv{\!\!\!\MySubTree{\spineRest{K-2}}\!\!\!} }
  }
}
child { node[yshift=-.4cm] { \dmapinv{\!\!\!\MySubTree{\spineRest{K-1}}\!\!\!} }
}
;
\draw[sloped] (foo) -- node[fill=white,inner sep=1pt] {\Cdots} (bar);
\draw[sloped] (boo) -- node[fill=white,inner sep=1pt] {\Cdots} (bun);

\end{tikzpicture}
$  \label{alg:inverse-derivation-mapping:slash-case}

\Else \Comment{Frozen recovery rule; top rule of \treep is an instance \cref{eq:lc-recovery-frozen} and \treep must have the following form:} 

\State $
\left[
\begin{tikzpicture}[
  CenterFix, scale=.5, sibling distance=3.5cm, every node/.style={font=\scriptsize},
]
\node {\spineNT{K}}
child { node { \frozen{\spineNT{K}} }
    child { node(boo) { \frozen{\spineNT{K-1}} }  
      child { node {}      edge from parent[draw=none]
        child { node(bun) { \frozen{\spineNT{2}} }    edge from parent[draw=none] 
          child { node[xshift=.25cm] { \frozen{\spineNT{1}} }
            child { node[yshift=-.4cm] { \MySubTree{\spineRest{0} } }  }
          }
          child { node[yshift=-.4cm,xshift=-.25cm] { \MySubTree{\spineRest{1}} } }
        }
        child { node {}  edge from parent[draw=none] }   
      }
      child { node[yshift=-.4cm] { \MySubTree{\spineRest{K-2}} } }
    }
    child { node[yshift=-.4cm] { \MySubTree{\spineRest{K-1}} }  }
}
;
\draw[sloped] (boo) -- node[fill=white,inner sep=1pt] {\Cdots} (bun);
\end{tikzpicture}
\right] \!\gets\! \treep;
$
\quad \Return
\begin{tikzpicture}[
  CenterFix, scale=.5, sibling distance=3.5cm,
  level distance=2cm,
  every node/.style={font=\scriptsize},
]
\node (xK) {\spineNT{K}}  
  child { node(foo) { \spineNT{K-1} }
     child { node {} edge from parent[draw=none]
       child { node(bar) {\spineNT{2}} edge from parent[draw=none]
         child { node[xshift=-0cm] {\spineNT{1}} 
               child { 
                  node[yshift=-.4cm] {\dmapinv{\MySubTree{\spineRest{0}}}}
               }
         }
         child { node[yshift=-.4cm,xshift=0cm] { \dmapinv{\!\!\!\MySubTree{\spineRest{1}}\!\!\!} }  }
       }
       child { node {} edge from parent[draw=none] } 
    }
    child { node[yshift=-.4cm,xshift=-.5cm] {\dmapinv{\!\!\!\MySubTree{\spineRest{K-2}}\!\!\!}}  }
  }
  child { node[yshift=-.4cm] { \dmapinv{\!\!\!\MySubTree{\spineRest{K-1}}\!\!\!} } }
;

\draw[sloped] (foo) -- node[fill=white,inner sep=1pt] {\Cdots} (bar);

\end{tikzpicture}
\label{alg:inverse-derivation-mapping:frozen-case}

\EndIf
\EndFunc
\end{algorithmic}
\end{algorithm}

\begin{lemma}\label{lemma:lc-forward}
Let $\grammar \eq \Tuple{\nts, \terminals, \start, \rules, \Weightfunction}$ be a WCFG.
Let $\grammarp \eq \glct(\grammar, \Ps, \Xs)$ where $\Xs \subseteq (\nts \cup \terminals)$ and $\Ps \subseteq \rules$.
Then, \cref{alg:derivation-mapping} defines a function \Dmap that is 
\begin{enumerate}[label=(\roman*)]
\item type $\Dmap\colon \derivations \to \derivationsp$
\item label-preserving
\item yield-preserving
\item weight-preserving
\end{enumerate}
\end{lemma}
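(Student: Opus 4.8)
The plan is to prove all four properties simultaneously by structural induction on the derivation $\tree$, extending the claim to forests so that the sequence case of \cref{alg:derivation-mapping} is absorbed into the induction hypothesis. Concretely, I would state the inductive invariant as: for every $\tree \in \derivations$, $\dmap{\tree} \in \derivationsp$ with $\treelabel{\dmap{\tree}} = \treelabel{\tree}$, $\treeyield{\dmap{\tree}} = \treeyield{\tree}$, and $\treeweight{\dmap{\tree}} = \treeweight{\tree}$; and for every sequence $\tree[1]\cdots\tree[K]$, its image is a sequence of $\grammarp$-derivations whose concatenated yield and $\otimes$-product of weights agree with the original. The two non-internal cases are then immediate: the terminal base case returns $\tree$ unchanged (and $\terminalsp = \terminals$), and the sequence case follows by applying the hypothesis to each $\tree[i]$ and using that yield is closed under concatenation and weight under $\otimes$.

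The substance is the internal-node case, where I would first expose the spine $\spine{\tree} = (\spineNT{K}\to\spineNT{K-1}\,\spineRest{K-1})\cdots(\spineNT{2}\to\spineNT{1}\,\spineRest{1})$ and split on whether the left-corner index $\kk = \min\{\,i : \spineNT{i}\in\Xs\,\}$ is finite. For each branch I would check property (i) by reading off the rules used in the transformed tree and matching each to its schema in \cref{def/glct}: the root uses recovery rule \cref{eq:lc-recovery-frozen} (frozen branch, licensed by $\spineNT{K}\notin\Xs$) or \cref{eq:lc-recovery-slash} (slash branch, licensed by $\spineNT{\kk}\in\Xs$); the frozen portion of the spine uses \cref{eq:lc-frozen-recursive} (each licensed by the corresponding spine rule lying in $\Ps$ with leftmost symbol $\notin\Xs$, which holds because $\kk=\infty$ in the frozen branch and by minimality of $\kk$ in the slash branch), terminated by \cref{eq:lc-frozen-base} at the foot of the spine (licensed by maximality of the spine, i.e., the bottom rule lies in $\rules\setminus\Ps$); and, in the slash branch, the right-recursive slashed chain uses \cref{eq:lc-slash-recursive} with fixed numerator $\spineNT{K}$, terminated by the base rule \cref{eq:lc-slash-base} producing $\emptystring$. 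The subtrees $\spineRest{i}$ are transformed recursively, so each is a valid $\grammarp$-derivation by the induction hypothesis. Property (ii) is immediate, since both branches keep $\spineNT{K}$ at the root.

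For properties (iii) and (iv) I would read the leaves of the transformed tree left to right. The key observation is that, although the left-corner subtree rooted at $\spineNT{\kk}$ is hoisted and the left-recursive path is replaced by a right-recursive slashed chain, the transformed subtrees $\dmap{\spineRest{0}}, \dmap{\spineRest{1}}, \ldots, \dmap{\spineRest{K-1}}$ still occur in exactly their original left-to-right order, the only additional leaf being the $\emptystring$ introduced by \cref{eq:lc-slash-base}; combined with $\treeyield{\dmap{\spineRest{i}}} = \treeyield{\spineRest{i}}$ from the hypothesis, this yields $\treeyield{\dmap{\tree}} = \treeyield{\tree}$. For weights, the recovery rules and the slash base rule all carry weight $\one$, while the frozen and slashed spine rules carry exactly the weights of the original spine rules; multiplying these with the (preserved) subtree weights and using commutativity of $\otimes$ to reorder the factors recovers $\treeweight{\tree}$.

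I expect the main obstacle to be the rule-matching bookkeeping in (i) together with the yield-order argument in (iii): one must verify that the side conditions of \cref{def/glct} hold at every node (resting on maximality of the spine and minimality of $\kk$) and dispatch the edge cases uniformly, namely when the foot of the spine is a terminal (so $\frozen{\spineNT{1}} = \spineNT{1}$ and no frozen-base rule is needed), when $\kk = K$ (the root is itself the left corner, so the slashed chain collapses to the single $\emptystring$-rule), and when the spine is empty. These all follow the same template, and phrasing the invariant so that they are instances rather than exceptions is where the proof needs the most care; the weight argument, by contrast, is routine once commutativity is invoked.
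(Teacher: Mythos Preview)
Your proposal is correct and follows essentially the same approach as the paper: structural induction on $\tree$ (with the sequence case absorbed via an extended invariant), a case split on whether the left-corner index $\kk$ is finite, rule-by-rule matching against \cref{def/glct} for property~(i), and the yield/weight arguments resting on the preserved left-to-right order of the $\spineRest{i}$-subtrees together with the $\one$-weighted recovery and base rules and commutativity of $\otimes$. The edge cases you flag (terminal foot of the spine, $\kk=K$, empty spine) are exactly those the paper handles implicitly or via footnotes, so your plan to fold them into the main template is in line with what is done there.
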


\begin{proof}

\vspace{0.5\baselineskip}
\noindent Let $\grammarp = \Tuple{\ntsp, \terminalsp, \startp, \rulesp}$.  Note: by construction (\cref{def/glct}), $\ntsp \supseteq \nts$, $\startp=\start$, and $\terminalsp=\terminals$.

\vspace{0.5\baselineskip}
\noindent For notational convenience, we extend \Treeweight and \Treeyield to apply to a sequence of derivations $\MySubTree[scale=.7]{\alpha_{1}} \cdots \MySubTree[scale=.7]{\alpha_{K}} \in \derivations^*$:\looseness=-1
\begin{align*}
\treeweight{\MySubTree[scale=.7]{\alpha_{1}} 
\cdots 
\MySubTree[scale=.7]{\alpha_{K}}}
\defeq 
\treeweight{\MySubTree[scale=.7]{\alpha_{1}}} 
\otimes \cdots \otimes 
\treeweight{\MySubTree[scale=.7]{\alpha_{K}}} 
\quad\text{and}\quad
\treeyield{\MySubTree[scale=.7]{\alpha_{1}} 
\cdots 
\MySubTree[scale=.7]{\alpha_{K}}}
\defeq 
\treeyield{\MySubTree[scale=.7]{\alpha_{1}}} 
\circ \cdots \circ 
\treeyield{\MySubTree[scale=.7]{\alpha_{K}}}
\end{align*}

\noindent It is straightforward to verify that \Dmap satisfies properties (i--iv) for sequences of derivations; for brevity, we will not do so.

\vspace{0.5\baselineskip}
\noindent Fix an arbitrary derivation tree $\tree \in \derivations$.  Our proof will proceed by structural induction on the \defn{subtree relation}\label{def:subtree}: $\tree[1] \prec \tree \Longleftrightarrow \tree[1] \text{ is a (strict) subtree of } \tree$.  It is well-known that $\prec$ is well-founded ordering, making it suitable for induction (e.g., \citealp{Burstall69provingproperties}).

\vspace{0.5\baselineskip}
\noindent\textbf{Inductive hypothesis (IH)}: $\forall \tree[1] \prec \tree$: properties (i--iv) hold.

\vspace{0.5\baselineskip}
\noindent\textbf{Base case} ($\tree \in \terminals$): 
Here $\Dmap$ is the identity mapping. Thus, properties (i--iv) are clearly preserved.

\vspace{0.5\baselineskip}
\noindent\textbf{Inductive case} ($\tree \notin \terminals$):  Here \tree must have the general form:
\begin{center}
\begin{minipage}[t]{.3\textwidth}
\begin{tikzpicture}[
  TopFix,
  scale=.75, sibling distance=1.5cm,
  every node/.style={font=\scriptsize}
]
\node (xK) {\spineNT{K}}  
  child { node(foo) { \spineNT{K-1} }
     child { node {} edge from parent[draw=none]
       child { node(bar) {\spineNT{2}} edge from parent[draw=none]
         child { node[yshift=-.4cm,xshift=.25cm] { \MySubTree{\spineNT{1}} }
         }
         child { node[yshift=-.4cm,xshift=-.25cm] { \MySubTree{\spineRest{1}} } }
       }
       child { node {} edge from parent[draw=none] } 
    }
    child { node[yshift=-.4cm] {\MySubTree{\spineRest{K-2}}} }
  }
  child { node[yshift=-.4cm] { \MySubTree{\spineRest{K-1}} } }
;
\draw[sloped] (foo) -- node[fill=white,inner sep=1pt] {\Cdots} (bar);
\end{tikzpicture}
\end{minipage}
\begin{minipage}[t]{.6\textwidth}
\vspace{.5\baselineskip}
where $K \geq 1$, $\forall i \in \{1, \dots, K\}\colon \spineRest{i} \in (\terminals \cup \nts)^*$, and \\
$(\spineNT{K} \to \spineNT{K-1} \, \spineRest{K-1}) \cdots (\spineNT{2} \to \spineNT{1} \, \spineRest{1}) $ is \spine{\tree}, \\
the spine of the \tree. 
\end{minipage}
\end{center}
\noindent Observation \Obs{1}: The subtree \MySubTree[scale=.75]{\spineNT{1}} is either a terminal or it is of the form
\begin{tikzpicture}[
  CenterFix,
  scale=.5,
  every node/.style={font=\scriptsize}
]
\node {\spineNT{1}} child { node[yshift=-.4cm] { \MySubTree[scale=.75]{\spineRest{0}} } }
;
\end{tikzpicture}
\noindent with $\spineNT{1} \to \spineRest{0} \notin \Ps$.

Let $\kk$ be the index of the left corner (i.e., bottom-most element of $\Xs$ that occurs along the spine of \tree).\footnote{We chose the notation \kk because the $\ \widehat{\cdot}\ $ is visually similar to the ``kink'' in the transformed tree that is determined by \kk in the case where $\kk < \infty$.} More formally, let $\kk \eq \min(\Set{ i \mid \spineNT{i} \in \Xs, i \in \Set{1, \ldots K} })$.  If no such element exists, define $\kk \eq \min(\emptyset) \eq \infty$.  The cases correspond to  $\leftcorner{\tree} \ne \bot$ and $\leftcorner{\tree} = \bot$, respectively.

\vspace{0.5\baselineskip}
\noindent We consider two cases: $\kk = \infty$ and $\kk < \infty$:
\vspace{0.5\baselineskip}

\noindent\textbf{Case} ($\kk = \infty$): \quad In this case, we observe \Obs{2}: $\forall i \in \Set{1,\ldots,K}\colon \spineNT{i} \notin \Xs$, and \cref{alg:derivation-mapping:frozen-case} of \cref{alg:derivation-mapping} returns:
\begin{flalign*}
\dmap{\tree} =
\begin{tikzpicture}[
  TopFix,
  scale=.6, sibling distance=2.75cm,
  every node/.style={font=\scriptsize},
]
\node {\spineNT{K}}
child { node { \frozen{\spineNT{K}} }
    child { node(boo) { \frozen{\spineNT{K-1}} }   
      child { node {}      edge from parent[draw=none]
        child { node(bun) { \frozen{\spineNT{2}} }    edge from parent[draw=none] 
          child { node { \frozen{\spineNT{1}} }
            child { node[yshift=-.4cm] { \dmap{\!\!\MySubTree{\spineRest{0}\!\!\!} } }  }
          }
          child { node[yshift=-.4cm] { \dmap{\!\!\!\MySubTree{\spineRest{1}}\!\!\!} } }
        }
        child { node {}  edge from parent[draw=none] }   
      }
      child { node[yshift=-.4cm,xshift=-.5cm] { \dmap{\!\!\!\MySubTree{\spineRest{K-2}}\!\!\!} } }
    }
    child { node[yshift=-.4cm] { \dmap{\!\!\!\MySubTree{\spineRest{K-1}}\!\!\!} } }
}
;
\draw[sloped] (boo) -- node[fill=white,inner sep=1pt] {\Cdots} (bun);
\end{tikzpicture}
\end{flalign*}

\noindent We check each of the properties:

\begin{enumerate}[label=(\roman*)]
\item $\Dmap\colon \derivations \to \derivationsp$: In this case, we can check that $\dmap{\tree} \in \derivationsp$ by verifying that each rule used in the definition of \Dmap appears in \grammarp, and then invoking the inductive hypothesis on the subtrees.

It is straightforward to verify that the mapping $\dmap{\tree}$ preserves weight and that the tree exists in $\derivationsp$ by crosschecking each of the rules that appear in it with the GLCT construction (\cref{def/glct}):
\begin{itemize}[leftmargin=*]
\item The top rule $\spineNT{K} \to \frozen{\spineNT{K}}$ is an instance of \cref{eq:lc-recovery-frozen}.
The side condition $\nt{X} \in \nts \setminus \Xs$ is satisfied by \Obs{2} ($\forall i \in \Set{1,\ldots,K}: \spineNT{i} \notin \Xs$).

\item Each middle rule ${\frozen{\spineNT{i}} \to \frozen{\spineNT{i-1}} \, \spineRest{i-1}}$ is an instance of \cref{eq:lc-frozen-recursive}.
The side condition $\wproduction{\nt{X}}{\nt{X}_1 \, \vbeta}{w} \in \Ps, \nt{X}_1 \notin \Xs$ is satisfied in each case because spine guarantees inclusion in \Ps, and by \Obs{2} ($\forall i \in \Set{1,\ldots,K}: \spineNT{i} \notin \Xs$).

\item The bottom rule ${\frozen{\spineNT{1}} \to \spineRest{0}}$ is an instance of \cref{eq:lc-frozen-base}.  The side condition ${\wproduction{\nt{X}}{\vbeta}{w}\notin \Ps}$ is satisfied by \Obs{1}.\footnote{Recall that if $\spineNT{1} \in \terminals$, then $\frozen{\spineNT{1}}=\spineNT{1}$.}

\end{itemize}

Since we have verified that each of the rules in the construction of \dmap{\tree} are in \grammarp, and the subtrees are in $\derivationsp$ by IH, then it follows that $\dmap{\tree} \in \derivationsp$.

\item label-preserving: It is clear by inspection that $\tree$ and $\dmap{\tree}$ have the same root label.

\item yield-preserving: 
\begin{align*}
\treeyield{\tree} 
&= \treeyield{\MySubTree[scale=.75]{\spineRest{0}}} \circ \cdots \circ \treeyield{\MySubTree[scale=.75]{\spineRest{K-1}}} &\quad\proofexplain{definition}\\
&= \treeyield{\dmap{\MySubTree[scale=.75]{\spineRest{0}}}} 
\circ \cdots \circ 
\treeyield{\dmap{\MySubTree[scale=.75]{\spineRest{K-1}}}} &\quad\proofexplain{IH}\\
&= \treeyield{\dmap{\tree}}&\quad\proofexplain{definition}
\end{align*}

\item weight-preserving:

\begin{align*}
\treeweight{\tree} 
&= 
\mymarkedbrace{\normalcolor\weightfunction{ \spineNT{1} \to \spineRest{0} }}{base}
\otimes \mymarkedbrace{\left(\bigotimes_{i=2}^K \weightfunction{\wproduction{ \spineNT{i} }{ \spineNT{i-1} \, \spineRest{i-1} }{ }}\right)}{spine}
\otimes \mymarkedbrace{\left( \bigotimes_{i=0}^{K-1} \treeweight{\MySubTree[scale=.75]{\spineRest{i}}} \right)}{rest}
\\
& \vphantom{\sum}\\
&= 
\tikzmarknode{new-base}{\weightfunction{\frozen{\spineNT{1}} \to \spineRest{0}}} \\
&\qquad \otimes \left(\bigotimes_{i=2}^K \weightfunction{\wproduction{ \frozen{\spineNT{i}} }{ \frozen{\spineNT{i-1}} \, \spineRest{i-1} }{ }}\right)\tikzmarknode{new-spine}{} \\
&\qquad \otimes \weightfunction{\spineNT{K} \to \frozen{\spineNT{K}}}\tikzmarknode{one-a}{} \qquad\qquad\qquad{\color{gray}\tikzmarknode{one-b}{\one}} & {}\\
&\qquad \otimes \left( \bigotimes_{i=0}^{K-1} \treeweight{\dmap{\MySubTree[scale=.75]{\spineRest{i}}}} \right)\tikzmarknode{new-rest}{}\\
\\
&= \treeweight{\dmap{\tree}}
\begin{tikzpicture}[remember picture, overlay]
\draw[<->, gray] (base) to[out=-15,in=15] node[font=\scriptsize, inner sep=2pt,outer sep=5pt, fill=white, yshift=5pt] {equal by constr.} (new-base.east);
\draw[<->, gray] (spine) to[out=-15,in=15] node[xshift=.1cm, yshift=.2cm, font=\scriptsize, inner sep=2pt,outer sep=5pt, fill=white] {equal by constr.} (new-spine);
\draw[<->, gray] (rest) to[out=-90,in=0] node[font=\scriptsize, inner sep=2pt,outer sep=10pt, fill=white, sloped] {equal by IH} (new-rest);
\draw[<->, gray] ([yshift=3pt]one-a) to[out=-25,in=205] node[font=\scriptsize, inner sep=2pt,outer sep=10pt, fill=white, sloped] {by constr.} ([xshift=-1pt]one-b.west);
\end{tikzpicture}
\end{align*}
We have used the specific weight settings in \cref{def/glct}, and that \one is the $\otimes$-identity element.

\end{enumerate}

\noindent\textbf{Case} ($\kk < \infty$): Here, \tree has the form on the left and \Dmap transforms into the form on the right:
\begin{align*}
\begin{tikzpicture}[
  CenterFix, 
  scale=.7, 
  sibling distance=1.25cm,
  every node/.style={font=\scriptsize},
]
\node (xK) {\spineNT{K}}  
child { node (foo) { \spineNT{K-1} }     
    child { node {}   
      edge from parent[draw=none]
      child { node(bar) { \spineNT{\kk+1} }
          edge from parent[draw=none,parent anchor=center,child anchor=center]
          child { node (xk) { \spineNT{\kk} }
            child { node(boo) {\spineNT{\kk-1}}     
              child { node {}      edge from parent[draw=none]
                child { node(bun) {\spineNT{2}}    edge from parent[draw=none] 
                  child { node (x1) { \spineNT{1} }
                    child { node[yshift=-.4cm] { \MySubTree{\spineRest{0}} }  }
                  }
                  child { node[yshift=-.4cm] { \MySubTree{\spineRest{1}} } }
                }    
                child { node {}  edge from parent[draw=none] }   
              }
              child { node[yshift=-.4cm] { \MySubTree{\spineRest{\kk-2}} } }
            }
            child { node[yshift=-.4cm] { \MySubTree{\spineRest{\kk-1}} }  }
          }
          child { node[yshift=-.4cm] { \MySubTree{\spineRest{\kk}} }  }
      }
      child { node {}  edge from parent[draw=none] }   
  }
  child { node[yshift=-.4cm] { \MySubTree{ \spineRest{K-2} } }
  }
}
child { node[yshift=-.4cm] { \MySubTree{ \spineRest{K-1} } }    
}
;
\draw[sloped] (foo) -- node[fill=white,inner sep=1pt] {\Cdots} (bar);
\draw[sloped] (boo) -- node[fill=white,inner sep=1pt] {\Cdots} (bun);
\end{tikzpicture}
\hspace{-0em}\Longrightarrow
\begin{tikzpicture}[
  CenterFix,
  scale=.6, 
  level distance=2.25cm,
  sibling distance=2.5cm,
  every node/.style={font=\scriptsize},
]
\node {\spineNT{K}}
child { node[xshift=-.75cm] { \frozen{\spineNT{\kk}} }  
    child { node(boo) { \frozen{\spineNT{\kk-1}} }  
      child { node {}      edge from parent[draw=none]
        child { node[xshift=.5cm](bun) { \frozen{\spineNT{2}} }    edge from parent[draw=none] 
          child { node { \frozen{\spineNT{1}} }
            child { node[yshift=-.4cm] { \dmap{\!\!\MySubTree{\spineRest{0}\!\!\!} } }  }
          }
          child { node[yshift=-.4cm] { \dmap{\!\!\!\MySubTree{\spineRest{1}}\!\!\!} } }
        }
        child { node {}  edge from parent[draw=none] }   
      }
      child { node[yshift=-.4cm,] { \dmap{\!\!\!\MySubTree{\spineRest{\kk-2}}\!\!\!} } }
    }
    child { node[yshift=-.4cm] { \dmap{\!\!\!\MySubTree{\spineRest{\kk-1}}\!\!\!} } }
}
child { node[xshift=.75cm] { \lc{\spineNT{K}}{\spineNT{\kk}} }  
    child { node[yshift=-.4cm] { \dmap{\!\!\!\MySubTree{\spineRest{\kk}}\!\!\!} }   }
    child { node[yshift=-0cm](foo) { \lc{\spineNT{K}}{\spineNT{\kk+1}} }  
      child { node[yshift=-.4cm] { \dmap{\!\!\!\MySubTree{ \spineRest{\kk+1} }\!\!\!} } }
      child { node {} edge from parent[draw=none] 
        child { node {} edge from parent[draw=none] }   
        child { node[xshift=-.5cm](bar) {\lc{\spineNT{K}}{\spineNT{K-2}}} edge from parent[draw=none]
          child { node[yshift=-.4cm] { \dmap{\!\!\!\MySubTree{ \spineRest{K-2} }\!\!\!} } }
          child { node {\lc{\spineNT{K}}{\spineNT{K-1}}}
            child { node[yshift=-.4cm] { \dmap{\!\!\!\MySubTree{ \spineRest{K-1} }\!\!\!} } }
            child { node {\lc{\spineNT{K}}{\spineNT{K}} }  
              child { node {\emptystring} }
            }
          }    
        }
      }
    }
}
;
\draw[sloped] (foo) -- node[fill=white,inner sep=1pt] {\Cdots} (bar);
\draw[sloped] (boo) -- node[fill=white,inner sep=1pt] {\Cdots} (bun);
\end{tikzpicture}
\end{align*}

\noindent In this case, we observe \Obs{3}: $\forall i \in \{1,\ldots,\kk\!-\!1\}: \spineNT{i} \notin \Xs$.

\vspace{.5\baselineskip}
\noindent We check each of the properties: 

\begin{enumerate}[label=(\roman*),leftmargin=*,topsep=1pt,itemsep=1pt,parsep=1pt]

\item $\Dmap\colon \derivations \to \derivationsp$: We verify that the rules in the derivation \dmap{\tree} are in \grammarp:
\begin{itemize}[leftmargin=*,topsep=1pt,itemsep=1pt,parsep=1pt]

\item The top rule ${\spineNT{K} \to \frozen{\spineNT{\kk}} \; \lc{\spineNT{K}}{\spineNT{\kk}}}$ is an instance of \cref{eq:lc-recovery-slash}.
The side condition $\nt{X} \in \nts,\alpha \in \Xs$ holds because the definition of \kk ensures that $\spineNT{\kk} \in \Xs$.

\item The rules in the left subtree follow the same argument as the proof of the $\kk = \infty$ case.

\item Each middle rule of the right subtree
${\lc{\spineNT{K}}{\spineNT{i-1}} \to \spineRest{i-1} \; \lc{\spineNT{K}}{\spineNT{i}}}$ (for $i \eq \kk{+}1, \ldots, K$) is an instance of \cref{eq:lc-slash-recursive}. The side condition $\wproduction{\nt{X}}{\nt{X}_1 \, \vbeta}{w} \in \Ps, \nt{Y} \in \nts$ hold because $\spineNT{K} \in \nts$, and each of the $\spineNT{i} \to \spineNT{i-1} \; \spineRest{i-1}$ rules was part of the spine (thus, in \Ps).

\item The bottom rule of the right subtree ${\lc{\spineNT{K}}{\spineNT{K}} \to \emptystring}$ is an instance of \cref{eq:lc-frozen-recursive}.

\end{itemize}

Lastly, because each $\spineRest{i}$ subtree is in \derivationsp by IH, we have that $\dmap{\tree} \in \derivationsp$.

\item root-preserving: It is clear by inspection that $\tree$ and $\dmap{\tree}$ have the same root label.

\item yield-preserving: The argument here is essentially the same as the $\kk = \infty$ case.  The only difference is that the empty string (\emptystring) will be concatenated as the right-most element, which does not change the yield as \emptystring is the identity element of the string-concatenation operator.

\item weight-preserving: The argument is similar to the $\kk = \infty$ case
\vspace{-\baselineskip}
\begin{align*}
\treeweight{\tree} 
&= 
   \mymarkedbrace{\weightfunction{ \spineNT{1} \to \spineRest{0} }}{foot}
   \otimes 
   \mymarkedbrace{\left(\bigotimes_{i=2}^K \weightfunction{\wproduction{ \spineNT{i} }{ \spineNT{i-1} \, \spineRest{i-1} }{ }}\right)}{spine-rules}
   \otimes \mymarkedbrace{\left( \bigotimes_{i=0}^{K-1} \treeweight{\MySubTree[scale=.75]{\spineRest{i}}} \right)}{rest}
\\
&\phantom{x}\\
&= \weightfunction{\frozen{\spineNT{1}} \to \spineRest{0}}\tikzmarknode{transformed-foot}{} 
\\
&\qquad \otimes
\left(\bigotimes_{i=2}^{\kk} \weightfunction{ \frozen{\spineNT{i}} \to \frozen{\spineNT{i-1}} \; \spineRest{i-1} }\right)\tikzmarknode{transformed-spine-frozen}{} \\
&\qquad \otimes \weightfunction{\spineNT{K} \to \frozen{\spineNT{\kk}} \; \lc{\spineNT{K}}{\spineNT{\kk}} }\tikzmarknode{one-e}{\vphantom{\cdot}} \quad\qquad\qquad{\color{gray}\tikzmarknode{one-f}{\one}} &{}\\
&\qquad \otimes \left(\bigotimes_{i=\kk+1}^{K} \weightfunction{ 
\lc{\spineNT{K}}{\spineNT{i-1}} 
\to \spineRest{i-1} \; \lc{\spineNT{K}}{\spineNT{i}}
}\right)\tikzmarknode{transformed-spine-slash}{} \\
&\qquad \otimes \weightfunction{\lc{\spineNT{K}}{\spineNT{K}} \to \emptystring }\tikzmarknode{one-c}{\vphantom{\cdot}} \qquad\qquad\qquad{\color{gray}\tikzmarknode{one-d}{\one}} 
     &{}
\\
&\qquad \otimes \left( \bigotimes_{i=0}^{K-1} \treeweight{\dmap{\MySubTree[scale=.75]{\spineRest{i}}}} \right)\tikzmarknode{transformed-rest}{} \\
&= \treeweight{\dmap{\tree}}
\begin{tikzpicture}[remember picture, overlay]
\draw[<->, gray] (rest) to[out=-75,in=0] node[font=\scriptsize, inner sep=3pt, outer sep=0, fill=white, sloped] {equal by IH} (transformed-rest);
\draw[<->, gray] (transformed-spine-frozen) to[out=0,in=25] node[font=\scriptsize, inner sep=3pt,outer sep=0, fill=white, sloped] (mid) {equal by constr.} (transformed-spine-slash);
\draw[<-, gray] (spine-rules) to[out=-65,in=65] (mid);
\draw[<->, gray] ([yshift=1cm]foot) to[out=-25,in=55] node[xshift=0cm, yshift=.05cm, font=\scriptsize, fill=white, inner sep=3pt, outer sep=0] {equal by constr.} ([yshift=.2cm]transformed-foot);
\draw[<->, gray] ([xshift=8pt]one-c) to[out=-15,in=195] node[font=\scriptsize, inner sep=2pt,outer sep=10pt, fill=white, sloped] {by constr.} ([xshift=-1pt]one-d.west);
\draw[<->, gray] ([xshift=8pt]one-e) to[out=-15,in=195] node[font=\scriptsize, inner sep=2pt,outer sep=10pt, fill=white, sloped] {by constr.} ([xshift=-1pt]one-f.west);
\end{tikzpicture}
\end{align*}
We have used the specific weight settings and the fact that \one is the identity element of the $\otimes$-operator.

\end{enumerate}

\paragraph{Conclusion.} We have successfully verified that the properties (i--iv) hold in each possible case; thus, by the principle of induction, \cref{lemma:lc-forward} is true.
\end{proof}

\begin{lemma}
\label{lemma:lc-inverse}
Let $\grammar = \Tuple{\nts, \terminals, \start, \rules}$, 
$\grammarp = \glct(\grammar, \Ps, \Xs)$ 
for any choice of $\Ps$ and $\Xs$.
Then, the functions \Dmap and \Dmapinv defined by \cref{alg:derivation-mapping} and \cref{alg:inverse-derivation-mapping} (respectively) are inverses of each other.
Thus, a bijection of type $\derivations[\nts] \to \derivationsp[\nts]$ exists.
\end{lemma}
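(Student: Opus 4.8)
The plan is to show that $\Dmap$ and $\Dmapinv$ are two-sided inverses on the relevant domains, from which the claimed bijection $\derivations[\nts] \to \derivationsp[\nts]$ follows at once. From \cref{lemma:lc-forward} I already have that $\Dmap$ is a total, label-preserving function $\derivations \to \derivationsp$, so it restricts to $\Dmap\colon \derivations[\nts] \to \derivationsp[\nts]$. The first bookkeeping step is to check that $\Dmapinv$ is well-defined and total on $\derivationsp[\nts]$ with codomain $\derivations[\nts]$. The key observation is that the only rules of \grammarp whose left-hand side is an original nonterminal $\ntX \in \nts$ are \cref{eq:lc-recovery-frozen} and \cref{eq:lc-recovery-slash}; hence every $\treep \in \derivationsp[\nts]$ has a recovery rule at its root, and exactly one of the two nontrivial branches of \cref{alg:inverse-derivation-mapping} applies. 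To see that the pattern-match in each branch actually succeeds, I would analyze which rules can build $\frozen{\cdot}$ and $\lc{\cdot}{\cdot}$ nonterminals, showing that a frozen root forces a frozen chain and a slashed root forces a frozen chain on the left together with a right-recursive $\lc{\spineNT{K}}{\cdot}$ chain on the right, i.e., precisely the shapes the algorithm consumes.

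Next I would prove the two inversion identities by structural induction on the subtree relation $\prec$ (the same well-founded order used in \cref{lemma:lc-forward}), handling the terminal and sequence cases trivially and splitting the interesting case on whether the left corner exists. For $\dmapinv{\dmap{\tree}} = \tree$: in the base case $\tree \in \terminals$ both maps are the identity; in the inductive case I split on $\kk$. When $\kk = \infty$, $\dmap{\tree}$ has a \cref{eq:lc-recovery-frozen} root (\cref{alg:derivation-mapping:frozen-case}), so $\Dmapinv$ takes its frozen branch, which merely unfreezes the spine and applies $\Dmapinv$ to the transformed $\spineRest{i}$ subtrees; by the inductive hypothesis these are returned to the originals, reconstructing $\tree$. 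When $\kk < \infty$, $\dmap{\tree}$ has a \cref{eq:lc-recovery-slash} root (\cref{alg:derivation-mapping:slash-case}), so $\Dmapinv$ takes its slashed branch (\cref{alg:inverse-derivation-mapping:slash-case}), whose job is to transpose the right-recursive $\lc{\spineNT{K}}{\cdot}$ chain and the frozen $\frozen{\spineNT{\kk}}$ chain back into the single left-branching spine $\spineNT{K} \to \cdots \to \spineNT{1}$, again calling $\Dmapinv$ on the subtrees. The symmetric direction $\dmap{\dmapinv{\treep}} = \treep$ is proved the same way, now casing on whether the root rule of $\treep$ is an instance of \cref{eq:lc-recovery-slash} or \cref{eq:lc-recovery-frozen} and checking that $\Dmap$ rebuilds exactly the chains that $\Dmapinv$ just consumed.

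The crux, and the step I expect to be the main obstacle, is the slashed case, where I must verify that the index-reversing ``transpose'' performed by \cref{alg:inverse-derivation-mapping:slash-case} is an exact inverse of the transpose performed by \cref{alg:derivation-mapping:slash-case}. Concretely, the forward map sends the spine segment $\spineNT{\kk}, \ldots, \spineNT{K}$ with subtrees $\spineRest{\kk}, \ldots, \spineRest{K-1}$ into the chain $\lc{\spineNT{K}}{\spineNT{\kk}} \to \spineRest{\kk}\, \lc{\spineNT{K}}{\spineNT{\kk+1}}$, $\ldots$, $\lc{\spineNT{K}}{\spineNT{K}} \to \emptystring$, while the segment $\spineNT{1}, \ldots, \spineNT{\kk}$ with subtrees $\spineRest{0}, \ldots, \spineRest{\kk-1}$ becomes the frozen chain under $\frozen{\spineNT{\kk}}$; the inverse must read the common numerator $\spineNT{K}$ and the successive denominators off the slashed chain, read the hoisted left corner off the frozen chain, and reglue them at index $\kk$. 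Establishing that these index alignments and the $\emptystring$ bookkeeping match up exactly is purely structural but is where all the care lies. Finally, since $\Dmapinv$ is thereby shown to be a two-sided inverse of the total function $\Dmap\colon \derivations[\nts] \to \derivationsp[\nts]$, both maps are bijections, proving \cref{lemma:lc-inverse}.
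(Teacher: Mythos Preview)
Your proposal is correct and follows essentially the same approach as the paper: both prove the two inversion identities $\dmap{\dmapinv{\treep}} = \treep$ and $\dmapinv{\dmap{\tree}} = \tree$ by structural induction on the subtree relation, casing on the root recovery rule of $\treep$ in one direction and on whether $\kk = \infty$ or $\kk < \infty$ in the other, with the spine transpose and the recursive calls on the $\spineRest{i}$ subtrees handled exactly as you describe. Your additional remarks on the well-definedness of $\Dmapinv$ (that every $\treep \in \derivationsp[\nts]$ has a recovery rule at its root and that the frozen/slashed chains necessarily match the algorithm's patterns) make explicit what the paper's proof leaves implicit in its ``$\treep$ must have the following form'' diagrams.
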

\begin{proof}

\vspace{0.5\baselineskip}
\noindent \textbf{N.B.}
We first explain why only a bijection of type $\derivations[\nts] \to \derivationsp[\nts]$ rather than a bijection of type $\derivations \to \derivationsp$. The reason is that the set $\derivationsp$ contains derivations rooted at the new slashed and frozen nonterminals, which do not exist in \grammar. However, all derivations with a root label in \nts are preserved in the bijection; that is why we restricted the type of the bijection to have the range $\derivationsp[\nts]$.
\begin{center}
\begin{tikzpicture}
\node[ellipse, draw, minimum width=1.5cm, minimum height=2.5cm, ] (A) at (0, 0) {\derivations[\nts]};
\node[ellipse, draw, minimum width=2cm, minimum height=3.5cm] (B) at (4, 0) {};
\node[below] at (A.south) {\derivations};
\node[below] at (B.south) {\derivationsp};
\node[ellipse, draw, minimum width=1.5cm, minimum height=2.5cm, ] (C) at (4, 0) {$\derivationsp[\nts]$};
\draw[->,thick,] (A) to[bend left=20] node[midway, above] {\Dmap} (C);
\draw[->,thick,] (C) to[bend left=20] node[midway, below] {\Dmapinv} (A);
\end{tikzpicture}
\end{center}
Note that it would be sufficient to show a bijection between $\derivations[\start] \to \derivationsp[\start]$, but we prove the stronger version as it fits with our inductive proof strategy.  We note that the derivations in $\derivationsp \setminus \derivationsp[\nts]$ are each labeled by either a frozen or slashed nonterminals symbol.  The meanings of these frozen and slashed symbols do not need to be preserved since they are undefined in $\grammar$.   What is important is that the derivations in \derivationsp[\nts] that use these derivations use them in a meaning-preserving manner.

\vspace{0.5\baselineskip}
\noindent Our proof will refer to the explicit functions \Dmap and \Dmapinv (\cref{alg:derivation-mapping} and \cref{alg:inverse-derivation-mapping}).
We will prove that \Dmap and \Dmapinv are inverses of each other in two parts:

\begin{itemize}
\item \emph{Part 1}: 
We will show that $\forall \treep \in \derivationsp[\nts]\colon \dmap{\dmapinv{\treep}} = \treep$.
\item \emph{Part 2}:
We will show that $\forall \tree \in \derivations[\nts]\colon \dmapinv{\dmap{\tree}} = \tree$.
\end{itemize}

\noindent \textbf{Proof of Part 1:}
Our proof will proceed by structural induction on the \hyperref[def:subtree]{subtree relation} ($\prec$).  Fix an arbitrary derivation tree $\treep \in \derivationsp[\nts]$.

\vspace{0.5\baselineskip}
\noindent \textbf{Inductive hypothesis (IH):}
For all subtrees $\treepp \prec \treep$ where $\treelabel{\treepp} \in \nts$:
$\Dmap(\Dmapinv(\treepp)) = \treepp$.

\vspace{0.5\baselineskip}
\noindent\textbf{Base case ($\treep \in \terminals$)}: 
\Dmap and \Dmapinv are the identity function; thus, $\dmap{\dmapinv{\treep}} = \treep$ as required.

\vspace{0.5\baselineskip}
\noindent \textbf{Sequence case:}
Both \Dmap and \Dmapinv apply pointwise to sequences; thus, 
$
\dmap{\dmapinv{\treep[1] \cdots \treep[K]}}
= \dmap{\dmapinv{\treep[1]} \cdots \dmapinv{\treep[K]}}
= \dmap{\dmapinv{\treep[1]}} \cdots \dmap{\dmapinv{\treep[K]}}
= \treep[1] \cdots \treep[K]$ as required.

\vspace{0.5\baselineskip}
\noindent\textbf{Inductive case}:  There are two cases to consider:
(1) the top rule is an instance of \cref{eq:lc-recovery-slash} or 
(2) the top rule is an instance of \cref{eq:lc-recovery-frozen}. These cases are mutually exclusive, as a difference in the top rule ensures any pair of derivations are unequal.  We can see that these cases are exhaustive, as the only way to have a derivation labeled \nts that in \grammarp is if the top rule of \treep is an instance of \cref{eq:lc-recovery-slash} or \cref{eq:lc-recovery-frozen}.

\clearpage
\begin{itemize}[leftmargin=*]
\item \textbf{Case (top rule is \cref{eq:lc-recovery-frozen})}: 
In this case, $\treep$ must have the following form:
\begin{center}
\begin{minipage}[t]{.35\textwidth}
\begin{tikzpicture}[
  TopFix,
  scale=.7, sibling distance=2cm,
  level distance=1.25cm,
  every node/.style={font=\scriptsize},
]
\node {\spineNT{K}}
child { node { \frozen{\spineNT{K}} }
    child { node(boo) { \frozen{\spineNT{K-1}} }    
      child { node {}      edge from parent[draw=none]
        child { node(bun) { \frozen{\spineNT{2}} }    edge from parent[draw=none] 
          child { node[xshift=.25cm] { \frozen{\spineNT{1}} }
            child { node[yshift=-.4cm] { \MySubTree{\spineRest{0} } }  }
          }
          child { node[yshift=-.4cm,xshift=-.25cm] { \MySubTree{\spineRest{1}} } }
        }
        child { node {}  edge from parent[draw=none] }   
      }
      child { node[yshift=-.4cm] { \MySubTree{\spineRest{K-2}} } }
    }
    child { node[yshift=-.4cm] { \MySubTree{\spineRest{K-1}} }  }
}
;
\draw[sloped] (boo) -- node[fill=white,inner sep=1pt] {\Cdots} (bun);
\end{tikzpicture}
\end{minipage}
\begin{minipage}[t]{.55\textwidth}

We note that the chain of rules exposed along the left edge must have the following structure:  we start with an instance of \cref{eq:lc-recovery-frozen} followed by zero or more instances of \cref{eq:lc-frozen-recursive} ending in an instance of \cref{eq:lc-frozen-base}.  None of the \spineRest{i}-subtrees contain slashed or frozen symbols in their labels. However, note that the \emph{subtrees} of the \spineRest{i}-subtrees may contain slashed or frozen symbols.
\end{minipage}
\end{center}
We verify that $\dmap{\dmapinv{\treep}} = \treep$ below:
\begin{align*}
\dmap{\dmapinv{\treep}} 
= \dmap{\!\!\!
\begin{tikzpicture}[
  CenterFix,
  scale=.6, sibling distance=2.25cm,
  level distance=2.5cm,
  every node/.style={font=\scriptsize},
]
\node (xK) {\spineNT{K}}  
  child { node(foo) { \spineNT{K-1} } 
     child { node {} edge from parent[draw=none]
       child { node(bar) {\spineNT{2}} edge from parent[draw=none]
         child { node {\spineNT{1} }
           child { node {\dmapinv{\!\!\!\MySubTree{\spineRest{0}}\!\!\!}} }
         }
         child { node[yshift=-.5cm] {\dmapinv{\!\!\!\MySubTree{\spineRest{1}}\!\!\!} }  
         }
       }
       child { node {} edge from parent[draw=none] } 
    }
    child { node[yshift=-.4cm] {\dmapinv{\!\!\!\MySubTree{\spineRest{K-2}}\!\!\!}} }
  }
  child { node[yshift=-.4cm] { \dmapinv{\!\!\!\MySubTree{\spineRest{K-1}}\!\!\!} } }
;
\draw[sloped] (foo) -- node[fill=white,inner sep=1pt] {\Cdots} (bar);
\end{tikzpicture}
\!\!\!} 
=\hspace{-1em}
\begin{tikzpicture}[
  CenterFix, scale=.6, sibling distance=2.75cm, 
  level distance=2.25cm,
  every node/.style={font=\scriptsize},
]
\node {\spineNT{K}}
child { node { \frozen{\spineNT{K}} }
    child { node(boo) { \frozen{\spineNT{K-1}} } 
      child { node {}      edge from parent[draw=none]
        child { node(bun) { \frozen{\spineNT{2}} }    edge from parent[draw=none] 
          child { node[xshift=.25cm] { \frozen{\spineNT{1}} }
            child { node[yshift=-.4cm] { \cancelMapOfInv{\!\!\!\MySubTree{\spineRest{0}}\!\!\!} }  }
          }
          child { node[yshift=-.4cm] { \cancelMapOfInv{\!\!\!\MySubTree{\spineRest{1}}\!\!\!}} }
        }
        child { node {}  edge from parent[draw=none] }   
      }
      child { node[yshift=-.4cm] { \cancelMapOfInv{\!\!\!\MySubTree{\spineRest{K-2}}\!\!\!} } }
    }
    child { node[yshift=-.4cm] { \cancelMapOfInv{\!\!\!\MySubTree{\spineRest{K-1}}\!\!\!} } }
}
;
\draw[sloped] (boo) -- node[fill=white,inner sep=1pt] {\Cdots} (bun);
\end{tikzpicture}
\hspace{-1em}=
\treep
\end{align*}

The first step applies \cref{alg:inverse-derivation-mapping}, and the second step applies \cref{alg:derivation-mapping}.  
By induction, each recursive call in this expression successfully inverts \Dmap for their respective argument. It is also clear, by inspection, that 
\cref{alg:derivation-mapping:frozen-case} of \cref{alg:derivation-mapping} 
correctly inverts the transformation of the spine performed on 
\cref{alg:inverse-derivation-mapping:frozen-case} of \cref{alg:inverse-derivation-mapping}. 
Thus, by induction, our definition of the inverse is correct for this case.

\clearpage
\item \textbf{Case (top rule is \cref{eq:lc-recovery-slash})}:  In this case, \treep must have the following form:
\begin{center}
\hspace{-\baselineskip}
\begin{minipage}[t]{.4\textwidth}
\begin{center}
\begin{tikzpicture}[
  scale=.6, sibling distance=1.5cm,
  every node/.style={font=\scriptsize},
]
\node {\spineNT{K}}
child { node[xshift=-.45cm] { \frozen{\spineNT{\kk}} }
    child { node(boo) { \frozen{\spineNT{\kk-1}} }   
      child { node {}      edge from parent[draw=none]
        child { node(bun) { \frozen{\spineNT{2}} }    edge from parent[draw=none] 
          child { node[xshift=.25cm] { \frozen{\spineNT{1}} }
            child { node[yshift=-.4cm] { \MySubTree{\spineRest{0} } }  }
          }
          child { node[yshift=-.4cm,xshift=0cm] { \MySubTree{\spineRest{1}} } }
        }
        child { node {}  edge from parent[draw=none] }   
      }
      child { node[yshift=-.4cm,xshift=-.2cm] { \MySubTree{\spineRest{\kk-2}} } }
    }
    child { node[yshift=-.3cm] { \MySubTree{\spineRest{\kk-1}} } }
}
child { node[xshift=.3cm] { \lc{\spineNT{K}}{\spineNT{\kk}} }  
    child { node[yshift=-.3cm,xshift=0cm] { \MySubTree{\spineRest{\kk}} }  }
    child { node[xshift=.1cm](foo) { \lc{\spineNT{K}}{\spineNT{\kk+1}} }  
      child { node[yshift=-.3cm,xshift=.3cm] { \MySubTree{\spineRest{\kk+1}} } }
      child { node {} edge from parent[draw=none] 
        child { node {} edge from parent[draw=none] }   
        child { node[xshift=-.2cm](bar) {\lc{\spineNT{K}}{\spineNT{K-2}}} edge from parent[draw=none]
          child { node[yshift=-.3cm,xshift=-.4cm] { \MySubTree{ \spineRest{K-2} } } }
          child { node {\lc{\spineNT{K}}{\spineNT{K-1}}}
            child { node[yshift=-.4cm,xshift=-.25cm] { \MySubTree{ \spineRest{K-1} } } }
            child { node[xshift=0cm] {\lc{\spineNT{K}}{\spineNT{K}} }
              child { node {\emptystring} }
            }
          }    
        }
      }
    }
}
;
\draw[sloped] (foo) -- node[fill=white,inner sep=1pt] {\Cdots} (bar);
\draw[sloped] (boo) -- node[fill=white,inner sep=1pt] {\Cdots} (bun);
\end{tikzpicture}
\end{center}
\end{minipage}\hspace{2pt}
\begin{minipage}[t]{.55\textwidth}
Here, the top rule is an instance of \cref{eq:lc-recovery-slash}; thus, it combines a derivation labeled by some frozen nonterminal $\spineNT{\kk}$ with a derivation of some slashed nonterminal $\lc{\spineNT{K}}{\spineNT{\kk}}$.  Notice that the attached frozen nonterminals $\spineNT{\kk}$ must match the slashed nonterminal's denominator, and the slashed nonterminal's numerator must match the label $\treelabel{\treep}$. The left subtree exposes the chain of rules that built the frozen nonterminal $\spineNT{\kk}$.  This chain is a (possibly empty) sequence of \cref{eq:lc-frozen-recursive} followed by the base case \cref{eq:lc-frozen-base}. The right subtree exposes the chain of rules that built slashed nonterminal $\lc{\spineNT{K}}{\spineNT{\kk}}$.  These are a (possibly empty) sequence of  \cref{eq:lc-slash-recursive} ending in an instance of the base case \cref{eq:lc-slash-base}. By construction, none of the \spineRest{i}-subtrees may contain slashed or frozen symbols in their root labels, as they must be instances of one of the rules \cref{eq:lc-slash-base}, \cref{eq:lc-slash-recursive}, \cref{eq:lc-frozen-base}, or \cref{eq:lc-frozen-recursive}.
\end{minipage}
\end{center}

We verify that $\dmap{\dmapinv{\treep}} = \treep$ by applying 
\cref{alg:inverse-derivation-mapping}
and 
\cref{alg:derivation-mapping} in the equations below:
\begin{flalign*}
& \dmap{\dmapinv{\treep}} \\
&= \dmap{\!\!\!
\begin{tikzpicture}[
  CenterFix, scale=.6, sibling distance=3.25cm,
  level distance=1.25cm,
  every node/.style={font=\scriptsize}
]
\node (xK) {\spineNT{K}}  
child { node(foo) { \spineNT{K-1} }
    child { node {}   
      edge from parent[draw=none]
      child { node(bar) { \spineNT{\kk+1} }
          edge from parent[draw=none,parent anchor=center,child anchor=center]
          child { node (xk) { \spineNT{\kk} }
            child { node(boo) {\spineNT{\kk-1}}     
              child { node {}      edge from parent[draw=none]
                child { node(bun) {\spineNT{2}}    edge from parent[draw=none] 
                  child { node[xshift=.5cm] (x1) { \spineNT{1} }
                    child { node[yshift=-.4cm] { \dmapinv{\!\!\!\MySubTree{\spineRest{0}}\!\!\!} }  }
                  }
                  child { node[yshift=-.4cm] { \dmapinv{\!\!\!\MySubTree{\spineRest{1}}\!\!\!} } }
                }    
                child { node {}  edge from parent[draw=none] }   
              }
              child { node[yshift=-.5cm,xshift=-.5cm] { \dmapinv{\!\!\!\MySubTree{\spineRest{\kk-2}}\!\!\!} } }
            }
            child { node[yshift=-.5cm,xshift=-.2cm] { \dmapinv{\!\!\!\MySubTree{\spineRest{\kk-1}}\!\!\!} }  }
          }
          child { node[yshift=-.4cm] { \dmapinv{\!\!\!\MySubTree{\spineRest{\kk}}\!\!\!} }  }
      }
      child { node {}  edge from parent[draw=none] }   
  }
  child { node[yshift=-.4cm,xshift=-.4cm] { \dmapinv{\!\!\!\MySubTree{\spineRest{K-2}}\!\!\!} }
  }
}
child { node[yshift=-.4cm] { \dmapinv{\!\!\!\MySubTree{\spineRest{K-1}}\!\!\!} }
}
;
\draw[sloped] (foo) -- node[fill=white,inner sep=1pt] {\Cdots} (bar);
\draw[sloped] (boo) -- node[fill=white,inner sep=1pt] {\Cdots} (bun);
\end{tikzpicture}
\!\!\!}
\\[-1.5\baselineskip]
&= \hspace{-0.5cm}
\begin{tikzpicture}[
  CenterFix, scale=.6, sibling distance=3.5cm, 
  level distance=1.5cm,
  every node/.style={font=\scriptsize},
]
\node {\spineNT{K}}
child { node[xshift=-1.25cm] { \frozen{\spineNT{\kk}} }
    child { node(boo) { \frozen{\spineNT{\kk-1}} }   
      child { node {}      edge from parent[draw=none]
        child { node(bun) { \frozen{\spineNT{2}} }    edge from parent[draw=none] 
          child { node[xshift=.25cm] { \frozen{\spineNT{1}} }
            child { node[yshift=-.4cm] { \cancelMapOfInv{\!\!\MySubTree{\spineRest{0}}\!\!} }  }
          }
          child { node[yshift=-.2cm,xshift=-.25cm] { \cancelMapOfInv{\!\!\MySubTree{\spineRest{1}}\!\!}} }
        }
        child { node {}  edge from parent[draw=none] }   
      }
      child { node[xshift=-.5cm, yshift=-.6cm] { \cancelMapOfInv{\!\!\MySubTree{\spineRest{\kk-2}}\!\!} } }
    }
    child { node[yshift=-.4cm] { \cancelMapOfInv{\!\!\MySubTree{\spineRest{\kk-1}}\!\!} } }
}
child { node[xshift=1cm] { \lc{\spineNT{K}}{\spineNT{\kk}} }  
    child { node[xshift=.25cm,yshift=-.4cm] { \cancelMapOfInv{\MySubTree{\spineRest{\kk}}} }  }
    child { node[,xshift=.25cm](foo) { \lc{\spineNT{K}}{\spineNT{\kk+1}} } 
      child { node[yshift=-.5cm,yshift=-.4cm] { \cancelMapOfInv{\!\!\MySubTree{ \spineRest{\kk+1}}\!\!} } }
      child { node {} edge from parent[draw=none] 
        child { node {} edge from parent[draw=none] }   
        child { node(bar) {\lc{\spineNT{K}}{\spineNT{K-2}}} edge from parent[draw=none]
          child { node[xshift=-.25cm,yshift=-.1cm] { \cancelMapOfInv{\!\!\MySubTree{\spineRest{K-2}}\!\!} } }
          child { node {\lc{\spineNT{K}}{\spineNT{K-1}}}
            child { node[yshift=-.5cm,xshift=-.2cm] { \cancelMapOfInv{\!\!\MySubTree{\spineRest{K-1}}\!\!} }}
            child { node[xshift=-.25cm] {\lc{\spineNT{K}}{\spineNT{K}} }
              child { node {\emptystring} }
            }
          }    
        }
      }
    }
}
;
\draw[sloped] (foo) -- node[fill=white,inner sep=1pt] {\Cdots} (bar);
\draw[sloped] (boo) -- node[fill=white,inner sep=1pt] {\Cdots} (bun);
\end{tikzpicture}
\\
\vspace{\baselineskip}&= \treep 
\end{flalign*}
By induction, each recursive call in this expression successfully inverts \Dmap for its respective argument.  It is also clear, by inspection, that \cref{alg:derivation-mapping:slash-case} of \cref{alg:derivation-mapping} correctly inverts the transformation of the spine performed on \cref{alg:inverse-derivation-mapping:slash-case} of \cref{alg:inverse-derivation-mapping}. Thus, by induction, our definition of the inverse is correct for this case.

\end{itemize}

\paragraph{Conclusion (Part 1).}
We have verified the two cases of the inductive step; thus, by the principle of induction, we have shown that Part 1 of \cref{lemma:lc-inverse} holds.

\vspace{\baselineskip}
\noindent We now continue to Part 2 (next page).

\clearpage
\noindent \textbf{Proof of Part 2:}
Our proof will proceed by structural induction on the \hyperref[def:subtree]{subtree relation} ($\prec$).  Fix an arbitrary derivation tree $\tree \in \derivations$.

\vspace{0.5\baselineskip}
\noindent \textbf{Inductive hypothesis (IH):}
For all subtrees $\tree[1] \prec \tree$: $\dmapinv{\dmap{\tree[1]}} = \tree[1]$.

\vspace{0.5\baselineskip}
\noindent\textbf{Base case ($\tree \in \terminals$)}: 
\Dmap and \Dmapinv are the identity function; thus, $\dmapinv{\dmap{\tree}} = \tree$ as required.

\vspace{0.5\baselineskip}
\noindent \textbf{Sequence case:}
Both \Dmap and \Dmapinv apply pointwise to sequences; thus, 
$ \dmapinv{\dmap{\tree[1] \cdots \tree[K]}}
= \dmapinv{\dmap{\tree[1]} \cdots \dmap{\tree[K]}}
= \dmapinv{\dmap{\tree[1]}} \cdots \dmapinv{\dmap{\tree[K]}}
= \tree[1] \cdots \tree[K]$ as required.

\vspace{0.5\baselineskip}
\noindent\textbf{Inductive case}: Recall from the proof of the inductive case of \cref{lemma:lc-forward}, \tree must have the general form:
\begin{center}
\begin{minipage}[t]{.35\textwidth}
\begin{tikzpicture}[
  TopFix,
  scale=.75, sibling distance=1.25cm,
  every node/.style={font=\scriptsize},
]
\node (xK) {\spineNT{K}}  
  child { node(foo) { \spineNT{K-1} }
     child { node {} edge from parent[draw=none]
       child { node(bar) {\spineNT{2}} edge from parent[draw=none]
         child { node { \spineNT{1} }
           child { node[yshift=-.4cm] {\MySubTree{\spineRest{0}}} }
         }
         child { node[yshift=-.4cm] { \MySubTree{\spineRest{1}} } }
       }
       child { node {} edge from parent[draw=none] } 
    }
    child { node[yshift=-.4cm] {\MySubTree{\spineRest{K-2}}} }
  }
  child { node[yshift=-.4cm] { \MySubTree{\spineRest{K-1}} } }
;
\draw[sloped] (foo) -- node[font=\Large] {\tikz{\node[draw=white,fill=white] at (0,0) {\Cdots};}} (bar);
\end{tikzpicture}
\end{minipage}
\begin{minipage}[t]{.6\textwidth}
\vspace{1\baselineskip}
\noindent where $K \geq 1$, $\spineRest{i} \in (\terminals \cup \nts)^*, \forall i \{1, \dots, K\}$, and $(\spineNT{K} \to \spineNT{K-1} \, \spineRest{K-1}) \cdots (\spineNT{2} \to \spineNT{1} \, \spineRest{1}) $ is its spine \spine{\tree}.   (We only show the cases for when $\spineNT{1} \in \nts$.  The case when $\spineNT{1} \in \terminals$ follows the same argument.)

\vspace{1\baselineskip}
\noindent Now, let $\kk$ be the index of the left corner, or $\infty$ if it does not exist.  We consider two cases: $\kk = \infty$ and $\kk < \infty$.
\end{minipage}
\end{center}
\vspace{0.5\baselineskip}
\noindent\textbf{Case} ($\kk = \infty$): We seek to show that $\dmapinv{\dmap{\tree}} = \tree$ for this case.  The equations below depict how spine transformations performed by 
\cref{alg:derivation-mapping} and 
\cref{alg:inverse-derivation-mapping} cancel each other out:
\begin{align*}
\dmapinv{\dmap{\tree}} 
= \dmapinv{
\begin{tikzpicture}[
  CenterFix,
  scale=.5, 
  level distance=2.5cm,  
  sibling distance=2.25cm,
  every node/.style={font=\scriptsize},
]
\node {\spineNT{K}}
child { node { \frozen{\spineNT{K}} }
    child { node(boo) { \frozen{\spineNT{K-1}} }  
      child { node {}      edge from parent[draw=none]
        child { node(bun) { \frozen{\spineNT{2}} }    edge from parent[draw=none] 
          child { node { \frozen{\spineNT{1}} }
            child { node[yshift=-.4cm] { \dmap{\!\!\MySubTree{\spineRest{0}\!\!\!} } }  }
          }
          child { node[yshift=-.4cm] { \dmap{\!\!\!\MySubTree{\spineRest{1}}\!\!\!} } }
        }
        child { node {}  edge from parent[draw=none] }   
      }
      child { node[yshift=-.4cm] { \dmap{\!\!\!\MySubTree{\spineRest{K-2}}\!\!\!} } }
    }
    child { node[yshift=-.4cm] { \dmap{\!\!\!\MySubTree{\spineRest{K-1}}\!\!\!} } }
}
;
\draw[sloped] (boo) -- node[fill=white,inner sep=1pt] {\Cdots} (bun);
\end{tikzpicture}
} 
= \begin{tikzpicture}[
  CenterFix, scale=.5, 
  level distance=2.75cm,
  sibling distance=3cm,
  every node/.style={font=\scriptsize},
]
\node (xK) {\spineNT{K}}  
  child { node(foo) { \spineNT{K-1} } 
     child { node {} edge from parent[draw=none]
       child { node(bar) {\spineNT{2}} edge from parent[draw=none]
         child { node {\spineNT{1}} 
               child { 
                  node[yshift=-.4cm] {\cancelInvOfMap{\MySubTree{\spineRest{0}}}}
               }
         }
         child { node[yshift=-.4cm] { \cancelInvOfMap{\!\!\!\MySubTree{\spineRest{1}}\!\!\!} }  }
       }
       child { node {} edge from parent[draw=none] } 
    }
    child { node[yshift=-.4cm] {\cancelInvOfMap{\!\!\!\MySubTree{\spineRest{K-2}}\!\!\!}}  }
  }
  child { node[yshift=-.4cm] { \cancelInvOfMap{\!\!\!\MySubTree{\spineRest{K-1}}\!\!\!} }  }
;
\draw[sloped] (foo) -- node[fill=white,inner sep=1pt] {\Cdots} (bar);
\end{tikzpicture}
= \tree
\end{align*}

\noindent Thus, by induction, $\dmapinv{\dmap{\tree}} = \tree$ for all derivations $\tree$ that fall into this case.

\noindent\textbf{Case} ($\kk < \infty$):  We seek to show that $\dmapinv{\dmap{\tree}} = \tree$ for this case.
\begin{align*}
\dmapinv{\dmap{\tree}}
&= \dmapinv{\!\!\!
\begin{tikzpicture}[
  CenterFix,
  scale=.5, sibling distance=3.25cm,
  level distance=2cm,
  every node/.style={font=\scriptsize},
]
\node {\spineNT{K}}
child { node[xshift=-1.5cm] { \frozen{\spineNT{\kk}} }  
    child { node(boo) { \frozen{\spineNT{\kk-1}} }    
      child { node {}      edge from parent[draw=none]
        child { node(bun) { \frozen{\spineNT{2}} }    edge from parent[draw=none] 
          child { node[xshift=.25cm] { \frozen{\spineNT{1}} }
            child { node[yshift=-.4cm] { \dmap{\!\!\MySubTree{\spineRest{0}\!\!\!} } }  }
          }
          child { node[yshift=-.4cm,xshift=-.25cm] { \dmap{\!\!\!\MySubTree{\spineRest{1}}\!\!\!} } }
        }
        child { node {}  edge from parent[draw=none] }   
      }
      child { node[yshift=-.4cm,,xshift=-.5cm] { \dmap{\!\!\!\MySubTree{\spineRest{\kk-2}}\!\!\!} } }
    }
    child { node[yshift=-.4cm] { \dmap{\!\!\!\MySubTree{\spineRest{\kk-1}}\!\!\!} }  }
}
child { node[xshift=1cm] { \lc{\spineNT{K}}{\spineNT{\kk}} }  
    child { node[yshift=-.4cm] { \dmap{\!\!\!\MySubTree{\spineRest{\kk}}\!\!\!} }  }
    child { node[yshift=-0cm](foo) { \lc{\spineNT{K}}{\spineNT{\kk+1}} } 
      child { node[yshift=-.4cm,,xshift=.25cm] { \dmap{\!\!\!\MySubTree{ \spineRest{\kk+1} }\!\!\!} } }
      child { node {} edge from parent[draw=none] 
        child { node {} edge from parent[draw=none] }   
        child { node(bar) {\lc{\spineNT{K}}{\spineNT{K-2}}} edge from parent[draw=none]
          child { node[yshift=-.4cm,xshift=-.25cm] { \dmap{\!\!\!\MySubTree{ \spineRest{K-2} }\!\!\!} } }
          child { node {\lc{\spineNT{K}}{\spineNT{K-1}}}
            child { node[yshift=-.4cm,xshift=.25cm] { \dmap{\!\!\!\MySubTree{ \spineRest{K-1} }\!\!\!} } }
            child { node {\lc{\spineNT{K}}{\spineNT{K}} }  
              child { node {\emptystring} }
            }
          }    
        }
      }
    }
}
;
\draw[sloped] (foo) -- node[fill=white,inner sep=1pt] {\Cdots} (bar);
\draw[sloped] (boo) -- node[fill=white,inner sep=1pt] {\Cdots} (bun);
\end{tikzpicture}
\!\!\!}
\\
&= \hspace{-.25cm}\begin{tikzpicture}[
  CenterFix, 
  scale=.5, sibling distance=4.5cm,
  level distance=2.25cm,
  every node/.style={font=\scriptsize}
]
\node (xK) {\spineNT{K}}  
child { node(foo) { \spineNT{K-1} }
    child { node {}   
      edge from parent[draw=none]
      child { node(bar) { \spineNT{\kk+1} }
          edge from parent[draw=none,parent anchor=center,child anchor=center]
          child { node (xk) { \spineNT{\kk} }
            child { node(boo) {\spineNT{\kk-1}}     
              child { node {}      edge from parent[draw=none]
                child { node(bun) {\spineNT{2}}    edge from parent[draw=none] 
                  child { node[xshift=.5cm] (x1) { \spineNT{1} }
                    child { node[yshift=-.4cm] { \cancelInvOfMap{\!\!\!\MySubTree{\spineRest{0}}\!\!\!} }  }
                  }
                  child { node[yshift=-.4cm] { \cancelInvOfMap{\!\!\!\MySubTree{\spineRest{1}}\!\!\!} } }
                }    
                child { node {}  edge from parent[draw=none] }   
              }
              child { node[yshift=-.4cm,xshift=-.4cm] { \cancelInvOfMap{\!\!\!\MySubTree{\spineRest{\kk-2}}\!\!\!} } }
            }
            child { node[yshift=-.4cm] { \cancelInvOfMap{\!\!\!\MySubTree{\spineRest{\kk-1}}\!\!\!} }  }
          }
          child { node[yshift=-.4cm] { \cancelInvOfMap{\!\!\!\MySubTree{\spineRest{\kk}}\!\!\!} }  }
      }
      child { node {}  edge from parent[draw=none] }   
  }
  child { node[yshift=-.4cm,xshift=-.4cm] { \cancelInvOfMap{\!\!\!\MySubTree{\spineRest{K-2}}\!\!\!} }
  }
}
child { node[yshift=-.4cm] { \cancelInvOfMap{\!\!\!\MySubTree{\spineRest{K-1}}\!\!\!} }
}
;
\draw[sloped] (foo) -- node[fill=white,inner sep=1pt] {\Cdots} (bar);
\draw[sloped] (boo) -- node[fill=white,inner sep=1pt] {\Cdots} (bun);
\end{tikzpicture}
\\
&= \tree
\end{align*}

\noindent Thus, by induction, $\dmapinv{\dmap{\tree}} = \tree$ for all derivations $\tree$ that fall into this case.

\paragraph{Conclusion (Part 2).}
Since we have shown that each of the possible cases ($\kk = \infty$ and $\kk < \infty$) satisfy $\dmapinv{\dmap{\tree}} = \tree$, it follows that Part 2 of \cref{lemma:lc-inverse} is true.

\paragraph{Conclusion.}
We have proven Part 1 and Part 2; thus, \cref{lemma:lc-inverse} holds.
\end{proof}

\EquivalenceTheorem*
\begin{proof}
Recall from \cref{sec/preliminaries} that \nts-bijective equivalence ($\grammar \equiv_{\nts} \grammarp$) requires the existence of a structure-preserving bijective mapping of type $\derivations[\nts] \to \derivationsp[\nts]$.  
\cref{lemma:lc-forward} shows that \Dmap in \cref{alg:derivation-mapping} is a mapping of type $\derivations \to \derivationsp$ that preserves the desired structure (label, weight, and yield).  Thus, \Dmap is structure-preserving.
\cref{lemma:lc-inverse} shows that \Dmap is a bijection of $\Dmap\colon \derivations[\nts] \to \derivationsp[\nts]$.  Thus, we have verified the existence of a structure-preserving bijection and, therefore, $\grammar \equiv_{\nts} \grammarp$.
\end{proof}

\section{Proof (Sketch) of \cref{thm/spec-glct} (Speculation--GLCT Bijective Equivalence)}
\label{thm/spec-glct/proof}
\ThmSpeculationGLCT*

\begin{proof}[Proof (Sketch).]
A straightforward inductive argument can be made to show that the derivations of speculation and GLCT under the conditions of \cref{thm/spec-glct} will produce isomorphic trees.  

\vspace{.5\baselineskip}
\noindent
We sketch the derivation mapping from speculation to GLCT.   We do not only provide a sketch for the fact that the mapping is a structure-preserving bijection.  It is straightforward, albeit laborious, to extend our proof sketch to a proof; such a proof would follow the same structure as our proof of \cref{thm/glct-equivalence}.

\vspace{.5\baselineskip}
\noindent
\paragraph{Base case (terminals).} 
This case is trivial, as terminals are identical between the transformations.

\vspace{.5\baselineskip}
\paragraph{Inductive Case.} 
We consider three cases for each kind of nonterminal: original, frozen, and slashed.  In the diagrams below, we show how the backbone of the derivation is changed.  We note that the \vbeta-subtrees of the tree are transformed recursively.  In each of the diagrams below, the speculation derivation is on the left, and its corresponding GLCT derivation is on the right.

\vspace{.5\baselineskip}
\noindent
\emph{Original nonterminals.} This case was discussed in \cref{fig:schematic-transforms}.

\vspace{.5\baselineskip}
\noindent
\emph{Frozen nonterminals.}
The rules defining the frozen nonterminals are identical between the transformations, so the backbone is unchanged in this case.

\begin{align*}
\begin{tikzpicture}[
  CenterFix,
  scale=.5, sibling distance=2.25cm,
  every node/.style={font=\scriptsize},
]
\node { \frozen{\spineNT{K}} }
    child { node(boo) { \frozen{\spineNT{K-1}} } 
      child { node {}      edge from parent[draw=none]
        child { node(bun) { \frozen{\spineNT{2}} }    edge from parent[draw=none]
          child { node { \frozen{\spineNT{1}} }
            child { node[yshift=-.4cm] { \ddmap{\MySubTree{\spineRest{0}} } }  }
          }
          child { node[yshift=-.4cm] { \ddmap{\MySubTree{\spineRest{1}}} } }
        }
        child { node {}  edge from parent[draw=none] }   
      }
      child { node[yshift=-.4cm,xshift=-.5cm] { \ddmap{\MySubTree{\spineRest{K-2}}} } }
    }
    child { node[yshift=-.4cm] { \ddmap{\MySubTree{\spineRest{K-1}}} }  }
;
\draw[sloped] (boo) -- node[fill=white,inner sep=1pt] {\Cdots} (bun);
\end{tikzpicture}
\Longleftrightarrow
\begin{tikzpicture}[
  CenterFix,
  scale=.5, sibling distance=2.25cm,
  every node/.style={font=\scriptsize},
]
\node {\frozen{\spineNT{K}}}
    child { node(boo) { \frozen{\spineNT{K-1}} }   
      child { node {}      edge from parent[draw=none]
        child { node(bun) { \frozen{\spineNT{2}} }    edge from parent[draw=none]
          child { node { \frozen{\spineNT{1}} }
            child { node[yshift=-.4cm] { \ddmap{\MySubTree{\spineRest{0}} } }  }
          }
          child { node[yshift=-.4cm] { \ddmap{\MySubTree{\spineRest{1}}} } }
        }
        child { node {}  edge from parent[draw=none] }   
      }
      child { node[yshift=-.4cm,xshift=-.5cm] { \ddmap{\MySubTree{\spineRest{K-2}}} } }
    }
    child { node[yshift=-.4cm] { \ddmap{\MySubTree{\spineRest{K-1}}} }  }
;
\draw[sloped] (boo) -- node[fill=white,inner sep=1pt] {\Cdots} (bun);
\end{tikzpicture}
\end{align*}

\vspace{.5\baselineskip}
\noindent
\emph{Slashed nonterminals.}
Recall from \cref{sec/speculation} that the difference between the slashed nonterminals in speculation and GLCT is the recursive rule defining slashed nonterminals (\cref{eq:spec-slash-recursive} vs.\@ \cref{eq:lc-slash-recursive}).  The only effect of this difference is that slashed nonterminals in speculation have left-branching derivations, and in GLCT, they have right-branching derivations. The mapping below works for any slashed nonterminal:

\begin{align*}
\begin{tikzpicture}[
  CenterFix,
  scale=.5, sibling distance=3cm,
  every node/.style={font=\scriptsize},
]
\node { \lc{\spineNT{K}}{\spineNT{\kk}} }
    child { node(boo2) { \lc{\spineNT{K-1}}{\spineNT{\kk}} }
        child { node {} edge from parent[draw=none]
            child { node[yshift=.5cm](bun2) {\lc{\spineNT{\kk+2}}{\spineNT{\kk}}}   edge from parent[draw=none]
                child { node { \lc{\spineNT{\kk+1}}{\spineNT{\kk}} }
                    child { node[xshift=.25cm] { \lc{\spineNT{\kk}}{\spineNT{\kk}} }
                    child { node { \emptystring } }
                }
                child { node[yshift=-.4cm,xshift=-.25cm] { \ddmap{\MySubTree{\spineRest{\kk}}} } }
                }
                child { node[yshift=-.5cm,xshift=.25cm]{\ddmap{\MySubTree{\spineRest{\kk+1}}}}
                }
            }
            child { node {} edge from parent[draw=none] }
        }
        child { node[yshift=-.4cm,,xshift=-.5cm] { \ddmap{\MySubTree{\spineRest{K-2}}} } }
    }
    child { node[yshift=-.4cm] { \ddmap{\MySubTree{\spineRest{K-1}}} } }
;
\draw[sloped] (boo2) -- node[fill=white,inner sep=1pt] {\Cdots} (bun2);
\end{tikzpicture}
\Longleftrightarrow
%
\begin{tikzpicture}[
  CenterFix,
  scale=.5, sibling distance=3.25cm,
  every node/.style={font=\scriptsize},
]
\node {\lc{\spineNT{K}}{\spineNT{\kk}}}
    child { node[yshift=-.4cm] { \ddmap{\!\!\!\MySubTree{\spineRest{\kk}}\!\!\!} }  }
    child { node[yshift=-0cm](foo) { \lc{\spineNT{K}}{\spineNT{\kk+1}} } 
      child { node[yshift=-.4cm,,xshift=.25cm] { \ddmap{\!\!\!\MySubTree{ \spineRest{\kk+1} }\!\!\!} } }
      child { node {} edge from parent[draw=none] 
        child { node {} edge from parent[draw=none] }   
        child { node(bar) {\lc{\spineNT{K}}{\spineNT{K-2}}} edge from parent[draw=none]
          child { node[yshift=-.4cm,xshift=-.25cm] { \ddmap{\!\!\!\MySubTree{ \spineRest{K-2} }\!\!\!} } }
          child { node {\lc{\spineNT{K}}{\spineNT{K-1}}}
            child { node[yshift=-.4cm,xshift=.25cm] { \ddmap{\!\!\!\MySubTree{ \spineRest{K-1} }\!\!\!} } }
            child { node {\lc{\spineNT{K}}{\spineNT{K}} }  
              child { node {\emptystring} }
            }
          }    
        }
      }
    }
;
\draw[sloped] (foo) -- node[fill=white,inner sep=1pt] {\Cdots} (bar);
\end{tikzpicture}
\end{align*}

\paragraph{Structure-preservation and invertibility.}
In each of the cases above, it is straightforward to see that mapping is
\begin{itemize}[leftmargin=*,topsep=1pt,itemsep=1pt,parsep=1pt]
\item label-preserving: the trees in the diagrams each have the same label. 
\item yield-preserving: the yield of each $\vbeta$-subtree is preserved (by induction hypothesis), and string concatenation is associative with identity element \emptystring.  
\item weight-preserving: the weights of rule weights multiplied are equal (by construction), $\otimes$ is associative and commutative (by assumption), and each subtree is equally weighted (by induction hypothesis). 
\item invertible: the mapping between the trees is invertible because the manipulation of its backbone (i.e., its exposed structure) is invertible, and the mapping for each subtree is invertible (by induction hypothesis).
\end{itemize}

\vspace{.5\baselineskip}
\noindent 
\paragraph{Conclusion.}
The above cases cover all possibilities.  Each sketches the structure-preserving bijective mapping between speculation and GLCT derivations for all their nonterminals (original, slashed, and frozen).  Therefore, we have sketched a proof of \cref{thm/spec-glct}.
\end{proof}

\section{Proof of \cref{thm/left-recursion-elimination} (Left-Recursion Elimination)}
\label{sec/appendix_left_recursion}
\lrElim*
\begin{proof}
Recall from \cref{thm/glct-equivalence} that there exists a structure-preserving bijection between $\derivations[\nts]$ and $\derivationsp[\nts]$.   Our proof shows how the mapping \Dmap (\cref{alg:derivation-mapping}) removes (unbounded) left recursion from \grammar by analyzing the structure of its transformed trees.  Below, we show how \Dmap takes any derivation $\tree \in \derivations$ (\emph{left}) and maps a tree $\treep = \dmap{\tree}$ where $\treep \in \derivationsp$ (\emph{right}).

\begin{align*}
\begin{tikzpicture}[
  CenterFix, 
  scale=.7, 
  sibling distance=1.25cm,
  every node/.style={font=\scriptsize}
]
\node (At) {\aTop}
    child { node {}   edge from parent[draw=none]
        child { node (Ab) {\ensuremath{\aBot}} edge from parent[draw=none]
            child { node (Bt) {\bTop} 
                child { node {} edge from parent[draw=none]
                    child { node (Bb) {\ensuremath{\bBot}} edge from parent[draw=none]
                                    child { node {} edge from parent[draw=none]
                                        child { node (Zt) {\zTop} edge from parent[draw=none]
                                            child { node {} edge from parent[draw=none]
                                                child { node (Zb) {\ensuremath{\zBot}} edge from parent[draw=none] 
                                                child { node[yshift=-0.75em] {\ensuremath{\MySubTree{\vzetaBot}}}}
                                                }
                                                child { node {} edge from parent[draw=none] }
                                            }
                                            child { node[yshift=-0.5em] {\MySubTree{\vzetaTop}}} 
                                        }
                                        child { node {} edge from parent[draw=none]} 
                                    }
                                    child { node[yshift=-0.5em] {\MySubTree{\vbetaBot}}} 
                    }
                    child { node {} edge from parent[draw=none] }
                }
                child { node[yshift=-0.5em] {\MySubTree{\vbetaTop}} } 
            }
            child { node[yshift=-0.5em] {\MySubTree{\valphaBot}} } 
        }
        child { node {} edge from parent[draw=none] }
    }
    child { node[yshift=-0.5em] {\MySubTree{\valphaTop}} }
;  
\draw[sloped] (At) -- node[fill=white,inner sep=1pt] {\Cdots} (Ab);
\draw[sloped] (Bt) -- node[fill=white,inner sep=1pt] {\Cdots} (Bb);
\draw[sloped] (Zt) -- node[fill=white,inner sep=1pt] {\Cdots} (Zb);
\draw[sloped] (Bb) -- node[fill=white,inner sep=1pt] {\Cdots} (Zt);
\draw [decorate, decoration={brace, amplitude=8pt, mirror}, black!50] (At.north west) -- ([yshift=6pt]Ab.west) node[midway, sloped, above=8pt] {\aComp spine};
\draw [decorate, decoration={brace, amplitude=8pt, mirror}, black!50] (Bt.north west) -- ([yshift=6pt]Bb.west) node[midway, sloped, above=8pt] {\bComp spine};
\draw [decorate, decoration={brace, amplitude=8pt, mirror}, black!50] (Zt.north west) -- ([yshift=6pt]Zb.west) node[midway, sloped, above=8pt] {\zComp spine};
\end{tikzpicture}
\Longrightarrow
\begin{tikzpicture}[
  CenterFix, scale=.7, sibling distance=1.25cm,
  every node/.style={font=\scriptsize}
]
\node {\aTop}
child {
    node[xshift=-25pt] {\frozen{\aBot}}
    child { node[xshift=-.5em] {\bTop}
        child { node[xshift=-15pt] (Bft) {\ensuremath{\frozen{\bBot}}} 
child { node {} edge from parent[draw=none]
            child { 
                node [yshift=-25pt] (Zt) {\zTop}
                child { node[xshift=-5pt] {\ensuremath{\frozen{\zBot}}} 
                    child { node[yshift=-0.75em] {\ensuremath{\MySubTree{\vzetaBot}}}}
                }
                child { node[xshift=15pt] (Zts) {\ensuremath{\lc{\zTop}{\zBot}}}
                    child { node[yshift=-0.5em] {\MySubTree{\vzetaBot[+1]}} }
                    child {
                        child { node {} edge from parent[draw=none] }
                        child { node (Zbs) {\ensuremath{\lc{\zTop}{\zTop[-1]}}} edge from parent[draw=none]
                            child { node[yshift=-0.5em] {\MySubTree{\vzetaTop}} }
                            child { node {\ensuremath{\lc{\zTop}{\zTop}}} 
                                child { node {\emptystring} }
                            }
                        } edge from parent[draw=none]
                    }
                } edge from parent[draw=none]
            } 
    child { node {} edge from parent[draw=none] }
}
            child { node[yshift=-0.5em] {\MySubTree{\vbetaBot}} } 
        }
        child { node[xshift=15pt] (Bts) {\ensuremath{\lc{\bTop}{\bBot}}}
            child { node[yshift=-0.5em] {\MySubTree{\vbetaBot[+1]}} }
            child {
                child { node {} edge from parent[draw=none] }
                child { node (Bbs) {\ensuremath{\lc{\bTop}{\bTop[-1]}}} edge from parent[draw=none]
                    child { node[yshift=-0.5em] {\MySubTree{\vbetaTop}} }
                    child { node {\ensuremath{\lc{\bTop}{\bTop}}} 
                        child { node {\emptystring} }
                    }
                } edge from parent[draw=none]
            }            
        }
    }
    child { node[yshift=-0.5em, xshift=.5em] {\MySubTree{\valphaBot}} }
}
child { node[xshift=15pt] (Ats) {\ensuremath{\lc{\aTop}{\aBot}}}
    child { node[yshift=-0.5em] {\MySubTree{\valphaBot[+1]}} }
    child {
        child { node {} edge from parent[draw=none] }
        child { node (Abs) {\ensuremath{\lc{\aTop}{\aTop[-1]}}} edge from parent[draw=none]
            child { node[yshift=-0.5em] {\MySubTree{\valphaTop}} }
            child { node {\ensuremath{\lc{\aTop}{\aTop}}} 
                child { node {\emptystring} }
            }
        } edge from parent[draw=none]
    }
}
;  
\draw[sloped] (Ats) -- node[fill=white,inner sep=1pt] {\Cdots} (Abs);
\draw[sloped] (Bts) -- node[fill=white,inner sep=1pt] {\Cdots} (Bbs);
\draw[sloped] (Zts) -- node[fill=white,inner sep=1pt] {\Cdots} (Zbs);
\draw[sloped] (Bft) -- node[fill=white,inner sep=1pt] {\Cdots} (Zt);
\end{tikzpicture}
\end{align*}

\noindent where \vzetaBot is a (possibly empty) sequence of terminals. We note that the $\triangle$-subtrees on the right are isomorphic to their counterparts on the left because \Dmap has (implicitly) transformed them. 

Let $\aComp, \bComp, \ldots, \zComp$ denote the segmentation of the left edge of \tree into a sequence of spines.  The node names indicate its top and bottom elements.  Note that spines may contain a single node; $\iTop \eq \iBot$ in these cases.  We also note that the transitions between spines occur when a rule on the left is not in \Ps.

\paragraph{Avoiding a subtle mistake.} 
We may assume that each $\triangle$-subtree is non-empty because the \grammar is unary-free by assumption. This unary-free assumption is important because it ensures that the form of \treep will not be left-recursive simply because some of its $\triangle$-subtrees are empty.  

\paragraph{The accordion effect.}
Recall that \Dmap (recursively) hoists each left-corner subtree along its respective spine.\footnote{This interpretation is discussed throughout the paper: informally in \cref{fig:motivating_example} and \cref{sec/glct}, semi-formally in \cref{fig:schematic-transforms}, and formally in the proof of \cref{thm/glct-equivalence} (\cref{thm/glct/proof}).}
The accordion effect is the result of repeatedly hoisting: Each spine is contracted into two nodes on the left side of \treep corresponding to their respective spine's top and bottom elements.  Each of the top nonterminals has an unbounded chain of right-recursive rules corresponding to the extracted spine.

The accordion effect assumes that the bottommost node of the spine is always \Xs-labeled and, thus, is hoisted by \Dmap as in the diagram.  It is easy to see that our condition on \Xs guarantees this because every node that might end the spine is included in $\Xs$ (by eq.~\cref{eq:bottoms}).
Furthermore, if \Xs is a superset of $\bottoms{\Ps}$, then \Dmap (by definition) will be unaffected because the bottommost elements of the spines (i.e., those that are hoisted) are already included.\footnote{We note that other ways exist to set \Xs that can break left-recursion cycles.  For example, if \Xs is a feedback node set (i.e., a subset of nodes such that removing them and their incoming and outgoing edges results in an acyclic graph). However, the construction is messier, as there will be instances of \cref{eq:lc-frozen-recursive} along the left of \treep. This still eliminates left recursion, as those frozen nonterminals will not be left recursive. Additionally, it is unclear whether there is any benefit in terms of grammar size or left-recursion depth to these alternatives; thus, we only analyze the simpler case.}

For simplicity, the accordion diagram only illustrates the case where each spine \iComp is not a singleton containing one element outside of \Xs.  We now address that case.  Here, $\iTop \eq \iBot$ and the rule $\production{\iTop}{\gamma\psup{i+1}_{\text{top}}\; \viTop}$ cannot be in \Ps.  Like the case depicted, the spine contracts into two nodes on the right: $\iTop$ and $\frozen{\iTop}$. However, we now have an instance of \cref{eq:lc-recovery-frozen} instead of \cref{eq:lc-recovery-slash} in the tree on the right; thus, there is no slashed chain of nonterminals.

\paragraph{Bounding the left-recursion depth.} 
The result of the accordion effect is that each spine on the left (of unbounded length) is contracted into two nodes in the tree on the right.  From here, we can see that \treep has bounded left-recursive depth as long as there is a bounded number of spines.  Recall that each spine is the maximum-length sequence of rules along the left of the derivation starting at the root.  Since \Ps includes all left-recursive rules, this means that each spine in the sequence $\aComp, \bComp, \ldots, \zComp$ \emph{completes} at least one SCC---meaning its nodes cannot be visited later in the path.  Thus, $\lrdepth{\treep} \le 2 \!\cdot\! C$ where $C$ is the number of SCCs in the left-recursion graph $G$ for \grammar.\footnote{This bound can be reduced by including more rules in \Ps at the cost of increasing the size of \grammarp.  For example, if $\Ps \eq \rules$, the left-recursion depth is two, but \grammar will likely be larger according to \cref{prop/num-rules-bound}.}

\paragraph{Conclusion.}
The above argument bounds $\lrdepth{\dmap{\tree}}$ as a function of \tree.  We can bound $\lrdepth{\grammarp}$ by $\max_{\tree \in \derivations} \lrdepth{\dmap{\tree}}$ because \tree and \treep are in one-to-one correspondence and, thus, no trees in $\derivationsp[\nts]$ will be overlooked. Note that \grammarp is trimmed so we can focus on \nts-derivations.  Because our earlier bound, $\lrdepth{\dmap{\tree}} \le 2 \!\cdot\! C$, is independent of \tree, the maximization is trivial: $\lrdepth{\grammarp} \le 2 \!\cdot\! C$.   Thus, the left-recursion in \grammarp is bounded, and \cref{thm/left-recursion-elimination} holds.
\end{proof}

\section{Additional Experimental Results}\label{sec/additional-experiments}

This section presents additional results of the experiments discussed in \cref{sec/experiments}. \cref{table:experiments-appendix} complements \cref{table:experiments} with ratios comparing the size of the output grammars resulting from SLCT with those from GLCT. \cref{tab:experiments-production-rules} gives analogous results as evaluated on the number of rules. (We note that these results are consistent with those presented in \cref{table:experiments}.) 

\begin{table}[h]
\centering
\footnotesize
\setlength{\tabcolsep}{2pt}
\begin{tabular}{rlrrr|rlrrr}    \toprule
        Language  & \multirow{2}{*}{Method} & \multirow{2}{*}{Raw} & \multirow{2}{*}{+Trim} & \multirow{2}{*}{$-\emptystring$'s} & Language  & \multirow{2}{*}{Method} & \multirow{2}{*}{Raw} & \multirow{2}{*}{+Trim} & \multirow{2}{*}{$-\emptystring$'s}\\ 
        (size)  &  &  &  &  & (size) & & & & \\ \midrule
   Basque &      SLCT &  3,354,445 &   245,989 &   411,023 & English &      SLCT &    514,338 &    26,655 &    46,088 \\
 (73,173) &      GLCT &    644,125 &   245,923 &   411,023 & (21,272) &      GLCT &    203,664 &    26,289 &    46,088 \\
          & SLCT/GLCT &       5.21 &      1 &      1 & & SLCT/GLCT &       2.53 &      1.01 &      1 \\ \midrule
   French &      SLCT &  5,902,552 &   106,860 &   173,375 & German &      SLCT & 21,204,060 &   272,406 &   434,787 \\
(105,896) &      GLCT &    147,860 &   106,628 &   173,375 & (100,346) &      GLCT &  1,930,386 &   271,752 &   434,787 \\
          & SLCT/GLCT &      39.92 &      1 &      1 & & SLCT/GLCT &      10.98 &      1 &      1 \\ \midrule
   Hebrew &      SLCT & 14,304,366 &   564,456 &   979,040 & Hungarian &      SLCT & 17,823,603 &   151,373 &   242,360 \\
 (84,648) &      GLCT &  2,910,538 &   564,074 &   979,040 & (134,461) &      GLCT &    748,398 &   151,013 &   242,360 \\
          & SLCT/GLCT &       4.91 &      1 &      1 & & SLCT/GLCT &      23.82 &      1 &      1 \\ \midrule
   Korean &      SLCT & 54,575,826 &    87,937 &    96,023 & Polish &      SLCT &  2,845,430 &    61,333 &    79,341 \\
 (59,557) &      GLCT &  3,706,444 &    86,529 &    96,023 & (41,957) &      GLCT &    177,610 &    61,253 &    79,341 \\
          & SLCT/GLCT &      14.72 &      1.02 &      1 & & SLCT/GLCT &      16.02 &      1 &      1 \\ \midrule
  Swedish &      SLCT & 20,483,899 & 1,894,346 & 3,551,917 \\
 (79,137) &      GLCT &  6,896,791 & 1,871,789 & 3,551,917 \\
          & SLCT/GLCT &       2.97 &      1.01 &      1 \\ \bottomrule
\end{tabular}
\caption{Same numbers as \cref{table:experiments} but complemented with the additional SLCT/GLCT rows, which measures the ratio of the SLCT size to the GLCT size.
}\label{table:experiments-appendix}
\vspace{-6pt}
\end{table}

\begin{table*}[h]
    \centering
    \small
    \setlength{\tabcolsep}{4pt}
    \begin{tabular}{rlrrr|rlrrr}    \toprule
        Language  & \multirow{2}{*}{Method} & \multirow{2}{*}{Raw} & \multirow{2}{*}{+Trim} & \multirow{2}{*}{$-\emptystring$'s} & Language  & \multirow{2}{*}{Method} & \multirow{2}{*}{Raw} & \multirow{2}{*}{+Trim} & \multirow{2}{*}{$-\emptystring$'s}\\ 
        (\# of rules)  &  &  &  &  & (\# of rules) & & & & \\ \midrule
   Basque &      SLCT &  1,060,702 &  62,836 &   145,345 & English &      SLCT &    147,221 &   5,941 &    15,653 \\
 (28,178) &      GLCT &    157,273 &  62,803 &   145,345 & (4,592) &      GLCT &     43,724 &   5,758 &    15,653 \\
          & SLCT/GLCT &       6.74 &    1 &      1 & & SLCT/GLCT &       3.37 &    1.03 &      1 \\ \midrule
   French &      SLCT &  1,970,737 &  31,317 &    64,571 & German &      SLCT &  6,891,427 &  71,218 &   152,391 \\
 (30,963) &      GLCT &     52,545 &  31,201 &    64,571 & (31,156) &      GLCT &    466,978 &  70,891 &   152,391 \\
          & SLCT/GLCT &      37.51 &    1 &      1 & & SLCT/GLCT &      14.76 &    1 &      1 \\ \midrule
   Hebrew &      SLCT &  4,410,783 & 128,186 &   335,456 & Hungarian &      SLCT &  5,868,984 &  46,494 &    91,980 \\
 (31,587) &      GLCT &    612,904 & 127,995 &   335,456 & (43,344) &      GLCT &    177,309 &  46,314 &    91,980 \\
          & SLCT/GLCT &       7.20 &    1 &      1 & & SLCT/GLCT &      33.10 &    1 &      1 \\ \midrule
   Korean &      SLCT & 18,113,579 &  37,342 &    41,382 & Polish &      SLCT &    951,176 &  24,105 &    33,104 \\
 (28,132) &      GLCT &  1,157,353 &  36,638 &    41,382 & (20,109) &      GLCT &     61,916 &  24,065 &    33,104 \\
          & SLCT/GLCT &      15.65 &    1.02 &      1 & & SLCT/GLCT &      15.36 &    1 &      1 \\ \midrule
  Swedish &      SLCT &  5,801,173 & 355,783 & 1,191,574 \\
 (27,062) &      GLCT &  1,272,201 & 351,552 & 1,191,574 \\
          & SLCT/GLCT &       4.56 &    1.01 &      1 \\ \bottomrule
    \end{tabular}
    \label{tab:experiments-production-rules}
    \caption{Same settings as in \cref{table:experiments-appendix} but evaluated on the number of rules rather than grammar size. 
    }
\vspace{-6pt}
\end{table*}


\section{Filtering Optimization for GLCT}
\label{sec/filtered-glct-equations}

\newcommand{\retainSet}[0]{\ensuremath{R}\xspace}
\newcommand{\reach}[2]{\ensuremath{#1 \rightsquigarrow #2}\xspace}

This section shows how the filtering tricks mentioned in \cref{sec/optimizations} can be incorporated into the left-corner transformation to reduce the number of useless rules created by \cref{def/glct}.\footnote{\citet{baars-2010-typed-left-corner-transformation} adopt similar filtering techniques.}

Our first filtering trick is based on \citeposs{johnson-1998-finite-state} strategy for reducing the number of useless rules. It works as follows: the slashed nonterminal $\lc{\ntY}{\ntX}$ is useless if \ntX is not reachable from \ntY in a left-recursion graph for \grammar which only includes the rule set \Ps.

Our second filtering trick is based on \citeposs{moore-2000-removing} filtering strategy, which extends \citeposs{johnson-1998-finite-state} with additional filtering based on retained nonterminals.  The slashed nonterminal $\lc{\ntY}{\ntX}$ is useless if none of the following hold:
\begin{enumerate*}[label=(\roman*)]
\item $\ntY \eq \start$,
\item \ntY appears on the right-hand side of some rule in \Ps in a position \emph{other than} the leftmost position,
\item \ntY appears on the right-hand side of some rule in $\rules {\setminus} \Ps$.
\end{enumerate*}
More formally, the set of \defn{retained nonterminals} is
\begin{flalign}    
\retainSet \defeq (&\{ \start \} \cup \{ \beta_i \mid (\wproduction{\ntX}{\alpha \, \vbeta}{}) \in \Ps, \beta_i \in \vbeta \} 
\cup \{ \beta_i \mid (\wproduction{\ntX}{\vbeta}{}) \in \rules \setminus \Ps, \beta_i \in \vbeta \}) 
\end{flalign}
Let $\reach{\ntX}{\alpha}$ denote whether $\alpha$ is reachable from \ntX in the left-recursion graph for \Ps.  Let $\reach{\ntX}{\Xs}$ denote whether $\exists \alpha \in \Xs\colon \reach{\ntX}{\alpha}$.
Then, we use the following equations for the rules \rulesp:
\begin{subequations}
\label{eq:lc-filtered}
\begin{flalign}
&\wproduction{\ntX}{\frozen{\ntX}}{\one}\colon
  &\ntX \in R\!\setminus\!\Xs
  \label{eq:lc-recovery-frozen-filter}
  \\
&\wproduction{\ntX}{\frozen{\alpha} \; \lc{\ntX}{\alpha}}{\one}\colon
   & \ntX \in \retainSet, \reach{\ntX}{\alpha}, {\alpha\in\Xs}
  \label{eq:lc-recovery-slash-filter}
  \\[.25cm]
&\wproduction{\lc{\ntX}{\ntX}}{\emptystring}{\one}\colon
  &\ntX \in \retainSet, \reach{\ntX}{\Xs}
  \label{eq:lc-slash-base-filter}
  \\
&\wproduction{ \lc{\ntY}{\alpha} }{\vbeta \; \lc{\ntY}{\ntX}}{w}\colon
  &\wproduction{\ntX}{\alpha \, \vbeta}{w} \in \Ps, 
  \ntY \in \retainSet, \reach{\ntY}{\alpha}, \reach{\alpha}{\Xs}
  \label{eq:lc-sl-filterash-recursive-filter}
  \\[.25cm]
&\wproduction{\frozen{\ntX}}{\valpha}{w}\colon
  & \wproduction{\ntX}{\valpha}{w} \in \rules\!\setminus\!\Ps
  \label{eq:lc-frozen-base-filter}
  \\
&\wproduction{\frozen{\ntX}}{\frozen{\alpha}\, \vbeta}{w}\colon
  &\wproduction{\ntX}{\alpha \, \vbeta}{w} \in \Ps, \alpha \notin \Xs
  \label{eq:lc-frozen-recursive-filter}
\end{flalign}
\end{subequations}

\noindent Verifying that these filters can only drop useless rules is straightforward.

\section{Fast Nullary Elimination}
\label{sec/fast-nullary-elimination-appendix}

\citet[\S F]{opedal-et-al-2023} gives an approach for eliminating nullary rules from a grammar \grammar that requires computing the \defn{null weight} $\grammar_{\ntX}(\emptystring)$ of each nonterminal ($\ntX \in \nts$). These null weights are the (smallest) solution to the system of polynomial equations given by \cref{eq:weighted-language-equations} where we replace \one with \zero in the base case \cref{eq:weighted-language-base-case}.
In general, this system can be solved numerically using fixed-point iteration or Newton's method (see, e.g., \citealp{esparza-2007-extension-of-newton,nederhof-satta-2008,vieira2023automating}).
In our case, where \grammarp is the output of GLCT, we can solve the system exactly and efficiently, assuming the original grammar \grammar is free of nullary rules.\footnote{Suppose $\grammar_{}(\emptystring) \eq w$. If $w\!\ne\!\zero$, we require at least one nullary rule of the form $\wproduction{\start}{\emptystring}{w}$. This limited type of rule is straightforward to accommodate in our approach, but we have omitted it from this discussion for simplicity.}
Specifically, this system of equations for \grammarp is \emph{linear} and can be solved exactly in $\bigO{|\nts|^3}$ time by using an algorithm for the algebraic path problem (e.g., \citealp{lehmann1977algebraic, tarjan-1981-unified-path-problems}) instead of relying on numerical-approximation methods.

\begin{restatable}[Fast null-weight computation]{proposition}{FastNullWeights}
\label{prop/fast-null-weight}
The null-weight equations for $\grammarp\eq\glct(\grammar,\Ps,\Xs)$ are linear, provided \grammar is nullary free.
\end{restatable}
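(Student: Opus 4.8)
The plan is to read off the null-weight equations of $\grammarp$ from \cref{eq:weighted-language-equations} evaluated at $\xx \eq \emptystring$ and then show that, once the easy null weights are substituted, every monomial of degree two or more vanishes, leaving a linear system. Write $n(\sigma) \defeq \grammarp_{\sigma}(\emptystring)$ for each symbol $\sigma$ of $\grammarp$. Evaluating \cref{eq:weighted-language-equations} at the empty string, each rule $\sigma \to \sigma_1 \cdots \sigma_K$ contributes the monomial $w \otimes n(\sigma_1) \otimes \cdots \otimes n(\sigma_K)$ to the equation for $n(\sigma)$, and the base rule \cref{eq:lc-slash-base} contributes the constant $\one$. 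Thus it suffices to prove that in every rule of \cref{def/glct} at most one right-hand-side symbol can have a nonzero null weight, and that whenever a second symbol is present its null weight is $\zero$; this forces every product to collapse to a constant, a single linear term, or $\zero$.

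First I would classify which symbols are \emph{nullable} (i.e., have $n(\sigma) \ne \zero$). Terminals are non-nullable by definition. Original nonterminals are non-nullable: since $\grammar$ is nullary-free, every derivation in $\derivations[\ntX]$ has a nonempty yield, so $\grammar_{\ntX}(\emptystring) \eq \zero$, and by \cref{thm/glct-equivalence} we get $n(\ntX) \eq \grammarp_{\ntX}(\emptystring) \eq \grammar_{\ntX}(\emptystring) \eq \zero$. Frozen symbols $\frozen{\alpha}$ are non-nullable as well: by \cref{prop/lang-relationship}, $\grammarp_{\frozen{\alpha}}$ is the $\oplus$-sum of $\sem{\tree}$ over $\tree \in \derivations[\alpha]/\bot$, and each such $\tree$ is an original derivation with nonempty yield, so $n(\frozen{\alpha}) \eq \zero$. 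Only the slashed symbols $\lc{\ntY}{\alpha}$ can be nullable, since deleting the left-corner subtree can expose an empty yield.

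With this classification I would walk through the six schemas of \cref{def/glct}. The slash recovery rule \cref{eq:lc-recovery-slash} contributes $n(\frozen{\alpha}) \otimes n(\lc{\ntX}{\alpha}) \eq \zero$ because $n(\frozen{\alpha}) \eq \zero$; the frozen rules \cref{eq:lc-frozen-base,eq:lc-frozen-recursive} contribute $\zero$, since each of their right-hand sides consists only of non-nullable symbols (a frozen symbol and/or a nonempty string of original symbols). The frozen recovery rule \cref{eq:lc-recovery-frozen} contributes the single linear term $n(\frozen{\ntX})$ to the equation for $n(\ntX)$. The only remaining variable-bearing term comes from the slash recursion \cref{eq:lc-slash-recursive}, whose right-hand side $\vbeta \, \lc{\ntY}{\ntX}$ contains exactly one potentially-nullable symbol, $\lc{\ntY}{\ntX}$, together with the original symbols $\vbeta$: when $\vbeta \ne \emptystring$ the factor $n(\vbeta) \eq \zero$ kills the term, and when $\vbeta \eq \emptystring$ (a unary rule in $\Ps$) the term is the linear $w \otimes n(\lc{\ntY}{\ntX})$. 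Hence every equation is a $\oplus$-combination of constants and single linear terms in the slashed null-weight variables, so the system is linear.

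The main obstacle is the bookkeeping of the second and third steps: one must verify the nullability classification cleanly and confirm that no schema ever places two nullable symbols on the same right-hand side, so that all higher-degree monomials genuinely vanish. Once that is in place the collapse is immediate, and linearity follows; the resulting system decomposes into one closed linear subsystem per denominator $\ntY$, which is exactly the structure the algebraic-path-problem solver of the main text exploits.
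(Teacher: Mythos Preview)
Your argument is correct and follows essentially the same route as the paper: establish that only slashed nonterminals can have nonzero null weight (the paper does this implicitly by noting slashed symbols occur only in rightmost position and appealing to \cref{thm/glct-equivalence} for the original symbols; you do it more systematically via \cref{thm/glct-equivalence} and \cref{prop/lang-relationship}), then check that every rule schema in \cref{def/glct} places at most one slashed symbol on its right-hand side, so all surviving monomials are linear in the slashed null-weight variables.

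One small slip worth fixing: in your closing remark you say the system decomposes into one subsystem ``per denominator $\ntY$,'' but in GLCT the recursion \cref{eq:lc-slash-recursive} preserves the \emph{numerator} $\ntY$, not the denominator; the block structure is by numerator. This does not affect the linearity proof itself, only the aside about how the solver exploits the structure.
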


\begin{proof}
First, we observe that only the nullary rules are those with slashed nonterminal on its left-hand side, $\production{\lc{\ntX}{\ntX}}{\emptystring}$ \cref{eq:lc-slash-base}.  We can see that rules created by \cref{eq:lc-slash-recursive} and \cref{eq:lc-recovery-slash} dictate that these slashed nonterminals only occur on the right-corner position of other rules. It then follows that any derivation with an \emptystring-yield contains at most one nullary rule $\production{\lc{\ntX}{\ntX}}{\emptystring}$ as well as a (possibly zero) number of unary rules on the form $\production{\lc{\ntX}{\ntY}}{\lc{\ntX}{\ntZ}}$.
This is the case because rules of the form $\production{\lc{\ntX}{\ntY}}{\vbeta\; \lc{\ntX}{\ntZ}}$ with $|\vbeta| > 0$ contribute a null weight of \zero by the assumption that the original grammar has no nullary rules (i.e., each of the original symbols has a null weight of zero, and because GLCT is \nts-bijectively equivalent, they continue to have a null weight of zero).

The null-weight equations simplify to the following system:
\begin{align}
\label{eq:nullary-system-linear}
\grammarp_{\lc{\ntX}{\ntY}}(\emptystring)
&= \monstersum{
(\wproduction{\lc{\ntX}{\ntY}}{\beta_1 \cdots \beta_K\; \lc{\ntX}{\ntZ} }{w}) \in \rulesp \\
}
w \otimes \cancelto{\zero}{\grammarp_{\beta_1}(\emptystring)} \otimes \cdots \otimes \cancelto{\zero}{\grammarp_{\beta_K}(\emptystring)} \otimes \grammarp_{\lc{\ntX}{\ntZ}}(\emptystring)
\quad\oplus\quad
\monstersum{(\wproduction{\lc{\ntX}{\ntY}}{\varepsilon}{w}) \in \rulesp } w
\\
&= \monstersum{\wproduction{(\lc{\ntX}{\ntY}}{\lc{\ntX}{\ntZ} }{w}) \in \rulesp}
w \otimes \grammarp_{\lc{\ntX}{\ntZ}}(\emptystring)
\quad\oplus\quad
\monstersum{(\wproduction{\lc{\ntX}{\ntY}}{\varepsilon}{w}) \in \rulesp }
w
\end{align}

\noindent We can represent this system as a linear equation.  We define a matrix $\mathbf{W} \in \semiringset^{ \ntsp \times \ntsp }$ and vector $\mathbf{v} \in \semiringset^{ \ntsp }$.  For each $\lc{\ntX}{\ntY}, \lc{\ntX}{\ntZ}, \lc{\ntX}{\ntX} \in \ntsp$:
\begin{equation}
\mathbf{W}_{\lc{\ntX}{\ntY}, \lc{\ntX}{\ntZ}}
= \monstersum{
(\wproduction{\lc{\ntX}{\ntY}}{\lc{\ntX}{\ntZ}}{w}) \in \rulesp } w
\quad\text{and}\quad
\mathbf{v}_{\lc{\ntX}{\ntX}}
= \monstersum{
(\wproduction{\lc{\ntX}{\ntX}}{\varepsilon}{w}) \in \rulesp
} w
\end{equation}

\noindent Then, the null weight for each $\alpha \in \ntsp$ is
\begin{equation}
\grammarp_{\alpha}(\varepsilon) = \left[\mathbf{W}^* \mathbf{v} \right]_{\alpha}
\end{equation}

\noindent where $\mathbf{W}^*$ is the solution to $\mathbf{W}^* \eq \mathbf{I} \oplus \mathbf{W} \, \mathbf{W}^*$. In the case of the real semiring $\mathbf{W}^* \eq (\mathbf{I} - \mathbf{W})^{-1}$.  For other semirings, it may be computed in $\bigO{|\nts \cup \terminals|^3}$ time using an algebraic path solver (e.g., \citealp{lehmann1977algebraic, tarjan-1981-unified-path-problems}).
\end{proof}

\end{document}